\newcommand{\R}{\mathbb{R}} 
\newcommand{\E}{\mathbb{E}} 
\DeclareMathOperator*{\argmin}{arg\,min} 
\DeclareMathOperator*{\argmax}{arg\,max} 
\newcommand{\loss}{\ell} 
\newcommand{\Set}{\mathcal{S}}
\newcommand{\Loss}{\mathcal{L}}
\newcommand{\Error}{\mathcal{E}}
\newcommand{\A}{\mathbf{A}}
\newcommand{\B}{\mathbf{B}}
\newcommand{\BB}{\mathcal{B}}
\newcommand{\W}{\mathbf{W}}
\newcommand{\TT}{\mathcal{T}}
\newcommand{\m}{\mathbf{m}}
\newcommand{\vi}{\mathbf{v}}
\newcommand{\VV}{\mathcal{V}}
\newcommand{\w}{\mathbf{w}}
\newcommand{\x}{\mathbf{x}}
\newcommand{\defeq}{\vcentcolon=}
\newcommand{\zero}{\mathbf{0}}
\newtheorem{theory}{Theorem}
\newtheorem{lemma}{Lemma}
\newtheorem{assumption}{Assumption}
\newtheorem{corollary}{Corollary}
\newtheorem{proposition}{Proposition}
\newcolumntype{P}[1]{>{\centering\arraybackslash}p{#1}}
\title{On the Loss Landscape of Adversarial Training: \\ Identifying Challenges and How to Overcome Them}
\author{Chen Liu$^1$~~~Mathieu Salzmann$^1$~~~Tao Lin$^1$~~~Ryota Tomioka$^2$~~~Sabine Süsstrunk$^1$ \\
$^1$ EPFL, Lausanne, Switzerland, \{chen.liu, mathieu.salzmann, tao.lin, sabine.susstrunk\}@epfl.ch \\
$^2$ Microsoft Research, Cambridge, UK, ryoto@microsoft.com}
\begin{document}

\maketitle

\begin{abstract}
We analyze the influence of adversarial training on the loss landscape of machine learning models.
To this end, we first provide analytical studies of the properties of adversarial loss functions under different adversarial budgets.
We then demonstrate that the adversarial loss landscape is less favorable to optimization, due to increased curvature and more scattered gradients.
Our conclusions are validated by numerical analyses, which show that training under large adversarial budgets impede the escape from suboptimal random initialization, cause non-vanishing gradients and make the model find sharper minima.
Based on these observations, we show that a periodic adversarial scheduling (PAS) strategy can effectively overcome these challenges, yielding better results than vanilla adversarial training while being much less sensitive to the choice of learning rate.
\end{abstract}


\section{Introduction} \label{sec:intro}


State-of-the-art deep learning models have been found to be vulnerable to adversarial attacks~\cite{goodfellow2014explaining, moosavi2017universal, szegedy2013intriguing}.
Imperceptible perturbations of the input can make the model produce wrong predictions with high confidence.
This raises concerns about deep learning's deployment in safety-critical applications.

Although many training algorithms have been proposed to counter such adversarial attacks, most of them were observed to fail when facing stronger attacks~\cite{athalye2018obfuscated, croce2020reliable}.
Adversarial training~\cite{madry2017towards} is one of the few exceptions, so far remaining effective and thus popular.
It uses adversarial examples generated with the attacker's scheme to update the model parameters.
However, adversarial training and its variants~\cite{alayrac2019labels, carmon2019unlabeled, hendrycks2019using, sinha2019harnessing, zhang2019you} have been found to have a much larger generalization gap~\cite{rice2020overfitting} and to require larger model capacity for convergence~\cite{xie2019intriguing}.
Although recent works~\cite{carmon2019unlabeled, schmidt2018adversarially} show that the adversarial training error reduces to almost $0\%$ with a large enough model and that the generalization gap can be narrowed by using more training data, convergence in adversarial training remains much slower than in vanilla training on clean data.
This indicates discrepancies in the underlying optimization landscapes.
While much work has studied the loss landscape of deep networks in vanilla training~\cite{draxler2018essentially, fort2019large, garipov2018loss, geiger2019jamming, li2018visualizing}, such an analysis in adversarial training remains unaddressed.

Here we study optimization in adversarial training.
Vanilla training can be considered as a special case where no perturbation is allowed, i.e., zero adversarial budget. Therefore, we focus on the impact of the adversarial budget size on the loss landscape.
In this context, we investigate from a theoretical and empirical perspective how different adversarial budget sizes affect the loss landscape to make optimization more challenging.
Our analyses start with linear models and then generalize to nonlinear deep learning ones.
We study the whole training process and identify different behaviors in the early and final stages of training.
Based on our observations, we then introduce a scheduling strategy for the adversarial budget during training. We empirically show this scheme to yield better performance and to be less sensitive to the learning rate than vanilla adversarial training.

\paragraph{Contributions.} Our contributions can be summarized as follows.
1) From a theoretical perspective, we show that, for linear models, adversarial training under a large enough budget produces a constant classifier.
For general nonlinear models, we identify the existence of an abrupt change in the adversarial examples, which makes the loss landscape less smooth.
This causes severe \textit{gradient scattering} and slows down the convergence of training.
2) Our numerical analysis shows that training under large adversarial budgets hinders the model to escape from suboptimal initial regions, while also causing large non-vanishing gradients in the final stage of training.
Furthermore, by Hessian analysis, we evidence that the minima reached in the adversarial loss landscape are sharper when the adversarial budget is bigger.
3) We show that a periodic adversarial scheduling (PAS) strategy, corresponding to a cyclic adversarial budget scheduling scheme with warmup, addresses these challenges. Specifically, it makes training less sensitive to the choice of learning rate and yields better robust accuracy than vanilla adversarial training without any computational overhead.

\paragraph{Notation and Terminology.}
We use plain letters, bold lowercase letters and bold uppercase letters to represent scalars, vectors and matrices, respectively.
$\|\vi\|$ represents the Euclidean norm of vector $\vi$ and $[K]$ is an abbreviation of the set $\{0, 1, 2, ..., K - 1\}$.
In a classification problem $\{(\x_i, y_i)\}_{i = 1}^N$, where $(\x_i, y_i) \in \R^m \times [K]$, the classifier consists of a logit function $f: \R^m \to \R^k$, which is usually a neural network, and a risk function $\loss: \R^k \times [K] \to \R$, which is the softmax cross-entropy loss.
The adversarial budget $\Set^{(p)}_\epsilon(\x)$ of a data point $\x$, whose size is $\epsilon$, is defined based on an $l_p$ norm-based constraint $\{\x'| \|\x - \x'\|_p \leq \epsilon\}$, and we use $\Set_\epsilon(\x)$ to denote the $l_\infty$ constraint for simplicity.

Given the model parameters $\theta \in \Theta$, we use $g(\x, \theta): \R^m \times \Theta \to \R$ to denote the loss function for an individual data point, ignoring the label $y$ for simplicity.
If we use $\Loss_\epsilon(\theta)$ to denote the adversarial loss function under adversarial budget $\Set^{(p)}_\epsilon(\x)$, adversarial training solves the min-max problem
\begin{equation}
	\begin{aligned}
		\min_\theta \Loss_\epsilon(\theta) \defeq \frac{1}{N} \sum_{i = 1}^N g_\epsilon(\x_i, \theta)\ \ \rm{where}\ \ g_\epsilon(\x_i, \theta) \defeq \max_{\x'_i \in \Set^{(p)}_\epsilon(\x_i)} g(\x'_i, \theta)\;.
	\end{aligned} \label{eq:adv_problem}
\end{equation}
$\Loss(\theta) \defeq \Loss_0(\theta)$ is the vanilla loss function.
If $\epsilon \neq 0$, the adversarial example $\x'_i$, i.e., the worst-case input in $\Set^{(p)}_\epsilon(\x_i)$, depends on the model parameters.
We call the landscape of functions $\Loss(\theta)$ and $\Loss_\epsilon(\theta)$ the vanilla and adversarial loss landscape, respectively.
Similarly, we use $\Error(\theta)$ and $\Error_\epsilon(\theta)$ to represent the clean error and robust error under adversarial budget $\Set^{(p)}_\epsilon(\x)$.
In this paper, we call a function smooth if it is $C^1$-continuous.
We use $\theta_0$ to denote the initial parameters.
``Initial plateau'' or ``suboptimal region in the early stage of training'' indicate the parameters that are close to the initial ones and have similar performance.
``Vanilla training'' means training based on clean input data, while ``vanilla adversarial training'' represents the popular adversarial training method in~\cite{madry2017towards}.


\section{Related Work}

\paragraph{Adversarial Robustness.}
In this work, we focus on white-box attacks, in which the attackers have access to the model parameters.
Compared with black-box attacks, white box attacks better solve the inner maximization problem in (\ref{eq:adv_problem}).
In this context,~\cite{goodfellow2014explaining} proposes the fast gradient sign method (FGSM) to perturb the input in the direction of its gradient: $\x' = \x + \epsilon\ \mathrm{sign}(\triangledown_\x \Loss(\theta))$.
Projected gradient descent (PGD)~\cite{madry2017towards} extends FGSM by iteratively running it with a smaller step size and projecting the perturbation back to the adversarial budget.
Furthermore, PGD introduces randomness by starting at a random initial point inside the adversarial budget.
As a result, PGD generates much stronger adversarial examples than FGSM and is believed to be the strongest attack utilizing the network's first order information~\cite{madry2017towards}.

When it comes to robustness against attacks, some methods have been proposed to train provably robust models by linear approximation~\cite{balunovic2020adversarial, kolter2017provable, wong2018scaling}, semi-definite programming~\cite{raghunathan2018certified}, interval bound propagation~\cite{gowal2018effectiveness} or randomized smoothing~\cite{cohen2019certified, salman2019provably}.
However, these methods either only apply to a specific type of network, have a significant computational overhead, or are unstable.
Furthermore, compared with adversarial training, they have been found to over-regularize the model and significantly decrease the clean accuracy~\cite{zhang2019towards}.

As a result, we focus on PGD-based adversarial training, which first generates adversarial examples $\x'$ by PGD and then uses $\x'$ to optimize the model parameters $\theta$.
In all our experiments, the adversarial loss landscape is approximated by the loss of adversarial examples found by PGD.

\paragraph{Loss Landscape of Deep Neural Networks.}
Many existing works focus on the vanilla loss landscape of the objective function in deep learning.
It is challenging, because the objective $\Loss(\theta)$ of a deep neural network is a high-dimensional nonconvex function, of which we only know very few properties.
\cite{kawaguchi2016deep} proves the nonexistence of poor local minima for general deep nonlinear networks.
\cite{lee2016gradient} shows that stochastic gradient descent (SGD) can almost surely escape the saddle points and converge to a local minimum.
For over-parameterized ReLU networks, SGD is highly likely to find a monotonically decreasing trajectory from the initialization point to the global optimum~\cite{safran2016quality}.

Furthermore, some works have studied the geometric properties of local minima in the loss landscape of neural networks.
In this context,~\cite{keskar2016large, novak2018sensitivity} empirically show that sharp minima usually have larger generalization gaps than flat ones.
Specifically, to improve generalization,~\cite{yao2018hessian} uses adversarial training to avoid converging to sharp minima in large batch training.
However, the correspondence between sharp minima and poor generalization is based on empirical findings and sometimes controversial.
For example,~\cite{dinh2017sharp} shows counterexamples in ReLU networks by rescaling the parameters and claims that sharp minima can generalize as well as flat ones.
Moreover, different minima of the loss function have been found to be well-connected.
That is, there exist hyper-curves connecting different minima that are flat in the loss landscape~\cite{draxler2018essentially, garipov2018loss}.
\cite{zhao2020Bridging} further shows that the learned path connection can help us to effectively repair models that are vulnerable to backdoor or error-injection attacks.
Recently, some methods have been proposed to visualize the loss landscape~\cite{li2018visualizing, skorokhodov2019loss}, leading to the observation that networks of different architectures have surprisingly different landscapes.
Compared with chaotic landscapes, smooth and locally near-convex landscapes make gradient-based optimization much easier.

All of the above-mentioned works, however, focus on networks that have been optimized with vanilla training.
Here, by contrast, we study the case of adversarial training.


\section{Theoretical Analysis} \label{sec:theory}

In this section, we conduct an analytical study of the difference between $\Loss_\epsilon(\theta)$ and $\Loss(\theta)$.
We start with linear classification models and then discuss general nonlinear ones.

\subsection{Linear Classification Models} \label{subsec:softmax}

For the simple but special case of logistic regression, i.e., $K = 2$, we can write the analytical form of $\Loss_\epsilon(\theta)$. We defer the detailed discussion of this case to Appendix~\ref{subsec:logistic}, and here focus on
linear multi-class classification, i.e., $K \geq 3$.
We parameterize the model by $\W \defeq \{\w_{i}\}_{i = 1}^K \in \R^{m \times K}$ and use $f(\W) = [\w_1^T\x, \w_2^T\x, ..., \w_K^T\x]$ as the logit function.
Therefore, the vanilla loss function is convex as $g(\x, \W) = \log(1 + \sum_{j \neq y}\exp^{(\w_j - \w_y)^T\x})$.
Although $g_\epsilon(\x, \W)$ is also convex, it is no longer smooth everywhere.
It is then difficult to write a unified expression of $g_\epsilon(\x, \W)$.
So we start with the \textit{version space} $\VV_\epsilon$ of $g_\epsilon(\x, \W)$ defined as $\VV_\epsilon = \left\{\W \bigg\rvert (\w_i - \w_y)\x' \leq 0, \forall i \in [K], \x' \in \Set_\epsilon(\x)\right\}$.

By definition, $\VV_\epsilon$ is the smallest convex closed set containing all solutions robust under the adversarial budget $\Set_\epsilon(\x)$.
The proposition below states that the version space $\VV_\epsilon$ shrinks with larger values of $\epsilon$.

\begin{proposition} \label{prop:version_space}
	Given the definition of the version space $\VV_\epsilon$, then $\VV_{\epsilon_2} \subseteq \VV_{\epsilon_1}$ when $\epsilon_1 \leq \epsilon_2$.
\end{proposition}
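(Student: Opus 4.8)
The plan is to exploit the monotonicity of the adversarial budget in $\epsilon$ and convert it, via the universal quantifier over $\x'$ in the definition of $\VV_\epsilon$, into a reverse inclusion of the two feasible sets. First I would record the elementary nesting of the $l_\infty$ balls: whenever $\epsilon_1 \leq \epsilon_2$, any $\x'$ with $\|\x - \x'\|_\infty \leq \epsilon_1$ also satisfies $\|\x - \x'\|_\infty \leq \epsilon_2$, so $\Set_{\epsilon_1}(\x) \subseteq \Set_{\epsilon_2}(\x)$. This is the only geometric fact the argument needs, and it holds verbatim for any $l_p$ constraint, so the statement is not special to $l_\infty$.

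The core of the proof is then a one-line membership argument. Fix an arbitrary $\W \in \VV_{\epsilon_2}$. By definition this means $(\w_i - \w_y)^T\x' \leq 0$ for every $i \in [K]$ and \emph{every} $\x' \in \Set_{\epsilon_2}(\x)$. Since $\Set_{\epsilon_1}(\x) \subseteq \Set_{\epsilon_2}(\x)$, this same family of inequalities holds in particular for every $\x' \in \Set_{\epsilon_1}(\x)$, which is precisely the membership condition defining $\VV_{\epsilon_1}$. Hence $\W \in \VV_{\epsilon_1}$, and since $\W$ was arbitrary we conclude $\VV_{\epsilon_2} \subseteq \VV_{\epsilon_1}$. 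The key conceptual point to flag is the direction of the inclusion: enlarging the budget \emph{adds} linear constraints rather than relaxing them, so the set of feasible $\W$ can only shrink; the inclusion of budgets and the inclusion of version spaces therefore point opposite ways.

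There is no serious obstacle here, so the ``hard part'' is really just bookkeeping with the quantifiers: one must use that the constraint is required to hold for \emph{all} perturbations in the budget, since it is exactly this universality that turns a superset of constraints into a subset of solutions. I would also note that if $\VV_\epsilon$ is intended as the intersection of the per-point version spaces over the whole training set, the argument is unchanged, because intersection preserves inclusion. Finally, the convexity and closedness asserted for $\VV_\epsilon$ play no role in establishing the inclusion itself — each constraint $(\w_i - \w_y)^T\x' \leq 0$ is a closed half-space in $\W$ and $\VV_\epsilon$ is their intersection over $i$ and $\x'$, so those properties are automatic and are only invoked separately when $\VV_\epsilon$ is described as the smallest convex closed set of robust solutions.
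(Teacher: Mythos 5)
Your proposal is correct and follows essentially the same argument as the paper's proof: fix an arbitrary $\W \in \VV_{\epsilon_2}$, use the nesting $\Set_{\epsilon_1}(\x) \subseteq \Set_{\epsilon_2}(\x)$ to restrict the universally quantified constraints to the smaller budget, and conclude $\W \in \VV_{\epsilon_1}$. The additional remarks you make (generality to any $l_p$ norm, aggregation over the dataset, the irrelevance of convexity/closedness) are sound but not needed; the core membership argument matches the paper exactly.
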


The proof of Proposition~\ref{prop:version_space} is very straightforward, we put it in Appendix~\ref{sub:app_proof_v_set}.


In addition to $\VV_\epsilon$, we define the set $\TT_\epsilon$ as $\TT_\epsilon = \left\{\W \bigg\rvert 0 \in \argmin_\gamma g_\epsilon(\x, \gamma \W) \right\}$.
$\TT_\epsilon$ is the set of all directions in which the optimal point is the origin;
that is, the corresponding models in this direction are all no better than a constant classifier.
Although we cannot write the set $\TT_\epsilon$ in roster notation, we show in the theorem below that $\TT_\epsilon$ becomes larger as $\epsilon$ increases.

\begin{theory} \label{thm:t_set}
	Given the definition of $\TT_\epsilon$, then $\TT_{\epsilon_2} \subseteq \TT_{\epsilon_1}$ when $\epsilon_1 \geq \epsilon_2$.
	In addition, $\exists \bar{\epsilon}$ such that $\forall \epsilon \geq \bar{\epsilon}, \TT_\epsilon = \R^{m \times K}$.
	In this case, $\zero \in \argmin_{\W} g_\epsilon(\x, \W)$.
\end{theory}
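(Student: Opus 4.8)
The plan is to reduce all three claims to one closed-form characterization of membership in $\TT_\epsilon$. Fix $\W$ and study the scalar function $h(\gamma)\defeq g_\epsilon(\x,\gamma\W)$. Since $g_\epsilon$ is convex in $\W$ and $\gamma\mapsto\gamma\W$ is linear, $h$ is convex on $\R$, and because all logits vanish at the origin we have $h(0)=\log K$ regardless of $\x'$. For a convex function, $0\in\argmin_\gamma h(\gamma)$ holds if and only if the one-sided derivatives straddle zero, i.e. $h'_-(0)\le 0\le h'_+(0)$. Thus the whole theorem hinges on evaluating these two derivatives.

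First I would compute the one-sided derivatives. Writing the inner objective as $\phi(\gamma,\x')=\log\!\big(1+\sum_{j\neq y}\exp(\gamma(\w_j-\w_y)^T\x')\big)$, every $\x'$ gives $\phi(0,\x')=\log K$, so at $\gamma=0$ the set of inner maximizers is the \emph{entire} budget $\Set_\epsilon(\x)$. Since $\partial_\gamma\phi(0,\x')=\tfrac1K\sum_{j\neq y}(\w_j-\w_y)^T\x'=\tfrac1K\vi^T\x'$, Danskin's theorem yields
\begin{equation*}
    h'_+(0)=\frac{1}{K}\max_{\x'\in\Set_\epsilon(\x)}\vi^T\x',\qquad h'_-(0)=\frac{1}{K}\min_{\x'\in\Set_\epsilon(\x)}\vi^T\x',\qquad \vi\defeq\sum_{j=1}^K\w_j-K\w_y .
\end{equation*}
The support function of the $l_\infty$ ball gives $\max_{\x'}\vi^T\x'=\vi^T\x+\epsilon\|\vi\|_1$ and $\min_{\x'}\vi^T\x'=\vi^T\x-\epsilon\|\vi\|_1$, so the straddling condition collapses to the clean criterion
\begin{equation*}
    \W\in\TT_\epsilon\iff |\vi^T\x|\le\epsilon\|\vi\|_1 .
\end{equation*}

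With this characterization the three claims follow quickly. Monotonicity ($\TT_{\epsilon_2}\subseteq\TT_{\epsilon_1}$ for $\epsilon_1\ge\epsilon_2$) is immediate because the bound $\epsilon\|\vi\|_1$ is nondecreasing in $\epsilon$. For the saturation claim, Hölder's inequality gives $|\vi^T\x|\le\|\vi\|_1\|\x\|_\infty$ for every $\vi$, so taking $\bar\epsilon\defeq\|\x\|_\infty$ makes the criterion hold for all $\W$, i.e. $\TT_\epsilon=\R^{m\times K}$ whenever $\epsilon\ge\bar\epsilon$ (and the equality case of Hölder shows this threshold is tight). Finally, $\TT_\epsilon=\R^{m\times K}$ means that for every $\W$ the origin minimizes $g_\epsilon(\x,\cdot)$ along the ray through $\W$; specializing to $\gamma=1$ gives $g_\epsilon(\x,\zero)\le g_\epsilon(\x,\W)$ for all $\W$, hence $\zero\in\argmin_\W g_\epsilon(\x,\W)$.

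The main obstacle is the derivative computation in the degenerate situation where the inner maximizer set at $\gamma=0$ is the whole ball rather than a single point. I would make this rigorous either by checking the hypotheses of Danskin's theorem or, equivalently, by a direct argument showing that the first-order Taylor expansion of the smooth inner objective is uniform over the compact set $\Set_\epsilon(\x)$, which lets me pass the limit defining $h'_\pm(0)$ through the maximum. The remaining steps, identifying the support function of the budget and applying Hölder, are routine. (For a general $l_p$ budget the argument is unchanged, with $\|\vi\|_1$ replaced by the dual norm and $\bar\epsilon=\|\x\|_p$.)
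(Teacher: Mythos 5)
Your proposal is correct, and it takes a genuinely different route from the paper's proof. The paper never computes a derivative: it proves monotonicity from the pointwise bound $g_{\epsilon_1}(\x,\W)\geq g_{\epsilon_2}(\x,\W)$ for $\epsilon_1\geq\epsilon_2$ together with the fact that $g_\epsilon(\x,\zero)\equiv\log K$ is independent of $\epsilon$, plugged into the convexity criterion that $\W\in\TT_\epsilon$ iff $g_\epsilon(\x,\pm\Delta\gamma\W)\geq g_\epsilon(\x,\zero)$ for all $\Delta\gamma>0$; it proves saturation by evaluating $g$ at one explicit feasible perturbation aligned (in the dual-norm sense) with $\w_m-\w_y$, where $m=\argmax_j\|\w_j-\w_y\|_q$, which yields an analytic lower bound $\underline{g_\epsilon}$ and the threshold $\bar\epsilon=\bigl(\log(K-1)-(\w_m-\w_y)^T\x\bigr)/\|\w_m-\w_y\|_q$. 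You instead compute the one-sided derivatives of $\gamma\mapsto g_\epsilon(\x,\gamma\W)$ at $0$ via Danskin's theorem and obtain an exact membership test, $\W\in\TT_\epsilon$ iff $|\vi^T\x|\leq\epsilon\|\vi\|_1$, from which all three claims are immediate. The trade-off: the paper's monotonicity argument is more elementary and works verbatim for any convex loss, with no smoothness or Danskin machinery, whereas yours is tied to the softmax form; but your characterization buys something the paper's saturation argument does not deliver, namely a threshold $\bar\epsilon=\|\x\|_\infty$ (in general $\|\x\|_p$) that is \emph{uniform} in $\W$. This matters: the paper's $\bar\epsilon$ depends on $\W$, and along a ray $\Delta\gamma\W$ it blows up as $\Delta\gamma\to0^+$ (the numerator tends to $\log(K-1)>0$ while the denominator $\Delta\gamma\|\w_m-\w_y\|_q\to0$), so the paper's displayed bound by itself cannot certify a single finite $\bar\epsilon$ valid on all of $\R^{m\times K}$; your H\"older step closes exactly that uniformity issue (equivalently, $\epsilon\geq\|\x\|_p$ makes the zero input feasible, and there every linear classifier incurs loss exactly $\log K$), and it also shows the threshold is tight. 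The only step you must still write out carefully is the one you flagged: the derivative formula at $\gamma=0$, where the maximizer set is the entire budget; the standard Danskin statement (compact constraint set, inner objective continuously differentiable in $\gamma$ with jointly continuous derivative) covers precisely this degenerate case, so your outline is sound.
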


We defer the proof of Theorem~\ref{thm:t_set} to Appendix~\ref{sub:app_proof_t_set}, where we also provide a lower bound for $\bar{\epsilon}$.
Theorem~\ref{thm:t_set} indicates that when the adversarial budget is large enough, the optimal point is the origin.
In this case, we will get a constant classifier, and training completely fails.

$\Loss_\epsilon(\W)$ is the average of $g_\epsilon(\x, \W)$ over the dataset, so Theorem~\ref{thm:t_set} and Proposition~\ref{prop:version_space} still apply if we replace $g_\epsilon$ with $\Loss_\epsilon$ in the definition of $\VV_\epsilon$ and $\TT_\epsilon$.
For nonlinear models like deep neural networks, these conclusions will not hold because $g_\epsilon(\x, \theta)$ is no longer convex.
Nevertheless, our experiments in Section~\ref{subsec:early_training} evidence the same phenomena as indicated by the theoretical analysis above.
Larger $\epsilon$ make it harder for the optimizer to escape the initial suboptimal region.
In some cases, training fails, and we obtain a constant classifier in the end.

\subsection{General Nonlinear Classification Models} \label{sec:theory_general}

For deep nonlinear neural networks, we cannot write the analytical form of $g(\x, \theta)$ or $g_\epsilon(\x, \theta)$.
To analyze such models, we follow~\cite{sinha2018certifiable} and assume the smoothness of the function $g$.

\begin{assumption} \label{assume:lip}
The function $g$ satisfies the following Lipschitzian smoothness conditions: 
\begin{equation}
\begin{aligned}
\|g(\x, \theta_1) - g(\x, \theta_2)\| & \leq L_\theta \|\theta_1 - \theta_2\|\;, \\
\|\triangledown_{\theta} g(\x, \theta_1) - \triangledown_{\theta} g(\x, \theta_2)\| &\leq L_{\theta\theta} \|\theta_1 - \theta_2\|\;, \\
\|\triangledown_{\theta} g(\x_1, \theta) - \triangledown_{\theta} g(\x_2, \theta)\| &\leq L_{\theta\x} \|\x_1 - \x_2\|_p\;.
\end{aligned}
\end{equation}
\end{assumption}

Based on this, we study the smoothness of $\Loss_\epsilon(\theta)$.

\begin{proposition} \label{prop:lip}
	If Assumption~\ref{assume:lip} holds, then we have \footnote{Strictly speaking, $\Loss_\epsilon(\theta)$ is not differentiable at some point, so $\triangledown_\theta \Loss_\epsilon(\theta)$ might be ill-defined. In this paper, we use $\triangledown_\theta \Loss_\epsilon(\theta)$ for simplicity. Nevertheless, the inequality holds for any subgradient $\vi \in \partial_\theta \Loss_\epsilon(\theta)$.}
	\begin{equation}
		\begin{aligned}
			\|\Loss_\epsilon(\theta_1) - \Loss_\epsilon(\theta_2)\| & \leq L_\theta \|\theta_1 - \theta_2\| \;, \\
			\|\triangledown_\theta \Loss_\epsilon(\theta_1) - \triangledown_\theta \Loss_\epsilon(\theta_2)\| & \leq L_{\theta\theta} \|\theta_1 - \theta_2\| + 2\epsilon L_{\theta\x}\;.
		\end{aligned} \label{eq:lip}
	\end{equation}
\end{proposition}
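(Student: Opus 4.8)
The plan is to establish both inequalities at the level of the per-example functions $g_\epsilon(\x_i, \theta)$ and then average, since $\Loss_\epsilon = \frac{1}{N}\sum_{i=1}^N g_\epsilon(\x_i, \cdot)$ and the right-hand sides in (\ref{eq:lip}) do not depend on $i$. For each $i$ and each $\theta$, I write $\x_i^*(\theta)$ for a maximizer of $g(\cdot, \theta)$ over $\Set^{(p)}_\epsilon(\x_i)$, so that $g_\epsilon(\x_i, \theta) = g(\x_i^*(\theta), \theta)$; the budget is compact, so such a maximizer exists.

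For the first inequality I would use the standard fact that a pointwise maximum of $L_\theta$-Lipschitz functions is $L_\theta$-Lipschitz. Using the first condition of Assumption~\ref{assume:lip} together with the feasibility of $\x_i^*(\theta_1)$ for the maximization defining $g_\epsilon(\x_i, \theta_2)$,
\[
g_\epsilon(\x_i, \theta_1) = g(\x_i^*(\theta_1), \theta_1) \leq g(\x_i^*(\theta_1), \theta_2) + L_\theta\|\theta_1 - \theta_2\| \leq g_\epsilon(\x_i, \theta_2) + L_\theta\|\theta_1 - \theta_2\|.
\]
Exchanging the roles of $\theta_1$ and $\theta_2$ gives the matching bound, and averaging over $i$ yields the first line of (\ref{eq:lip}).

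For the second inequality, the key tool is Danskin's theorem, which lets me identify $\triangledown_\theta g_\epsilon(\x_i, \theta)$ with $\triangledown_\theta g(\x_i^*(\theta), \theta)$ evaluated at a maximizer. I would then insert the cross term $\triangledown_\theta g(\x_i^*(\theta_1), \theta_2)$ and split the gradient difference into two pieces. The first, $\triangledown_\theta g(\x_i^*(\theta_1), \theta_1) - \triangledown_\theta g(\x_i^*(\theta_1), \theta_2)$, is bounded by $L_{\theta\theta}\|\theta_1 - \theta_2\|$ via the second condition. The second, $\triangledown_\theta g(\x_i^*(\theta_1), \theta_2) - \triangledown_\theta g(\x_i^*(\theta_2), \theta_2)$, is bounded via the third condition by $L_{\theta\x}\|\x_i^*(\theta_1) - \x_i^*(\theta_2)\|_p$. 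Since both maximizers lie in $\Set^{(p)}_\epsilon(\x_i)$, the triangle inequality gives $\|\x_i^*(\theta_1) - \x_i^*(\theta_2)\|_p \leq \|\x_i^*(\theta_1) - \x_i\|_p + \|\x_i - \x_i^*(\theta_2)\|_p \leq 2\epsilon$, which produces the $2\epsilon L_{\theta\x}$ term. Averaging over $i$ completes the argument.

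The main obstacle I anticipate is making the Danskin step rigorous when the maximizer is not unique and $g_\epsilon$ (hence $\Loss_\epsilon$) is nondifferentiable, precisely the situation flagged in the footnote. I would handle this by arguing with subgradients: any element of $\partial_\theta g_\epsilon(\x_i, \theta)$ is a convex combination of gradients $\triangledown_\theta g(\x_i^*, \theta)$ taken over the set of maximizers $\x_i^*$, and because the two-term bound above holds uniformly over every choice of maximizer, it is inherited by any such convex combination, and then by the averaged subdifferential $\partial_\theta \Loss_\epsilon$. The remaining steps, namely the two Lipschitz estimates and the diameter bound on the adversarial budget, are routine.
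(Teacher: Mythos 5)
Your proposal is correct and follows essentially the same route as the paper's proof: reduce to the per-example functions $g_\epsilon(\x_i,\cdot)$, prove the first inequality by playing the two maximizers against each other via feasibility, and prove the second by inserting the cross term $\triangledown_\theta g(\x_i^*(\theta_1), \theta_2)$ and bounding $\|\x_i^*(\theta_1) - \x_i^*(\theta_2)\|_p \leq 2\epsilon$ from the budget's diameter. Your explicit handling of the Danskin/subgradient issue is a welcome refinement of a point the paper only addresses in a footnote, but it does not change the substance of the argument.
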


The proof is provided in Appendix~\ref{sub:app_proof_lip}, in which we can see the upper bound in Proposition~\ref{prop:lip} is tight and can be achieved in the worst cases.
Proposition~\ref{prop:lip} shows that the first-order smoothness of the objective function is preserved under adversarial attacks, but the second-order smoothness is not.
That is to say, gradients in arbitrarily small neighborhoods in the $\theta$-space can change discontinuously.

The unsatisfying second-order property arises from the maximization operator defined in the functions $g_\epsilon$ and $\Loss_\epsilon$.
For function $g_\epsilon(\x, \theta)$, the non-smooth points are those where the optimal adversarial example $\x'$ changes abruptly in a sufficiently small neighborhood.
Formally, we use $\theta_1$ and $\x'_1$ to represent the model parameters and the corresponding optimal adversarial example.
We assume different gradients of the model parameters for different inputs.
If there exists a positive number $a > 0$ such that, $\forall \delta > 0$, we can find $\theta_2 \in \{\theta | \|\theta - \theta_1\| \leq \delta\}$, and the corresponding optimal adversarial example $\x'_2$ satisfies $\|\x'_1 - \x'_2\|_p > a$, then $\lim_{\theta \to \theta_1} \triangledown_\theta g_\epsilon(\x, \theta) \neq \triangledown_\theta g_\epsilon(\x, \theta_1)$.
$\Loss_\epsilon(\theta)$ is the aggregation of $g_\epsilon(\x, \theta)$ over the dataset, so it also has such non-smooth points.
In addition, as the $2\epsilon L_{\theta\x}$ term in the second inequality of (\ref{eq:lip}) indicates, the adversarial examples can change more under a larger adversarial budget.
As a result, the (sub)gradients $\triangledown_\theta \Loss_\epsilon(\theta)$ can change more abruptly in the neighborhood of the parameter space.
That is, the (sub)gradients are more \textit{scattered} in the adversarial loss landscape.

\begin{figure}[!ht] 
\centering
\includegraphics[scale = 0.4]{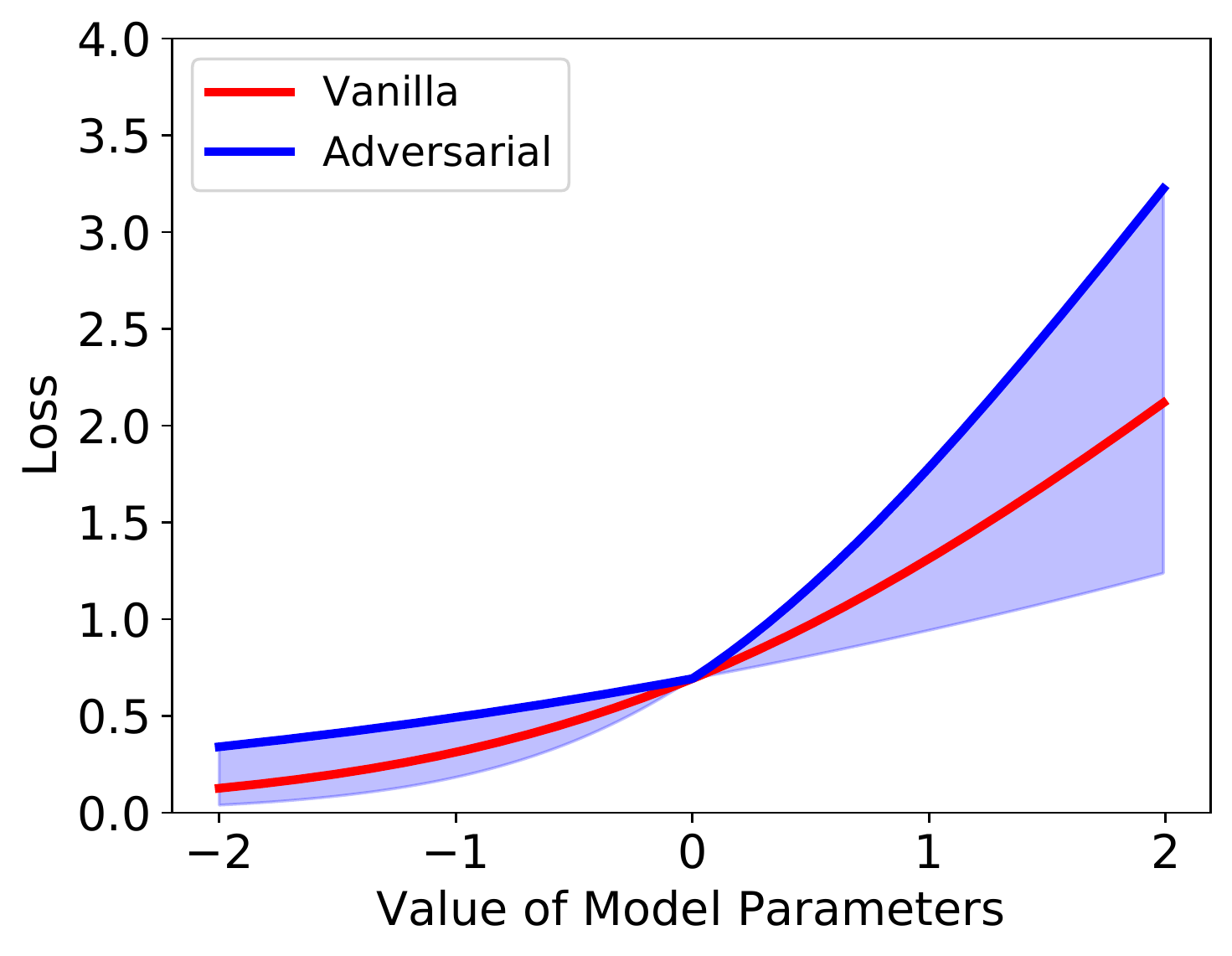}~~~~~~~~~~~~
\includegraphics[scale = 0.4]{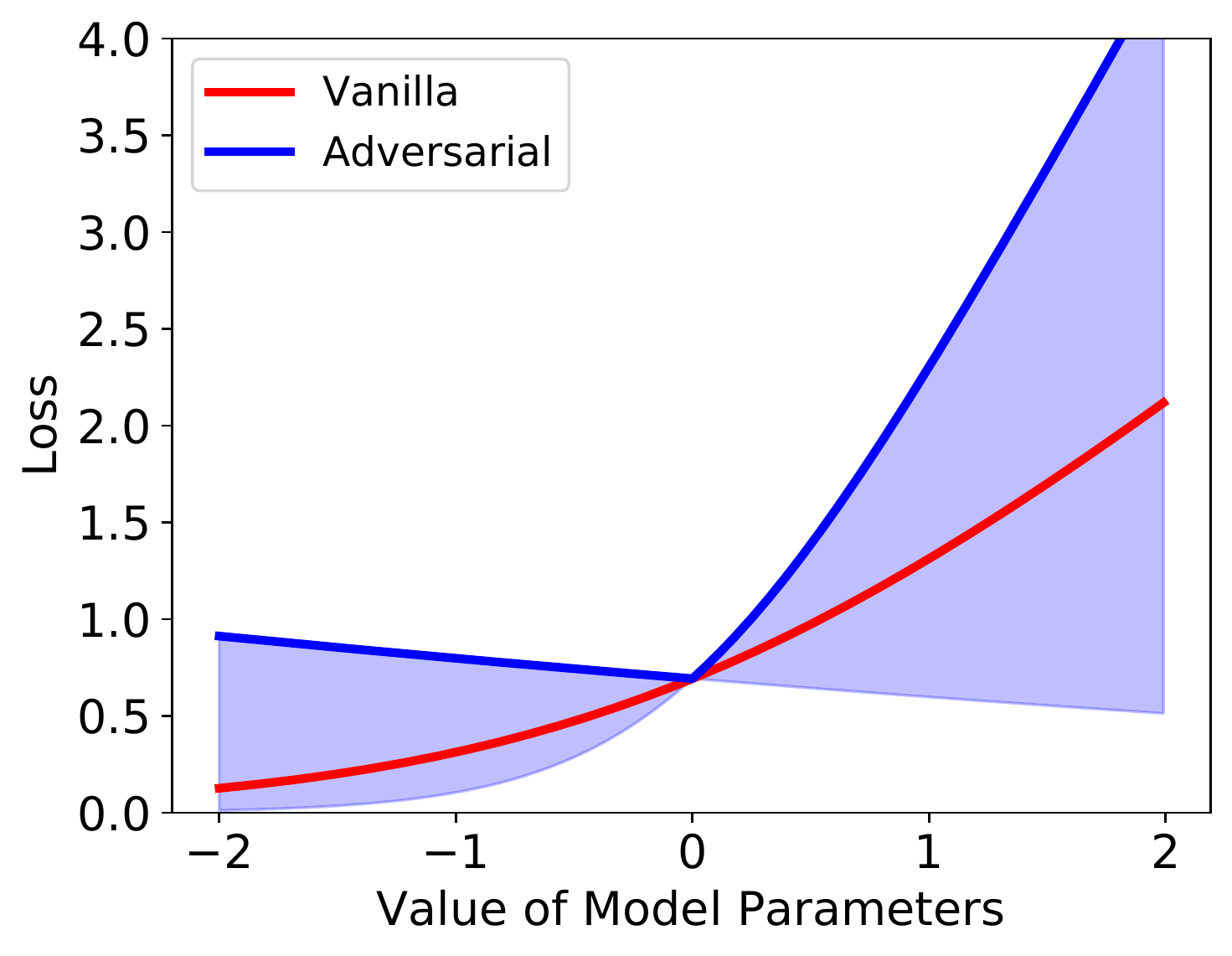}
\caption{2D sketch diagram showing the vanilla and adversarial loss landscapes. The clean input data $x$ is $1.0$ and loss function $g(x, \theta) = \log(1 + \exp(\theta x))$. The landscape is shown in the parameter interval $\theta \in [-2, 2]$ under a small adversarial budget (left, $\epsilon = 0.6$) and a large adversarial budget (right, $\epsilon = 1.2$). The function $g_\epsilon(\x, \theta)$ is not smooth at $\theta = 0$.} \label{fig:diag}
\end{figure}

Figure~\ref{fig:diag} provides a 2D sketch diagram showing the non-smoothness introduced by adversarial training.
The red curve represents the vanilla loss function $g(\x, \theta)$.
Under adversarial perturbation, the loss landscape fluctuates within the light blue band.
The blue curve represents the worst case we can encounter in the adversarial setting, i.e., $g_\epsilon(\x, \theta)$.
We can see that the blue curve is not smooth any more at the point where $\theta = 0$.
Importantly, as the light blue band becomes wider under a larger adversarial budget, the corresponding non-smooth point becomes sharper, which means that the difference between the gradients on both sides of the non-smooth point becomes larger.

Based on Proposition~\ref{prop:lip}, we show in the following theorem that the non-smoothness introduced by adversarial training makes the optimization by stochastic gradient descent (SGD) more difficult.

\begin{theory} \label{thm:convergence}
	Let Assumption~\ref{assume:lip} hold, the stochastic gradient $\triangledown_\theta \widehat{\Loss}_\epsilon(\theta_t)$ be unbiased and have bounded variance, and the SGD update $\theta_{t + 1} = \theta_t - \alpha_t \triangledown_\theta \widehat{\Loss}_\epsilon(\theta_t)$ use a constant step size $\alpha_t = \alpha = \frac{1}{L_{\theta\theta}\sqrt{T}}$ for $T$ iterations. Given the trajectory of the parameters during optimization $\{\theta_t\}_{t = 1}^T$, then we can bound the asymptotic probability of large gradients for a sufficient large value of $T$ as
	\begin{equation}
		\begin{aligned}
			\forall \gamma \geq 2, P(\|\triangledown_\theta \Loss_\epsilon(\theta_t)\| > \gamma \epsilon L_{\theta\x}) < \frac{4}{\gamma^2 - 2\gamma + 4}\;.
		\end{aligned} \label{eq:thm_convergence}
	\end{equation}
\end{theory}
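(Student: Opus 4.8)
The plan is to adapt the standard descent-lemma analysis of nonconvex SGD, but to replace the usual global smoothness by the \emph{perturbed} smoothness estimate of Proposition~\ref{prop:lip}, which is precisely what forces a nonvanishing floor on the gradient. First I would establish a modified descent inequality. Writing $\Delta_t \defeq \theta_{t+1} - \theta_t = -\alpha\triangledown_\theta\widehat{\Loss}_\epsilon(\theta_t)$ and integrating the gradient difference along the segment from $\theta_t$ to $\theta_{t+1}$, the bound $\|\triangledown_\theta\Loss_\epsilon(\theta_1) - \triangledown_\theta\Loss_\epsilon(\theta_2)\| \le L_{\theta\theta}\|\theta_1-\theta_2\| + 2\epsilon L_{\theta\x}$ from~(\ref{eq:lip}) yields
\begin{equation}
\Loss_\epsilon(\theta_{t+1}) \le \Loss_\epsilon(\theta_t) + \langle\triangledown_\theta\Loss_\epsilon(\theta_t), \Delta_t\rangle + \frac{L_{\theta\theta}}{2}\|\Delta_t\|^2 + 2\epsilon L_{\theta\x}\|\Delta_t\|\;.
\end{equation}
The extra linear term $2\epsilon L_{\theta\x}\|\Delta_t\|$, which is absent in ordinary smooth SGD, is the entire source of the phenomenon and must be carried throughout.

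Next I would substitute the update and take the expectation conditioned on $\theta_t$. Unbiasedness turns the inner product into $-\alpha\|\triangledown_\theta\Loss_\epsilon(\theta_t)\|^2$, bounded variance controls $\E\|\triangledown_\theta\widehat{\Loss}_\epsilon(\theta_t)\|^2$, and Jensen's inequality bounds the awkward term $\E\|\triangledown_\theta\widehat{\Loss}_\epsilon(\theta_t)\|$ by (essentially) $\|\triangledown_\theta\Loss_\epsilon(\theta_t)\|$ plus a variance contribution. Summing over $t=1,\dots,T$, telescoping the $\Loss_\epsilon$ values (bounded below), and dividing by $\alpha T$ gives a bound on the time-averaged gradient. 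Here I would insert $\alpha = 1/(L_{\theta\theta}\sqrt{T})$: the initialization-gap term scales like $1/\sqrt{T}$, and the second-order and variance terms like $1/\sqrt{T}$ as well, so all of them vanish as $T\to\infty$. Interpreting $\theta_t$ as a uniformly random iterate along the trajectory, what survives is a relation between the first two moments of $X \defeq \|\triangledown_\theta\Loss_\epsilon(\theta_t)\|$ of the form $\E[X^2] \lesssim 2\epsilon L_{\theta\x}\,\E[X]$, equivalently $\E[(X - \epsilon L_{\theta\x})^2] \lesssim (\epsilon L_{\theta\x})^2$.

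Finally I would convert this moment relation into the tail statement~(\ref{eq:thm_convergence}) through a one-sided Chebyshev (second-moment) argument: since $X \ge 0$ and the event $X > \gamma\epsilon L_{\theta\x}$ forces $(X - \epsilon L_{\theta\x})^2 > (\gamma-1)^2(\epsilon L_{\theta\x})^2$, Markov's inequality applied to $(X - \epsilon L_{\theta\x})^2$ bounds the probability by $1/(\gamma-1)^2$. Noting that $\gamma^2 - 2\gamma + 4 = (\gamma-1)^2 + 3$, one checks that $1/(\gamma-1)^2 \le 4/(\gamma^2 - 2\gamma + 4)$ exactly when $\gamma \ge 2$, so the claimed bound follows; the remaining slack is what absorbs the variance contribution retained in the previous step.

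I expect the main obstacle to be the middle step: handling the sublinear term $\E\|\triangledown_\theta\widehat{\Loss}_\epsilon(\theta_t)\|$ (which is linear, not quadratic, in the gradient) simultaneously with the variance, and organizing the $T\to\infty$ bookkeeping so that the surviving inequality emerges precisely as a clean relation between $\E[X]$ and $\E[X^2]$ rather than a messier expression with cross terms. A secondary subtlety is justifying the limit itself, namely that $\Loss_\epsilon$ is bounded below and that the time-averaged moments are well defined even though $\Loss_\epsilon$ is only subdifferentiable (cf.\ the footnote to Proposition~\ref{prop:lip}); this forces one to work with an arbitrary subgradient $\vi \in \partial_\theta\Loss_\epsilon$ throughout and to verify that every inequality above holds for such a $\vi$.
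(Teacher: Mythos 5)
Your proposal follows essentially the same route as the paper's proof: the descent inequality with the extra linear term $2\epsilon L_{\theta\x}\|\Delta_t\|$ coming from Proposition~\ref{prop:lip}, expectation under unbiasedness and bounded variance, telescoping with $\alpha = \frac{1}{L_{\theta\theta}\sqrt{T}}$, and a second-moment conversion of the surviving inequality $\E_t\bigl[\tfrac12 X^2 - 2\epsilon L_{\theta\x}X\bigr]\le 0$ (where $X \defeq \|\triangledown_\theta\Loss_\epsilon(\theta_t)\|$) into a tail bound; your ``complete the square and apply Markov'' step is exactly the paper's conditioning argument on $h(\theta_t)$, written more transparently. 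However, two of your steps do not hold as stated.

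First, the constant in your moment relation is off by a factor of two, and the Chebyshev step is sensitive to it. What the analysis delivers is $\E[X^2]\le 4\epsilon L_{\theta\x}\E[X]$, i.e.\ $\E[(X-2\epsilon L_{\theta\x})^2]\le(2\epsilon L_{\theta\x})^2$, not your $\E[(X-\epsilon L_{\theta\x})^2]\le(\epsilon L_{\theta\x})^2$: the linear coefficient in the descent inequality is $2\epsilon L_{\theta\x}$, so completing the square centers at $2\epsilon L_{\theta\x}$. Markov then gives $P(X>\gamma\epsilon L_{\theta\x})\le 4/(\gamma-2)^2$, not $1/(\gamma-1)^2$, and $4/(\gamma-2)^2$ does not imply the stated $4/(\gamma^2-2\gamma+4)$ (it is larger for all $\gamma\ge 2$ and diverges at $\gamma=2$). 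You should know that the paper's own proof stumbles at the same spot: correctly rearranging its inequality (\ref{eq:e_bound}) yields $P<4/(\gamma^2-4\gamma+4)$, which matches your corrected bound but not the theorem's displayed denominator. So your derivation in fact reproduces what the paper actually proves; the chain you give for recovering the displayed constant rests on a moment bound that the analysis does not supply.

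Second, your bookkeeping claim that all variance terms vanish like $1/\sqrt{T}$ fails for the linear term. The honest bound $\E\|\triangledown_\theta\widehat{\Loss}_\epsilon(\theta_t)\| \le \sqrt{\|\triangledown_\theta\Loss_\epsilon(\theta_t)\|^2+\sigma^2} \le \|\triangledown_\theta\Loss_\epsilon(\theta_t)\|+\sigma$ contributes $2\epsilon L_{\theta\x}\alpha\sigma$ at every step; summing over $t$ and dividing by $\alpha T$ leaves the constant $2\epsilon L_{\theta\x}\sigma$, which does not vanish as $T\to\infty$. Rigorously, the surviving relation is therefore $\E[X^2]\le 4\epsilon L_{\theta\x}\E[X]+4\epsilon L_{\theta\x}\sigma$, and the tail bound inherits a $\sigma$-dependent term. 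The paper escapes this only through the explicit, heuristic approximation $\E\|\triangledown_\theta\widehat{\Loss}_\epsilon(\theta)\|\simeq\|\triangledown_\theta\Loss_\epsilon(\theta)\|$, justified by asserting that the noise lives mainly in the squared term. You correctly identified this as the main obstacle, but Jensen plus $1/\sqrt{T}$ scaling does not close it.
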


We provide the proof in Appendix~\ref{subsec:convergence_proof}.
In vanilla training, $\epsilon$ is $0$ and $\Loss(\theta)$ is smooth, and (\ref{eq:thm_convergence}) implies that $\lim_{t \to +\infty} \|\triangledown_\theta g_\epsilon(\theta_t)\| = 0$ almost surely.
This is consistent with the fact that SGD converges to a critical point with non-convex smooth functions.
By contrast, in adversarial training, i.e., $\epsilon > 0$, we cannot guarantee convergence to a critical point.
Instead, the gradients are non-vanishing, and we can only bound the probability of obtaining gradients whose magnitude is larger than $2 \epsilon L_{\theta\x}$.
For a fixed value of $C \defeq \gamma \epsilon L_{\theta\x}$ larger than $2 \epsilon L_{\theta\x}$, the inequality (\ref{eq:thm_convergence}) indicates that the probability $P(\|\triangledown_\theta \Loss_\epsilon(\theta_t)\| > C)$ increases quadratically with $\epsilon$.



In deep learning practice, activation functions like sigmoid, tanh and ELU~\cite{clevert2015fast} satisfy the second-order smoothness in Assumption~\ref{assume:lip}, but the most popular ReLU function does not.
Nevertheless, adversarial training still causes \textit{gradient scattering} and makes the optimization more difficult.
That is, the bound of $\|\triangledown_\theta \Loss_\epsilon(\theta_1) - \triangledown_\theta \Loss_\epsilon(\theta_2)\|$ still increases with $\epsilon$, and the parameter gradients change abruptly in the adversarial loss landscape.
We provide a more detailed discussion of this phenomenon in Appendix~\ref{subsec:relu}, which shows that our analysis and conclusions easily extend to the ReLU case.


The second-order Lipchitz constant indicates the magnitude of the gradient change for a unit change in parameters.
Therefore, it is a good quantitative metric of gradient scattering.
In practice, we are more interested in the effective local Lipschitz constant, which only considers the neighborhood of the current parameters, than in the global Lipschitz constant.
In this case, the effective local second-order Lipschitz constant can be estimated by the top eigenvalues of the Hessian matrix $\triangledown^2_\theta \Loss_\epsilon(\theta)$.


\section{Numerical Analysis} \label{sec:minima}


In this section, we conduct experiments on MNIST and CIFAR10 to empirically	validate the theorems in Section~\ref{sec:theory}.
Detailed experimental settings are provided in Appendix~\ref{subsec:hyperparam}.
Unless specified, we use LeNet models on MNIST and ResNet18 models on CIFAR10 in this and the following sections.
Our code is available on \href{https://github.com/liuchen11/AdversaryLossLandscape}{https://github.com/liuchen11/AdversaryLossLandscape}.

\subsection{Gradient Magnitude} \label{subsec:early_training}

In Section~\ref{subsec:softmax}, we have shown that the training algorithm will get stuck at the origin and yield a constant classifier for linear models under large $\epsilon$.
For deep nonlinear models, the initial value of the parameters is close to the origin under most popular initialization schemes~\cite{glorot2010understanding, he2015delving}.
Although Theorem~\ref{thm:t_set} is not applicable here, we are still interested in investigating how effective gradient-based optimization is at escaping from the suboptimal initial parameters.
To this end, we track the norm of the stochastic gradient $\|\triangledown_\theta \widehat{\Loss}_\epsilon(\theta)\|$, the robust error $\Error_\epsilon(\theta)$ in the training set and the distance from the initial point $\|\theta - \theta_0\|$ during the first 2000 mini-batch updates for CIFAR10 models.
Figure~\ref{subfig:resnet_grad_early},~\ref{subfig:resnet_error_early},~\ref{subfig:resnet_distance_early} evidence a clear difference between the models trained with different values of $\epsilon$.
When $\epsilon$ is small, the gradient magnitude is larger, and the model parameters move faster.
Correspondingly, the training error decreases faster, which means that the model quickly escapes the initial suboptimal region.
By contrast, when $\epsilon$ is large, the gradients are small, and the model gets stuck in the initial region.
This implies that the loss landscape under a large adversarial budget impedes the escape from initial suboptimal plateaus in the early stage of training.

\begin{figure}[!ht]
	\centering
	\begin{tabular}{cccc}
		\begin{subfigure}{0.24\textwidth}
			\includegraphics[scale = 0.22]{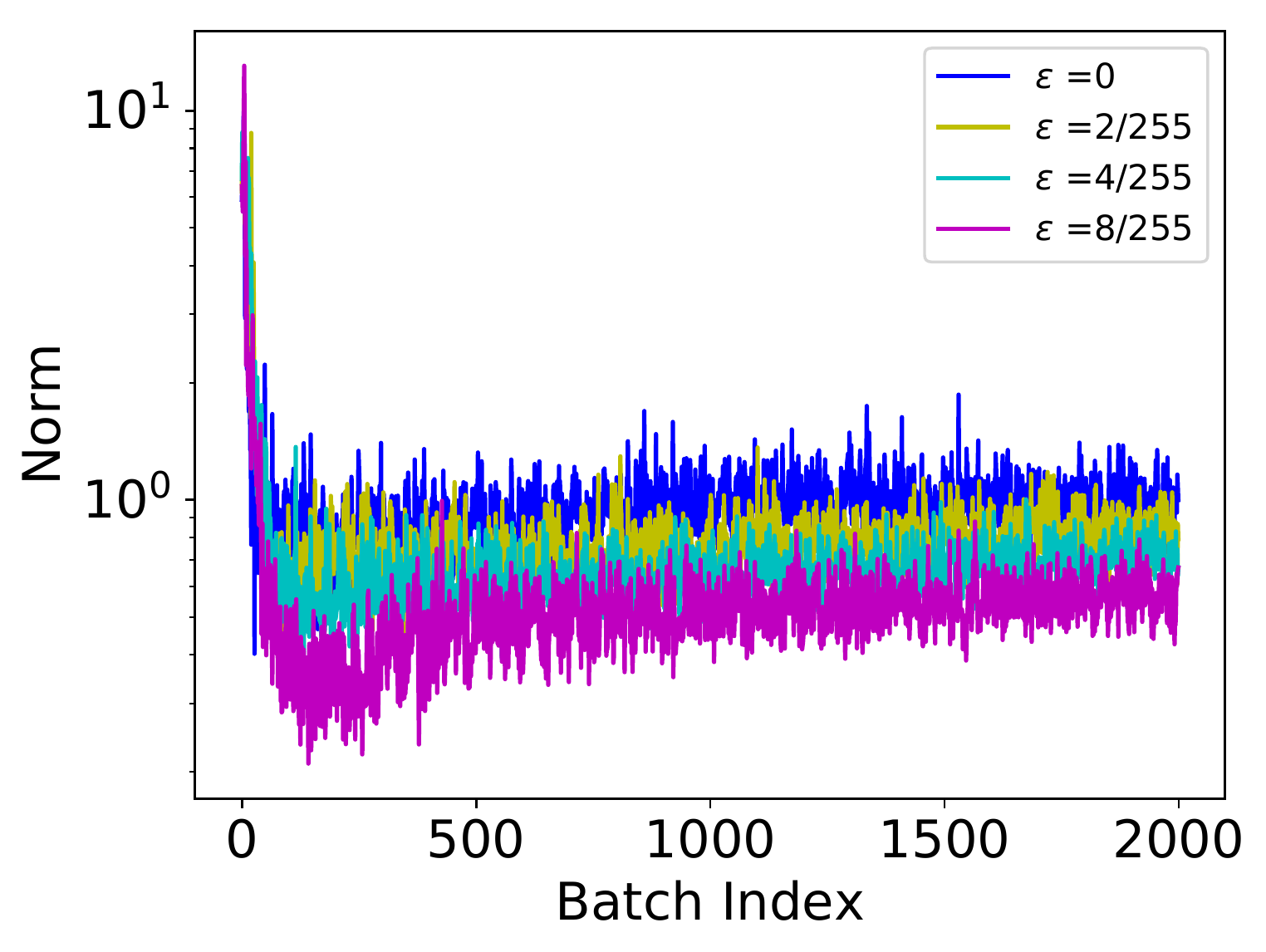}
			\captionsetup{font=small}
			\caption{$\|\triangledown_\theta \widehat{\Loss}_\epsilon(\theta)\|$, first 2000.} \label{subfig:resnet_grad_early}
		\end{subfigure}
		\begin{subfigure}{0.24\textwidth}
			\includegraphics[scale = 0.22]{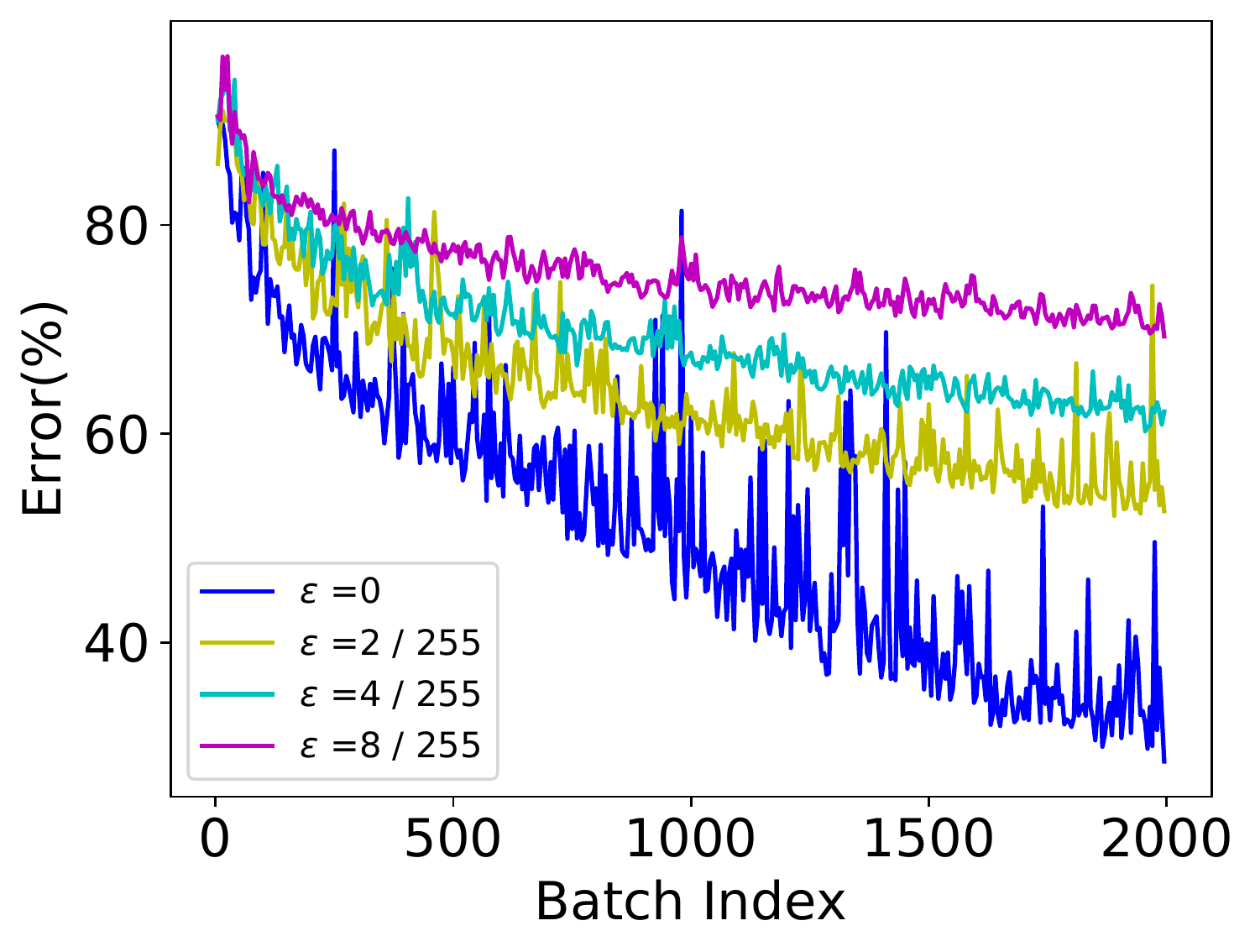}
			\captionsetup{font=small}
			\caption{$\Error_\epsilon(\theta)$, first 2000.} \label{subfig:resnet_error_early}
		\end{subfigure}
		\begin{subfigure}{0.24\textwidth}
			\includegraphics[scale = 0.22]{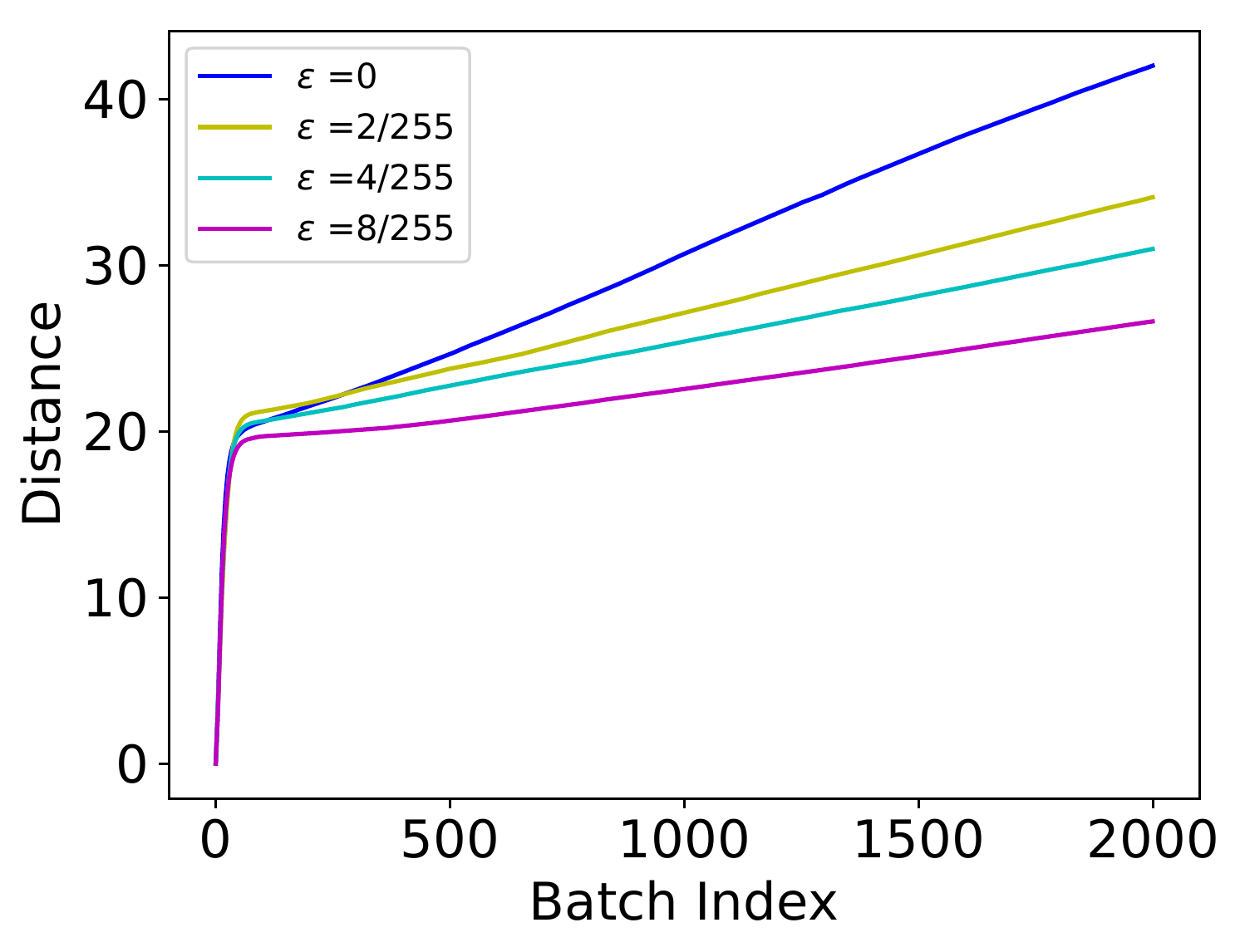}
			\captionsetup{font=small}
			\caption{$\|\theta - \theta_0\|$, first 2000.} \label{subfig:resnet_distance_early}
		\end{subfigure}
		\begin{subfigure}{0.24\textwidth}
			\includegraphics[scale = 0.22]{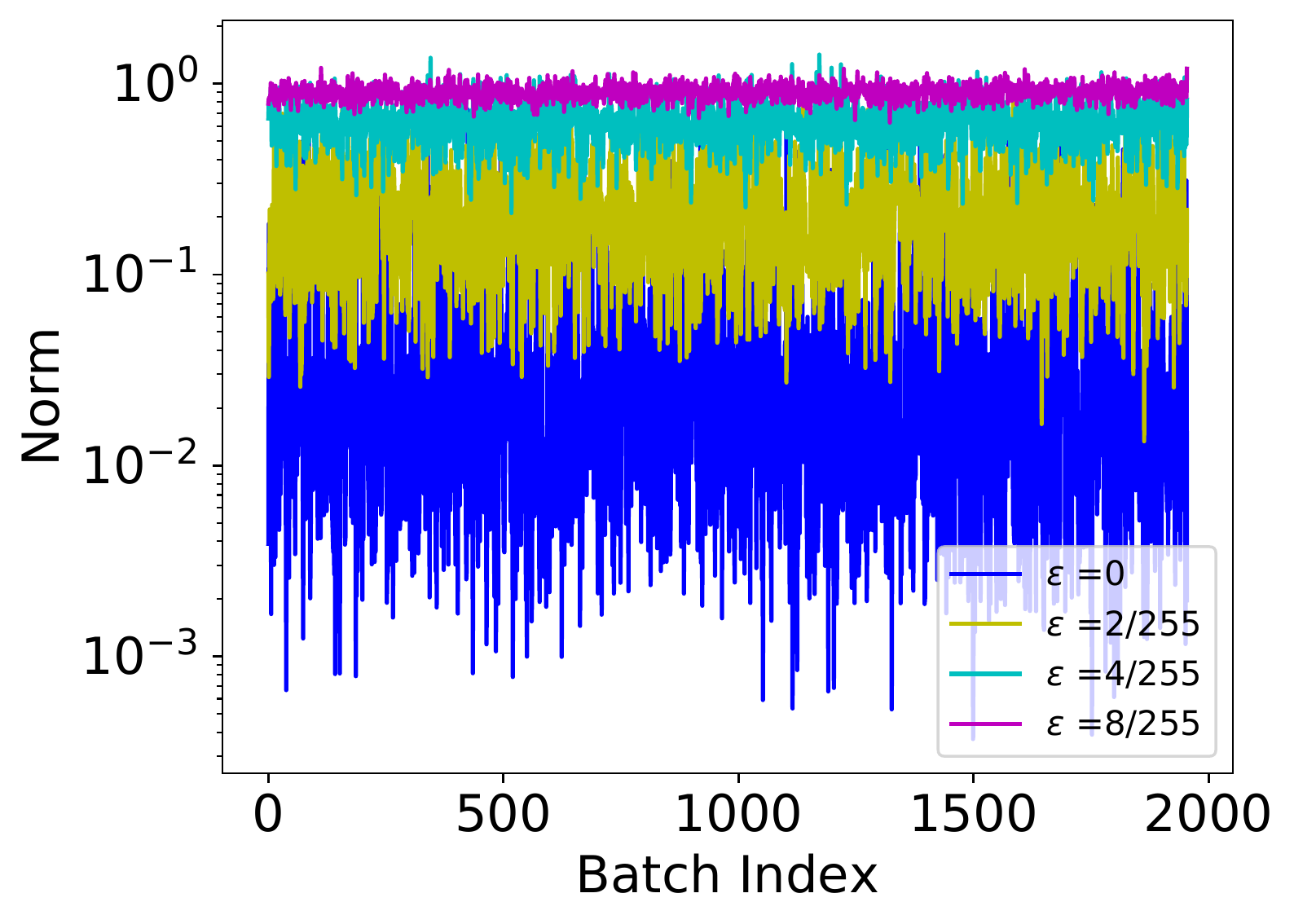}
			\captionsetup{font=small}
			\caption{$\|\triangledown_\theta \widehat{\Loss}_\epsilon(\theta)\|$, last 2000.} \label{subfig:resnet_grad_final}
		\end{subfigure}
	\end{tabular}
	\caption{Norm of the stochastic gradient $\|\triangledown_\theta \widehat{\Loss}_\epsilon(\theta)\|$, robust training error $\Error_\epsilon(\theta)$, and distance from the initial point $\|\theta - \theta_0\|$ during the first or last 2000 mini-batch updates for CIFAR10 models.} \label{fig:analysis_resnet}
\end{figure}

For ReLU networks, adversarially-trained models have been found to have sparser weights and intermediate activations~\cite{croce2018provable}, i.e., they have more dead neurons.
Dead neurons are implicitly favored by adversarial training, because the output is independent of the input perturbation.
Note that training fails when all the neurons in one layer are dead for all training instances.
The model is then effectively broken into two parts by this dead layer: the preceding layers will no longer be trained because the gradients are all blocked; the following layers do not depend on the input and thus give constant outputs.
In essence, training is then stuck in a parameter space that only includes constant classifiers.
In practice, this usually happens when the model has small width and the value of $\epsilon$ is large.


Theorem~\ref{thm:convergence} indicates that the gradients are non-vanishing in adversarial training and more likely to have large magnitude under large values of $\epsilon$.
This is validated by Figure~\ref{subfig:resnet_grad_final}, in which we report the norm of the stochastic gradient $\|\triangledown_\theta \widehat{\Loss}_\epsilon(\theta)\|$ in the last 2000 mini-batch updates for CIFAR10 models.
In vanilla training, the gradient is almost zero in the end, indicating that the optimizer finds a critical point.
In this case $\|\triangledown_\theta \widehat{\Loss}_\epsilon(\theta)\|$ is dominated by the variance introduced by stochasticity.
However, $\|\triangledown_\theta \widehat{\Loss}_\epsilon(\theta)\|$ increases with $\epsilon$.
When $\epsilon$ is larger, $\|\triangledown_\theta \widehat{\Loss}_\epsilon(\theta)\|$ is also 
larger and non-vanishing, indicating that the model is still bouncing around the parameter space at the end of training.

The decreased gradient magnitude in the initial suboptimal region and the increased gradient magnitude in the final near-minimum region indicate that the adversarial loss landscape is not favorable to optimization when we train under large adversarial budgets.
Additional results on MNIST models are provided in Figure~\ref{fig:analysis_lenet} of Appendix~\ref{subsubsec:analysis_lenet}, where the same observations can be made.


\subsection{Hessian Analysis} \label{sec:hessian_eps}

To study the effective local Lipschitz constant of $\Loss_\epsilon(\theta)$, we analyze the Hessian spectrum of models trained under different values of $\epsilon$.
It is known that the curvature in the neighborhood of model parameters is dominated by the top eigenvalues of the Hessian matrix $\triangledown^2 \Loss_\epsilon(\theta)$.
To this end, we use the power iteration method as in~\cite{yao2018hessian} to iteratively estimate the top 20 eigenvalues and the corresponding eigenvectors of the Hessian matrix.
Furthermore, to discard the effect of the scale of function $\Loss_\epsilon(\theta)$ for different $\epsilon$, we estimate the scale of $\Loss_\epsilon(\theta)$ by randomly sampling $\theta$.
We then normalize the top Hessian eigenvalues by the average value of $\Loss_\epsilon(\theta)$ on these random samples.
In addition, we show the learning curve of $\Loss_\epsilon(\theta)$ on the training set during training in Figure~\ref{fig:train_curve} of Appendix~\ref{subsubsec:add_hessian}.
It clearly show similar magnitude of $\Loss_\epsilon(\theta)$ for different values of $\epsilon$.

\begin{figure}[!ht]
	\begin{minipage}{.46\textwidth}
		\centering
		\includegraphics[height = 4cm]{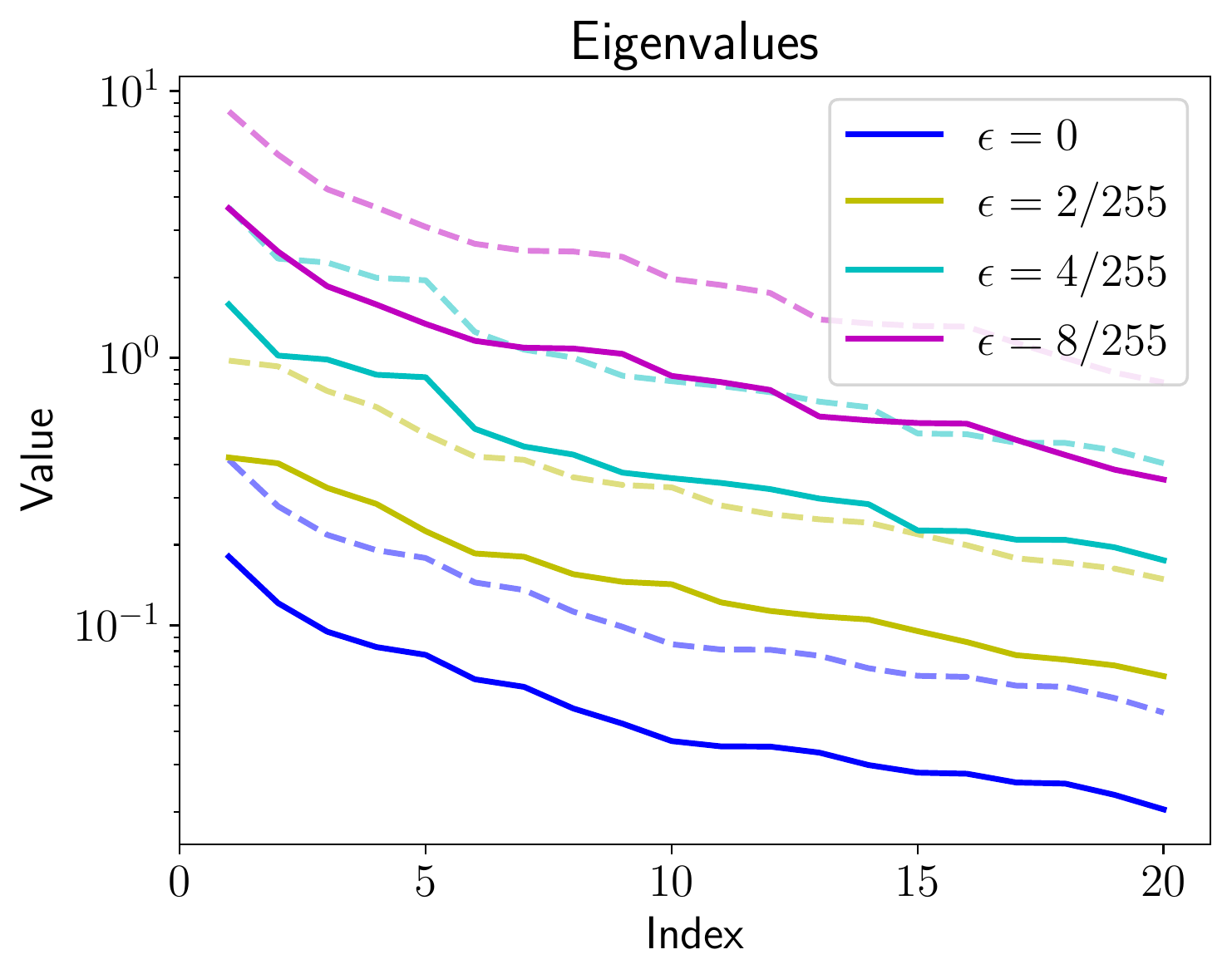}
		\caption{Top 20 eigenvalues of the Hessian matrix for ResNet18 models. We show the normalized (solid) and original (dashed) values.} \label{fig:eigen_eps_cifar10}
	\end{minipage}
	\begin{minipage}{.06\textwidth}
	~~~
	\end{minipage}
	\begin{minipage}{.46\textwidth}
		\centering
		\includegraphics[height = 4cm]{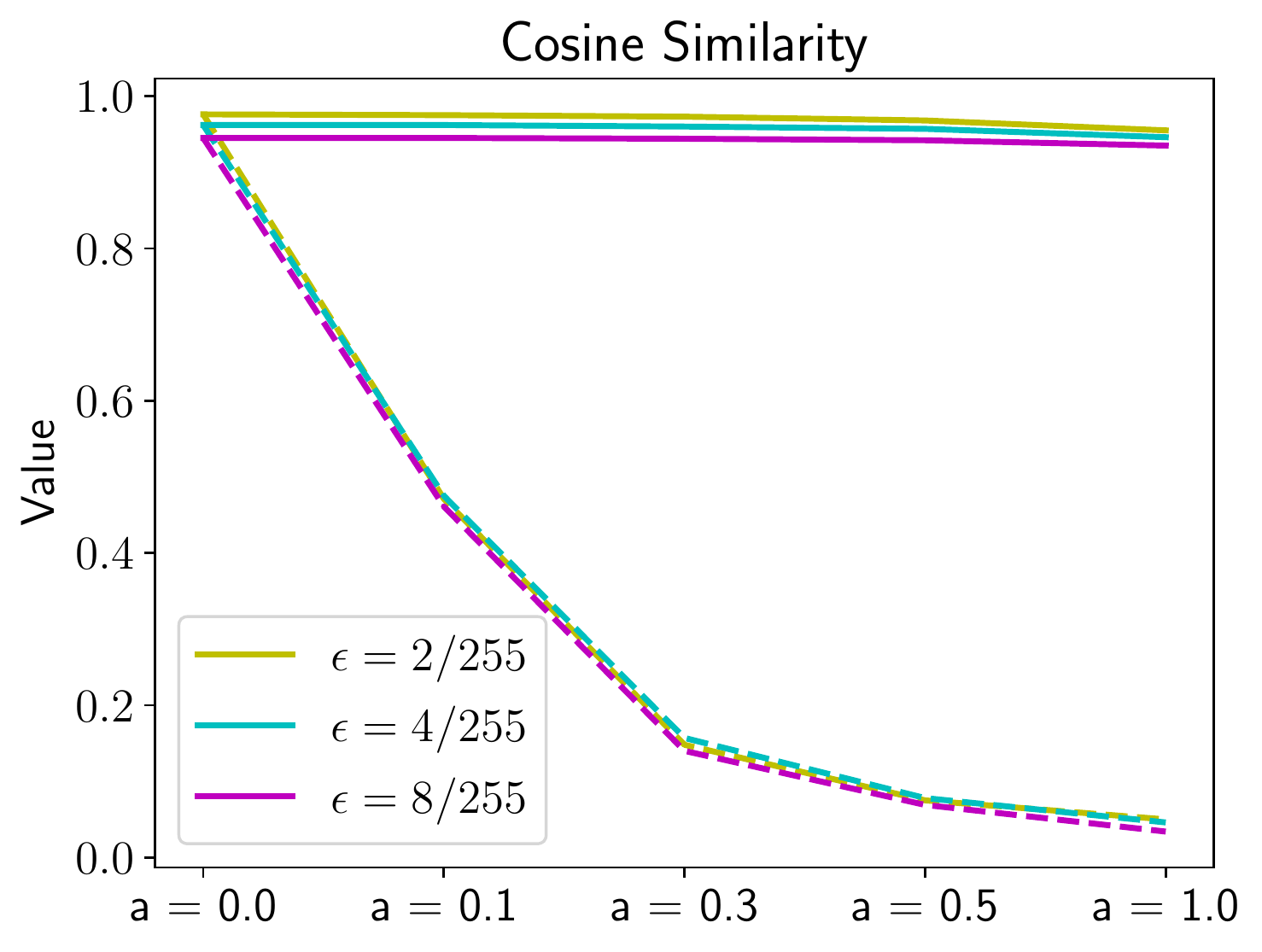}
		\caption{Cosine similarity between perturbations $\x'_{a\vi} - \x$ and $\x'_{-a\vi} - \x$. $\vi$ is either the top eigenvector (dashed) or random (solid).} \label{fig:sim_cifar10}
	\end{minipage}
\end{figure}


In Figure~\ref{fig:eigen_eps_cifar10}, we show the top 20 Hessian eigenvalues, both before and after normalization, of CIFAR10 models under different adversarial budgets.
We also provide 3D visualizations of the neighborhood in the directions of the top 2 eigenvectors in Figure~\ref{fig:hessian_3d} of Appendix~\ref{subsubsec:add_hessian}.
It is clear that the local effective second-order Lipschitz constant of the model obtained consistently increases with the value of $\epsilon$.
That is, the minima found in $\Loss_\epsilon(\theta)$ are sharper under larger $\epsilon$.


To validate the claim in Section~\ref{sec:theory_general} that non-smoothness arises from abrupt changes of the adversarial examples, we study the similarity of adversarial perturbations generated by different model parameter values in a small neighborhood.
Specifically, we perturb the model parameters $\theta$ in opposite directions to $\theta + a\vi$ and $\theta - a\vi$, where $\vi$ is a unit vector and $a$ is a scalar.
Let $\x'_{a\vi}$ and $\x'_{-a\vi}$ represent the adversarial examples generated by the corresponding model parameters.
We then calculate the average cosine similarity between the perturbation $\x'_{a\vi} - \x$ and $\x'_{-a\vi} - \x$ over the training set.

The results on CIFAR10 models are provided in Figure~\ref{fig:sim_cifar10}.
To account for the random start in PGD, we run each experiment $4$ times and report the average value.
The variances of all experiments are smaller than $0.005$ and thus not shown in the figure.
Note that, when $\vi$ is a random unit vector, the robust error $\Error_\epsilon(\theta)$ of the parameters $\theta \pm a \vi$ on both the training and test sets remains unchanged for different values of $a$, indicating a flat landscape in the direction $\vi$.
The adversarial examples in this case are mostly similar and have very high cosine similarity.
By contrast, if $\vi$ is the top eigenvector of the Hessian matrix, i.e., the most curvy direction, then we see a sharp increase in the robust error $\Error_\epsilon(\theta)$ when we increase $a$.
Correspondingly, the cosine similarity between the adversarial perturbations is much lower, which indicates dramatic changes of the adversarial examples.
We perform the same experiments on MNIST models in Appendix~\ref{subsubsec:add_hessian} with the same observations.


\section{Periodic Adversarial Scheduling} \label{sec:pas}

In Sections~\ref{sec:theory} and~\ref{sec:minima}, we have theoretically and empirically shown that the adversarial loss landscape becomes less favorable to optimization under large adversarial budgets.
In this section, we introduce a simple adversarial budget scheduling scheme to overcome these problems.

Inspired by the learning rate warmup heuristic used in deep learning~\cite{gotmare2018closer, huang2017snapshot}, we introduce warmup for the adversarial budget.
Let $d$ be the current epoch index and $D$ be the warmup period's length. We define a cosine scheduler $\epsilon_{cos}$ and a linear scheduler $\epsilon_{lin}$, parameterized by $\epsilon_{max}$ and $\epsilon_{min}$, as
\begin{equation}
	\begin{aligned}
		\epsilon_{cos}(d) = \frac{1}{2}(1 - \cos{\frac{d}{D}\pi})(\epsilon_{max} - \epsilon_{min}) + \epsilon_{min},\
		\epsilon_{lin}(d) = (\epsilon_{max} - \epsilon_{min})\frac{d}{D} + \epsilon_{min}\;.
	\end{aligned} \label{eq:eps_schedule}
\end{equation}

We clip $\epsilon_{cos}(d)$ and $\epsilon_{lin}(d)$ between $0$ and $\epsilon_{target}$, the target value of $\epsilon$.
If $\epsilon_{min} \leq 0$ and $\epsilon_{max} > \epsilon_{target}$, the value of $\epsilon$ starts from $0$, gradually increases to $\epsilon_{target}$ and remains constant then.

This warmup strategy allows us to overcome the fact, highlighted in the previous sections, that adversarial training is more sensitive to the learning rate under a large budget because the gradients are more scattered.
This is evidenced by Figure~\ref{fig:lr_lenet16}, which compares the robust test error of MNIST models relying on different adversarial budget scheduling schemes. For all models, we used $\epsilon = 0.4$, and report results after 100 epochs with different but constant learning rates in Adam~\cite{kingma2014adam}.
Our linear and cosine schedulers perform better than using a constant value of $\epsilon$ during training and yield good performance for a broader range of learning rates: in the small learning rate regime, they speed up training; in the large learning rate regime, they stabilize training and avoid divergence.
Note that, as shown in Appendix~\ref{subsubsec:app_pas}, warmup of the learning rate does not yield similar benefits.
\begin{wrapfigure}{R}{0.45\textwidth}
	\begin{center}
		\includegraphics[scale = 0.35]{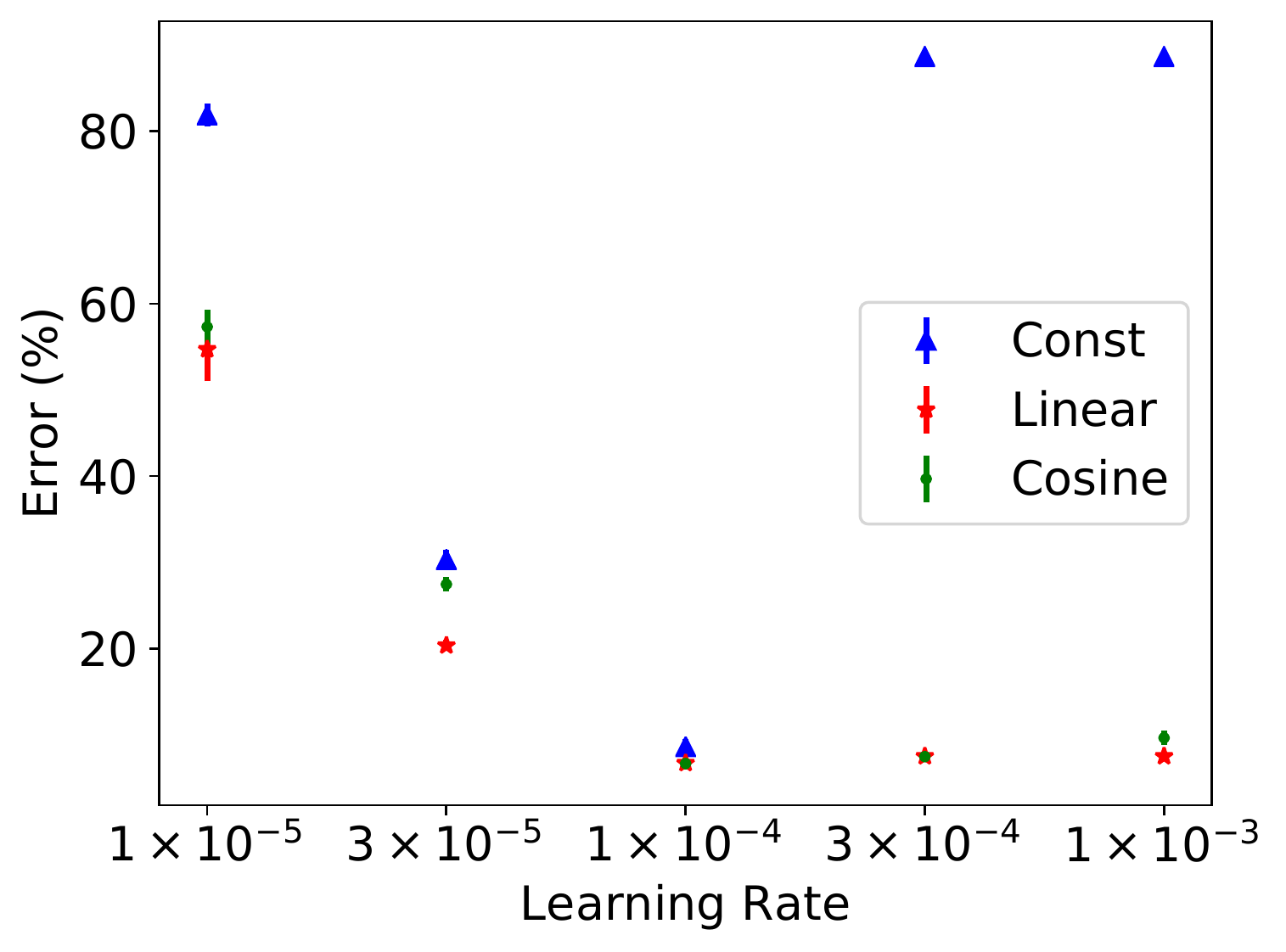}
	\end{center}
	\caption{Mean and standard deviation of the test error under different learning rates with Adam and adversarial budget scheduling.} \label{fig:lr_lenet16}
	\vspace{-0.5cm}
\end{wrapfigure}

As shown in~\cite{huang2017snapshot}, periodic learning rates enable model ensembling to improve the performance.
Here, we can follow the same strategy but also for the adversarial budget.
To this end, we divide the training phase into several periods and store one model at the end of each period.
We make final predictions based on the ensemble of these models.
This periodic scheme has no computational overhead.
We call it periodic adversarial scheduling (PAS). 

As before, we run experiments on MNIST and CIFAR10.
For MNIST, we train each model for 100 epochs and do not use a periodic scheduling for the learning rate, which we found not to improve the results even if we use a constant adversarial budget.
For CIFAR10, we train each model for 200 epochs.
When there are no learning rate resets, our results indicate the final model after $200$ epochs.
When using a periodic learning rate, we divide the $200$ epochs into $3$ periods, i.e., we reset the learning rate and the adversarial budget after $100$ and $150$ epochs, and compute the results using an ensemble of these $3$ models.
The value of learning rate and the adversarial budget size are calculated based on the ratio of the current epoch index to the current period length.
We provide more details about hyper-parameter settings in Appendix~\ref{subsec:hyperparam}.


\begin{table}[ht]
\centering
\scriptsize
\begin{tabular}{|p{1.1cm}<{\centering}|p{1.0cm}<{\centering}|p{1.0cm}<{\centering}|p{1.15cm}<{\centering}|p{1.15cm}<{\centering}|p{1.15cm}<{\centering}|p{1.15cm}<{\centering}|p{1.15cm}<{\centering}|p{1.15cm}<{\centering}|}
\hline
\multirow{3}{*}{Task} & \multirow{3}{*}{\makecell{Periodic \\ Learning \\ Rate}} & \multirow{3}{*}{\makecell{$\epsilon$ \\ Scheduler}} & \multirow{3}{*}{\makecell{Clean Error \\ (\%)}} & \multicolumn{5}{c|}{Robust Error (\%)} \\
\cline{5-9}
& & & & \multirow{2}{*}{\makecell{PGD \\ (\%)}} & \multirow{2}{*}{\makecell{PGD100 \\ (\%)}} & \multirow{2}{*}{\makecell{APGD100 \\ CE (\%)}} & \multirow{2}{*}{\makecell{APGD100 \\ DLR (\%)}} & \multirow{2}{*}{\makecell{Square5K \\ (\%)}} \\
& & & & & & & & \\
\hline
\multirow{3}{*}{\makecell{MNIST \\ LeNet \\ $\epsilon = 0.4$}} & \multirow{3}{*}{No}   & Constant  & $1.56 (17)$ & $8.58 (89)$ & $10.86 (143)$ & $15.18 (155)$ & $14.70 (136)$ & $19.58 (45)$ \\
 & & Cosine & $1.08 (2)$  & $6.64 (70)$ & $8.46 (82)$ & $14.36 (134)$ & $13.46 (129)$ & $16.78 (25)$ \\
 & & Linear & $1.06 (6)$  & $6.69 (59)$ & $8.79 (116)$ & $13.91 (150)$ & $13.17 (120)$ & $17.05 (47)$ \\
\hline
\multirow{6}{*}{\makecell{CIFAR10 \\ VGG \\ $\epsilon = 8 / 255$}} & \multirow{3}{*}{No}
   & Constant  & $28.25 (47)$ & $56.22 (43)$ & $56.19 (32)$ & $58.18 (46)$ & $58.65 (69)$ & $54.37 (29)$ \\
 & & Cosine & $25.06 (19)$ & $56.06 (48)$ & $56.00 (42)$ & $57.83 (45)$ & $58.88 (16)$ & $53.95 (15)$ \\
 & & Linear & $23.56 (95)$ & $56.09 (14)$ & $55.88 (5)$ & $57.74 (16)$ & $58.39 (18)$ & $53.66 (24)$ \\
 \cdashline{2-9}
 & \multirow{3}{*}{Yes}
   & Constant  & $28.33 (81)$ & $54.24 (28)$ & $54.16 (26)$ & $55.45 (26)$ & $56.56 (4)$ & $52.85 (18)$ \\
 & & Cosine & $23.91 (21)$ & $53.18 (21)$ & $53.10 (18)$ & $54.44 (16)$ & $55.80 (24)$ & $51.41 (37)$ \\
 & & Linear & $21.88 (33)$ & $53.03 (14)$ & $52.97 (17)$ & $54.32 (17)$ & $55.63 (17)$ & $51.28 (4)$ \\
\hline
\multirow{6}{*}{\makecell{CIFAR10 \\ ResNet18 \\ $\epsilon = 8 / 255$}} & \multirow{3}{*}{No}
   & Constant  & $18.62 (6)$  & $55.00 (8)$  & $54.97 (9)$  & $57.26 (13)$ & $56.60 (25)$ & $50.59 (19)$ \\
 & & Cosine & $18.43 (26)$ & $53.95 (23)$ & $53.85 (21)$ & $56.16 (18)$ & $55.77 (24)$ & $49.60 (18)$ \\
 & & Linear & $18.55 (14)$ & $53.46 (20)$ & $53.41 (10)$ & $55.69 (17)$ & $55.45 (22)$ & $49.66 (28)$ \\
 \cdashline{2-9}
 & \multirow{3}{*}{Yes}
   & Constant  & $21.00 (5)$  & $48.98 (25)$ & $48.87 (25)$ & $50.29 (27)$ & $50.98 (6)$ & $46.84 (9)$ \\
 & & Cosine & $19.90 (18)$ & $48.57 (25)$ & $48.49 (27)$ & $49.71 (22)$ & $50.54 (9)$ & $46.19 (11)$ \\
 & & Linear & $20.26 (28)$ & $48.60 (13)$ & $48.52 (13)$ & $49.73 (9)$  & $50.68 (11)$ & $46.47 (26)$ \\
\hline
\end{tabular}
\vspace{0.5cm}
\caption{Comparison between different adversarial budget schedulers under different adversarial attacks. \textit{Cosine / Linear schedulers} are consistently better than \textit{constant schedulers}. The number between brackets indicate the standard deviation across different runs. Specifically, for example, $1.56 (17)$ stands for $1.56 \pm 0.17$.} \label{tbl:eps_scheduling_results}
\end{table}

We compare different scheduler in adversarial budget under different tasks and settings.
We evaluate the robustness of our trained models by different kinds of attacks.
First we evaluate the models under the PGD attack used in training (PGD), i.e., 50-iteration PGD for MNIST models and 10-iteration PGD for CIFAR10 models.
Then, we increase the number of iterations in PGD and compute the robust error under 100-iteration PGD.
To solve the issue of suboptimal step size, we also evaluate our models using the state-of-the-art AutoPGD attack~\cite{croce2020reliable}, which searches for the optimal step sizes.
We run AutoPGD for 100 iterations for evaluation, based on either cross-entropy loss (APGD100 CE) or the difference of logit ratio loss (APGD100 DLR).
To avoid gradient masking, we also run the state-of-the-art black-box SquareAttack~\cite{andriushchenko2019square} for 5000 iterations (Square5K).
The hyperparameter details are in Appendix~\ref{subsec:hyperparam}.

The results are summarized in Table~\ref{tbl:eps_scheduling_results}, where we compare the clean and robust accuracy under different adversarial attacks on the test set.
It is clear that our proposed cosine or linear schedulers yield better performance, in both clean accuracy and robust accuracy, than using a constant adversarial budget in all cases.
For MNIST, warmup not only makes training robust to different choices of learning rate, but also improves the final robust accuracy.
For CIFAR10, model ensembling enabled by the periodic scheduler improves the robust accuracy.



\section{Discussion}

\paragraph{Model capacity.} In addition to the size of the adversarial budget, the capacity of the model also greatly affects the adversarial loss landscape and thus the performance of adversarial training.
Adversarial training needs higher model capacity in two aspects: if we decrease the model capacity, adversarial training will fail to converge while vanilla training still works~\cite{madry2017towards}; if we increase the model capacity, the robust accuracy of adversarial training continues to rise while the clean accuracy of normal training saturates~\cite{xie2020intriguing}.
Furthermore, we show in Appendix~\ref{subsubsec:capacity_app} that smaller models are more likely to have dead layers because of their lower dimensionality.
As a result, warmup in adversarial budget is also necessary for small models.
In many cases, the parameter space of small models has good minima in terms of robustness, but adversarial training with a constant value of $\epsilon$ fails to find them.
For example, one can obtain small but robust models by pruning large ones~\cite{gui2019model,ye2019adversarial}.


\paragraph{Architecture.} The network architecture encodes the parameterization of the model, so it greatly affects the adversarial loss landscape.
For example, in Table~\ref{tbl:eps_scheduling_results}, ResNet18 has fewer trainable parameters but better performance than VGG on CIFAR10, indicating  that ResNet18 has a better parameterization in terms of robustness.
Since the optimal architecture for adversarial robustness is not necessarily the same as the one for clean accuracy, we believe that finding architectures inherently favorable to adversarial training is an interesting but challenging topic for future research.


\paragraph{Connectivity of minima.} Local minima in the vanilla loss landscape are well-connected~\cite{draxler2018essentially, garipov2018loss}: there exist flat hyper curves connecting them.
In Appendix~\ref{subsec:connect}, we study the connectivity of converged model parameters in the adversarial setting.
We find that the parameters of two adversarially trained models are less connected in the adversarial loss landscape than in the vanilla setting.
That is, the path connecting them needs to go over suboptimal regions.

\paragraph{Adversarial example generation} We approximate the adversarial loss using adversarial examples generated by PGD, which is a good estimate of the inner maximization in (\ref{eq:adv_problem}).
PGD-based adversarial training updates model parameters by near-optimal adversarial examples.
However, recent works~\cite{shafahi2019adversarial, wong2020fast} have shown that robust models can also be trained by suboptimal adversarial examples, which are faster to obtain.
The formulation of these methods differs from (\ref{eq:adv_problem}), because the inner maximization problem is not approximately solved.
Understanding why models (partially) trained on suboptimal adversarial examples are resistant to stronger adversarial examples needs more investigation.


\section{Conclusion}

We have studied the properties of the loss landscape under adversarial training.
We have shown that the adversarial loss landscape is non-smooth and not favorable to optimization, due to the dependency of adversarial examples on the model parameters.
Furthermore, we have empirically evidenced that large adversarial budgets slow down training in the early stages and impedes convergence in the end.
Finally, we have demonstrated the advantages of warmup and periodic scheduling of the adversarial budget size during training.
They make training more robust to different choices of learning rate and yield better performance than vanilla adversarial training.

\section{Broader Impact}

The existence of adversarial examples has raised serious concerns about the deployment of deep learning models in safety-sensitive domains, such as medical imaging~\cite{ma2020understanding} and autonomous navigation~\cite{akhtar2018threat}.
In these domains, as in many others, adversarial training remains the most popular, effective, and general method to train robust models.
By studying the nature of optimization in adversarial training and proposing solutions to overcome the underlying challenges, our work has potential for high societal impact in these fields.
Although the robust accuracy is much lower than the clean accuracy so far, the intrinsic properties of adversarial training we have discovered open up future research directions to improve its performance.
From an ecological perspective, however, we acknowledge that the higher computational cost of adversarial training translates to higher carbon footprint than vanilla training.
Nevertheless, we believe that the potential societal benefits of robustness to attacks outweigh this drawback.


\section{Acknowledgements}

We thankfully acknowledge the support of the Hasler Foundation (Grant No. 16076) for this work.

\bibliographystyle{plain}
\bibliography{main}

\begin{thebibliography}{10}

\bibitem{akhtar2018threat}
Naveed Akhtar and Ajmal Mian.
\newblock Threat of adversarial attacks on deep learning in computer vision: A
  survey.
\newblock {\em IEEE Access}, 6:14410--14430, 2018.

\bibitem{alayrac2019labels}
Jean-Baptiste Alayrac, Jonathan Uesato, Po-Sen Huang, Alhussein Fawzi, Robert
  Stanforth, and Pushmeet Kohli.
\newblock Are labels required for improving adversarial robustness?
\newblock In {\em Advances in Neural Information Processing Systems}, pages
  12192--12202, 2019.

\bibitem{andriushchenko2019square}
Maksym Andriushchenko, Francesco Croce, Nicolas Flammarion, and Matthias Hein.
\newblock Square attack: a query-efficient black-box adversarial attack via
  random search.
\newblock {\em arXiv preprint arXiv:1912.00049}, 2019.

\bibitem{athalye2018obfuscated}
Anish Athalye, Nicholas Carlini, and David Wagner.
\newblock Obfuscated gradients give a false sense of security: Circumventing
  defenses to adversarial examples.
\newblock {\em arXiv preprint arXiv:1802.00420}, 2018.

\bibitem{balunovic2020adversarial}
Mislav Balunovic and Martin Vechev.
\newblock Adversarial training and provable defenses: Bridging the gap.
\newblock In {\em International Conference on Learning Representations}, 2020.

\bibitem{carmon2019unlabeled}
Yair Carmon, Aditi Raghunathan, Ludwig Schmidt, John~C Duchi, and Percy~S
  Liang.
\newblock Unlabeled data improves adversarial robustness.
\newblock In {\em Advances in Neural Information Processing Systems}, pages
  11190--11201, 2019.

\bibitem{clevert2015fast}
Djork-Arn{\'e} Clevert, Thomas Unterthiner, and Sepp Hochreiter.
\newblock Fast and accurate deep network learning by exponential linear units
  (elus).
\newblock {\em arXiv preprint arXiv:1511.07289}, 2015.

\bibitem{cohen2019certified}
Jeremy~M Cohen, Elan Rosenfeld, and J~Zico Kolter.
\newblock Certified adversarial robustness via randomized smoothing.
\newblock {\em arXiv preprint arXiv:1902.02918}, 2019.

\bibitem{croce2018provable}
Francesco Croce, Maksym Andriushchenko, and Matthias Hein.
\newblock Provable robustness of relu networks via maximization of linear
  regions.
\newblock {\em arXiv preprint arXiv:1810.07481}, 2018.

\bibitem{croce2020reliable}
Francesco Croce and Matthias Hein.
\newblock Reliable evaluation of adversarial robustness with an ensemble of
  diverse parameter-free attacks.
\newblock {\em arXiv preprint arXiv:2003.01690}, 2020.

\bibitem{dinh2017sharp}
Laurent Dinh, Razvan Pascanu, Samy Bengio, and Yoshua Bengio.
\newblock Sharp minima can generalize for deep nets.
\newblock In {\em Proceedings of the 34th International Conference on Machine
  Learning-Volume 70}, pages 1019--1028. JMLR. org, 2017.

\bibitem{draxler2018essentially}
Felix Draxler, Kambis Veschgini, Manfred Salmhofer, and Fred~A Hamprecht.
\newblock Essentially no barriers in neural network energy landscape.
\newblock {\em arXiv preprint arXiv:1803.00885}, 2018.

\bibitem{fort2019large}
Stanislav Fort and Stanislaw Jastrzebski.
\newblock Large scale structure of neural network loss landscapes.
\newblock {\em arXiv preprint arXiv:1906.04724}, 2019.

\bibitem{garipov2018loss}
Timur Garipov, Pavel Izmailov, Dmitrii Podoprikhin, Dmitry~P Vetrov, and
  Andrew~G Wilson.
\newblock Loss surfaces, mode connectivity, and fast ensembling of dnns.
\newblock In {\em Advances in Neural Information Processing Systems}, pages
  8789--8798, 2018.

\bibitem{geiger2019jamming}
Mario Geiger, Stefano Spigler, St{\'e}phane d'Ascoli, Levent Sagun, Marco
  Baity-Jesi, Giulio Biroli, and Matthieu Wyart.
\newblock Jamming transition as a paradigm to understand the loss landscape of
  deep neural networks.
\newblock {\em Physical Review E}, 100(1):012115, 2019.

\bibitem{ghorbani2019investigation}
Behrooz Ghorbani, Shankar Krishnan, and Ying Xiao.
\newblock An investigation into neural net optimization via hessian eigenvalue
  density.
\newblock {\em arXiv preprint arXiv:1901.10159}, 2019.

\bibitem{glorot2010understanding}
Xavier Glorot and Yoshua Bengio.
\newblock Understanding the difficulty of training deep feedforward neural
  networks.
\newblock In {\em Proceedings of the thirteenth international conference on
  artificial intelligence and statistics}, pages 249--256, 2010.

\bibitem{goodfellow2014explaining}
Ian~J Goodfellow, Jonathon Shlens, and Christian Szegedy.
\newblock Explaining and harnessing adversarial examples.
\newblock {\em arXiv preprint arXiv:1412.6572}, 2014.

\bibitem{gotmare2018closer}
Akhilesh Gotmare, Nitish~Shirish Keskar, Caiming Xiong, and Richard Socher.
\newblock A closer look at deep learning heuristics: Learning rate restarts,
  warmup and distillation.
\newblock {\em arXiv preprint arXiv:1810.13243}, 2018.

\bibitem{gowal2018effectiveness}
Sven Gowal, Krishnamurthy Dvijotham, Robert Stanforth, Rudy Bunel, Chongli Qin,
  Jonathan Uesato, Timothy Mann, and Pushmeet Kohli.
\newblock On the effectiveness of interval bound propagation for training
  verifiably robust models.
\newblock {\em arXiv preprint arXiv:1810.12715}, 2018.

\bibitem{gui2019model}
Shupeng Gui, Haotao~N Wang, Haichuan Yang, Chen Yu, Zhangyang Wang, and Ji~Liu.
\newblock Model compression with adversarial robustness: A unified optimization
  framework.
\newblock In {\em Advances in Neural Information Processing Systems}, pages
  1283--1294, 2019.

\bibitem{he2015delving}
Kaiming He, Xiangyu Zhang, Shaoqing Ren, and Jian Sun.
\newblock Delving deep into rectifiers: Surpassing human-level performance on
  imagenet classification.
\newblock In {\em Proceedings of the IEEE international conference on computer
  vision}, pages 1026--1034, 2015.

\bibitem{he2016deep}
Kaiming He, Xiangyu Zhang, Shaoqing Ren, and Jian Sun.
\newblock Deep residual learning for image recognition.
\newblock In {\em Proceedings of the IEEE conference on computer vision and
  pattern recognition}, pages 770--778, 2016.

\bibitem{hendrycks2019using}
Dan Hendrycks, Kimin Lee, and Mantas Mazeika.
\newblock Using pre-training can improve model robustness and uncertainty.
\newblock {\em arXiv preprint arXiv:1901.09960}, 2019.

\bibitem{huang2017snapshot}
Gao Huang, Yixuan Li, Geoff Pleiss, Zhuang Liu, John~E Hopcroft, and Kilian~Q
  Weinberger.
\newblock Snapshot ensembles: Train 1, get m for free.
\newblock {\em arXiv preprint arXiv:1704.00109}, 2017.

\bibitem{kawaguchi2016deep}
Kenji Kawaguchi.
\newblock Deep learning without poor local minima.
\newblock In {\em Advances in neural information processing systems}, pages
  586--594, 2016.

\bibitem{keskar2016large}
Nitish~Shirish Keskar, Dheevatsa Mudigere, Jorge Nocedal, Mikhail Smelyanskiy,
  and Ping Tak~Peter Tang.
\newblock On large-batch training for deep learning: Generalization gap and
  sharp minima.
\newblock {\em arXiv preprint arXiv:1609.04836}, 2016.

\bibitem{kingma2014adam}
Diederik~P Kingma and Jimmy Ba.
\newblock Adam: A method for stochastic optimization.
\newblock {\em arXiv preprint arXiv:1412.6980}, 2014.

\bibitem{kolter2017provable}
J~Zico Kolter and Eric Wong.
\newblock Provable defenses against adversarial examples via the convex outer
  adversarial polytope.
\newblock {\em arXiv preprint arXiv:1711.00851}, 1(2):3, 2017.

\bibitem{lee2016gradient}
Jason~D Lee, Max Simchowitz, Michael~I Jordan, and Benjamin Recht.
\newblock Gradient descent converges to minimizers.
\newblock {\em arXiv preprint arXiv:1602.04915}, 2016.

\bibitem{li2018visualizing}
Hao Li, Zheng Xu, Gavin Taylor, Christoph Studer, and Tom Goldstein.
\newblock Visualizing the loss landscape of neural nets.
\newblock In {\em Advances in Neural Information Processing Systems}, pages
  6389--6399, 2018.

\bibitem{ma2020understanding}
Xingjun Ma, Yuhao Niu, Lin Gu, Yisen Wang, Yitian Zhao, James Bailey, and Feng
  Lu.
\newblock Understanding adversarial attacks on deep learning based medical
  image analysis systems.
\newblock {\em Pattern Recognition}, page 107332, 2020.

\bibitem{madry2017towards}
Aleksander Madry, Aleksandar Makelov, Ludwig Schmidt, Dimitris Tsipras, and
  Adrian Vladu.
\newblock Towards deep learning models resistant to adversarial attacks.
\newblock {\em arXiv preprint arXiv:1706.06083}, 2017.

\bibitem{moosavi2017universal}
Seyed-Mohsen Moosavi-Dezfooli, Alhussein Fawzi, Omar Fawzi, and Pascal
  Frossard.
\newblock Universal adversarial perturbations.
\newblock In {\em Proceedings of the IEEE conference on computer vision and
  pattern recognition}, pages 1765--1773, 2017.

\bibitem{novak2018sensitivity}
Roman Novak, Yasaman Bahri, Daniel~A. Abolafia, Jeffrey Pennington, and Jascha
  Sohl-Dickstein.
\newblock Sensitivity and generalization in neural networks: an empirical
  study.
\newblock In {\em International Conference on Learning Representations}, 2018.

\bibitem{raghunathan2018certified}
Aditi Raghunathan, Jacob Steinhardt, and Percy Liang.
\newblock Certified defenses against adversarial examples.
\newblock {\em arXiv preprint arXiv:1801.09344}, 2018.

\bibitem{rice2020overfitting}
Leslie Rice, Eric Wong, and J~Zico Kolter.
\newblock Overfitting in adversarially robust deep learning.
\newblock {\em arXiv preprint arXiv:2002.11569}, 2020.

\bibitem{safran2016quality}
Itay Safran and Ohad Shamir.
\newblock On the quality of the initial basin in overspecified neural networks.
\newblock In {\em International Conference on Machine Learning}, pages
  774--782, 2016.

\bibitem{salman2019provably}
Hadi Salman, Greg Yang, Jerry Li, Pengchuan Zhang, Huan Zhang, Ilya
  Razenshteyn, and Sebastien Bubeck.
\newblock Provably robust deep learning via adversarially trained smoothed
  classifiers.
\newblock {\em arXiv preprint arXiv:1906.04584}, 2019.

\bibitem{schmidt2018adversarially}
Ludwig Schmidt, Shibani Santurkar, Dimitris Tsipras, Kunal Talwar, and
  Aleksander Madry.
\newblock Adversarially robust generalization requires more data.
\newblock In {\em Advances in Neural Information Processing Systems}, pages
  5014--5026, 2018.

\bibitem{shafahi2019adversarial}
Ali Shafahi, Mahyar Najibi, Amin Ghiasi, Zheng Xu, John Dickerson, Christoph
  Studer, Larry~S Davis, Gavin Taylor, and Tom Goldstein.
\newblock Adversarial training for free!
\newblock {\em arXiv preprint arXiv:1904.12843}, 2019.

\bibitem{sinha2019harnessing}
Abhishek Sinha, Mayank Singh, Nupur Kumari, Balaji Krishnamurthy, Harshitha
  Machiraju, and Vineeth~N Balasubramanian.
\newblock Harnessing the vulnerability of latent layers in adversarially
  trained models.
\newblock {\em arXiv preprint arXiv:1905.05186}, 2019.

\bibitem{sinha2018certifiable}
Aman Sinha, Hongseok Namkoong, and John Duchi.
\newblock Certifiable distributional robustness with principled adversarial
  training.
\newblock In {\em International Conference on Learning Representations}, 2018.

\bibitem{skorokhodov2019loss}
Ivan Skorokhodov and Mikhail Burtsev.
\newblock Loss surface sightseeing by multi-point optimization.
\newblock {\em arXiv preprint arXiv:1910.03867}, 2019.

\bibitem{szegedy2013intriguing}
Christian Szegedy, Wojciech Zaremba, Ilya Sutskever, Joan Bruna, Dumitru Erhan,
  Ian Goodfellow, and Rob Fergus.
\newblock Intriguing properties of neural networks.
\newblock {\em arXiv preprint arXiv:1312.6199}, 2013.

\bibitem{wong2020fast}
Eric Wong, Leslie Rice, and J.~Zico Kolter.
\newblock Fast is better than free: Revisiting adversarial training.
\newblock In {\em International Conference on Learning Representations}, 2020.

\bibitem{wong2018scaling}
Eric Wong, Frank Schmidt, Jan~Hendrik Metzen, and J~Zico Kolter.
\newblock Scaling provable adversarial defenses.
\newblock In {\em Advances in Neural Information Processing Systems}, pages
  8410--8419, 2018.

\bibitem{xie2019intriguing}
Cihang Xie and Alan Yuille.
\newblock Intriguing properties of adversarial training.
\newblock {\em arXiv preprint arXiv:1906.03787}, 2019.

\bibitem{xie2020intriguing}
Cihang Xie and Alan Yuille.
\newblock Intriguing properties of adversarial training at scale.
\newblock In {\em International Conference on Learning Representations}, 2020.

\bibitem{yao2018hessian}
Zhewei Yao, Amir Gholami, Qi~Lei, Kurt Keutzer, and Michael~W Mahoney.
\newblock Hessian-based analysis of large batch training and robustness to
  adversaries.
\newblock In {\em Advances in Neural Information Processing Systems}, pages
  4949--4959, 2018.

\bibitem{ye2019adversarial}
Shaokai Ye, Kaidi Xu, Sijia Liu, Hao Cheng, Jan-Henrik Lambrechts, Huan Zhang,
  Aojun Zhou, Kaisheng Ma, Yanzhi Wang, and Xue Lin.
\newblock Adversarial robustness vs. model compression, or both.
\newblock In {\em The IEEE International Conference on Computer Vision (ICCV)},
  volume~2, 2019.

\bibitem{zhang2019you}
Dinghuai Zhang, Tianyuan Zhang, Yiping Lu, Zhanxing Zhu, and Bin Dong.
\newblock You only propagate once: Accelerating adversarial training via
  maximal principle.
\newblock In {\em Advances in Neural Information Processing Systems}, pages
  227--238, 2019.

\bibitem{zhang2019towards}
Huan Zhang, Hongge Chen, Chaowei Xiao, Bo~Li, Duane Boning, and Cho-Jui Hsieh.
\newblock Towards stable and efficient training of verifiably robust neural
  networks.
\newblock {\em arXiv preprint arXiv:1906.06316}, 2019.

\bibitem{zhao2020Bridging}
Pu~Zhao, Pin-Yu Chen, Payel Das, Karthikeyan~Natesan Ramamurthy, and Xue Lin.
\newblock Bridging mode connectivity in loss landscapes and adversarial
  robustness.
\newblock In {\em International Conference on Learning Representations}, 2020.

\end{thebibliography}

\newpage

\appendix


\section{Theoretical Analysis}

\subsection{Binary Logistic Regression} \label{subsec:logistic}

In this section, we discuss binary logistic regression.
In this case, $K = 2$ and the logit function is $f(\w) = \left[\w^T\x, -\w^T\x\right]$, where $\w \in \R^m$ is the only trainable parameter.
If we use $+1$ and $-1$ to label both classes, then the overall loss function for a dataset $\{(\x_i, y_i)\}_{i = 1}^N$ is $\Loss(\w) = \frac{1}{N}\sum_{i = 1}^N \log(1 + e^{-y_i\w^T\x_i})$.
Under the adversarial budget $\Set^{(p)}_\epsilon(\x)$, the corresponding adversarial loss function is $\Loss_\epsilon(\w) = \frac{1}{N} \sum_{i = 1}^N \log(1 + e^{-y_i\w^T\x_i + \epsilon\|\w\|_q})$, where $l_q$ is the dual norm of $l_p$.
Since the magnitude of $\w$ does not change the results of the classifier, we can assume $\|\w\|_q = 1$ without loss of generality.
As a result, the adversarial loss function is
\begin{equation}
\begin{aligned}
\Loss_\epsilon(\w) = \frac{1}{N} \sum_{i = 1}^N \log(1 + e^{-y_i\w^T\x_i + \epsilon})\;.
\end{aligned} \label{eq:adv_loss_logistic}
\end{equation}

The following theorem describes the properties of $\Loss_\epsilon(\w)$ for different values of $\epsilon$.

\begin{theory} \label{thm:logistic}
If the dataset $\{(\x_i, y_i)\}_{i = 1}^N$ is linearly separable under the adversarial budget $\Set_{\widehat{\epsilon}}(\x)$, then for any unit vector $\m \in \R^m$ and values $\epsilon_1$, $\epsilon_2$ such that $\epsilon_1 \leq \epsilon_2 \leq \widehat{\epsilon}$, we have $\m^T\triangledown^2_\w \Loss_{\epsilon_1}(\w)\m \leq \m^T\triangledown^2_\w \Loss_{\epsilon_2}(\w)\m$. More specifically, both the largest and smallest eigenvalue of $\triangledown^2_\w \Loss_{\epsilon_1}(\w)$ are no greater than those of $\triangledown^2_\w \Loss_{\epsilon_2}(\w)$.
\end{theory}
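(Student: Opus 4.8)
The plan is to compute the Hessian of $\Loss_\epsilon(\w)$ in closed form and then reduce the whole statement to a one-dimensional monotonicity fact about the logistic derivative. Fix the (normalized) separating direction $\w$ supplied by the hypothesis, write $\sigma(t) \defeq 1/(1 + e^{-t})$ for the logistic sigmoid, and set $u_i(\epsilon) \defeq -y_i\w^T\x_i + \epsilon$, so that $\Loss_\epsilon(\w) = \frac{1}{N}\sum_i \log(1 + e^{u_i(\epsilon)})$. Differentiating twice in $\w$, using $\triangledown_\w u_i = -y_i\x_i$ and $y_i^2 = 1$, I obtain
\[
\triangledown^2_\w \Loss_\epsilon(\w) = \frac{1}{N}\sum_{i = 1}^N \sigma'(u_i(\epsilon))\,\x_i\x_i^T, \qquad \sigma'(t) = \sigma(t)(1 - \sigma(t)) \ge 0 .
\]
Contracting with an arbitrary unit vector $\m$ gives $\m^T\triangledown^2_\w\Loss_\epsilon(\w)\m = \frac{1}{N}\sum_i \sigma'(u_i(\epsilon))(\m^T\x_i)^2$. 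Since $(\m^T\x_i)^2 \ge 0$, the quadratic-form inequality will follow term by term once I show $\sigma'(u_i(\epsilon_1)) \le \sigma'(u_i(\epsilon_2))$ for each $i$.

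The second step is to extract the correct sign of $u_i$ from the separability assumption. With $\|\w\|_q = 1$, the worst-case adversarial example reduces the margin of point $i$ by exactly $\epsilon\|\w\|_q = \epsilon$, so linear separability under $\Set_{\widehat{\epsilon}}(\x)$ means $y_i\w^T\x_i \ge \widehat{\epsilon}$ for all $i$. Hence for every $\epsilon \le \widehat{\epsilon}$ we have $u_i(\epsilon) = -y_i\w^T\x_i + \epsilon \le -\widehat{\epsilon} + \epsilon \le 0$. The remaining input is the elementary observation that $\sigma'$ is even about $0$ and strictly increasing on $(-\infty, 0]$, since $\sigma''(t) = \sigma'(t)(1 - 2\sigma(t)) \ge 0$ for $t \le 0$. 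Because $u_i(\epsilon_1) \le u_i(\epsilon_2) \le 0$ for $\epsilon_1 \le \epsilon_2 \le \widehat{\epsilon}$, monotonicity gives $\sigma'(u_i(\epsilon_1)) \le \sigma'(u_i(\epsilon_2))$, establishing $\m^T\triangledown^2_\w\Loss_{\epsilon_1}(\w)\m \le \m^T\triangledown^2_\w\Loss_{\epsilon_2}(\w)\m$ for all $\m$.

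For the eigenvalue claim I would upgrade the pointwise inequality to a matrix inequality: since it holds for every $\m$, the difference $\triangledown^2_\w\Loss_{\epsilon_2}(\w) - \triangledown^2_\w\Loss_{\epsilon_1}(\w)$ is positive semidefinite, i.e. $\triangledown^2_\w\Loss_{\epsilon_1}(\w) \preceq \triangledown^2_\w\Loss_{\epsilon_2}(\w)$. Weyl's monotonicity theorem (equivalently, the Courant–Fischer min–max characterization) then yields $\lambda_k(\triangledown^2_\w\Loss_{\epsilon_1}(\w)) \le \lambda_k(\triangledown^2_\w\Loss_{\epsilon_2}(\w))$ for every index $k$ under a common ordering of the eigenvalues; specializing to the top and bottom of the spectrum gives the stated comparison of largest and smallest eigenvalues.

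The computation is routine, so the only genuine content is the second step: recognizing that separability under budget $\widehat{\epsilon}$ is exactly the condition that forces $u_i(\epsilon) \le 0$, placing every sample on the increasing branch of $\sigma'$. This is also where the hypothesis $\epsilon_2 \le \widehat{\epsilon}$ is indispensable — if some $u_i$ were positive we would sit on the decreasing branch of $\sigma'$ and both the term-wise monotonicity and the theorem itself would break. I would therefore treat verifying the sign condition (and the accompanying $\|\w\|_q = 1$ margin bookkeeping) as the main point to get right; the only convention to double-check is whether separability is strict, which affects nothing since the asserted inequalities are non-strict.
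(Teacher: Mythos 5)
Your proof is correct and follows essentially the same route as the paper's: both compute the Hessian as $\frac{1}{N}\sum_i c_i(\epsilon)\,\x_i\x_i^T$, use the robust-separability margin $y_i\w^T\x_i \ge \widehat{\epsilon}$ to show each scalar coefficient $c_i(\epsilon)$ increases with $\epsilon$ on $[0,\widehat{\epsilon}]$ (your statement that $\sigma'$ is even and increasing on $(-\infty,0]$ is the same fact the paper states as monotonicity of $e^{t}/(1+e^{t})^2$ for $t\ge 0$), and then pass from termwise coefficient monotonicity to quadratic forms and extreme eigenvalues. The only cosmetic difference is that you cite Weyl's monotonicity theorem for the eigenvalue comparison, whereas the paper proves the equivalent statement as a standalone lemma via the Courant--Fischer variational characterization.
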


We provide the proof of Theorem~\ref{thm:logistic} in Appendix~\ref{sub:app_proof_logistic}.
Since $\m^T\triangledown^2_\w \Loss_\epsilon(\w)\m$ is the curvature of $\Loss_\epsilon(\w)$ in the direction of $\m$, Theorem~\ref{thm:logistic} shows that the curvature of $\Loss_\epsilon(\w)$ increases with $\epsilon$ in any direction if the whole dataset is linearly separable.
For an individual data point $\x$, if $\forall \x' \in \Set_{\widehat{\epsilon}}(\x)$, $\x'$ is correctly classified, then the curvature of $g_\epsilon(\x, \w)$ also increases with $\epsilon$ in any direction as long as $\epsilon \leq \widehat{\epsilon}$.
The assumption for an individual point here is much weaker than the one in Theorem~\ref{thm:logistic}.
If the overwhelming majority of the data points are correctly classified under the adversarial budget $\Set_{\widehat{\epsilon}}$, the conclusion still holds in practice.

\subsection{Discussions of ReLU Networks} \label{subsec:relu}

Unlike sigmoid or tanh, ReLU is not a smooth function.
However, it is smooth \textit{almost everywhere}, except at $0$.
As a result, we can make the following assumptions for the function $g$ represented by a ReLU network.

\begin{assumption} \label{assume:relu_lip}
The function $g$ satisfies the following conditions:
\begin{equation}
\begin{aligned}
\|g(\x, \theta_1) - g(\x, \theta_2)\| &\leq L_\theta \|\theta_1 - \theta_2\|\;, \\
\|\triangledown_{\theta} g(\x, \theta_1) - \triangledown_{\theta} g(\x, \theta_2)\| &\leq L_{\theta\theta} \|\theta_1 - \theta_2\| + D_{\theta\theta}\;, \\
\|\triangledown_{\theta} g(\x_1, \theta) - \triangledown_{\theta} g(\x_2, \theta)\| &\leq L_{\theta\x} \|\x_1 - \x_2\|_p + D_{\theta\x}\;.
\end{aligned}
\end{equation}
\end{assumption}

We adjust the second-order smoothness assumption by adding two constants $D_{\theta\theta}$, $D_{\theta\x}$.
They are the upper bound of the gradient difference in the neighborhood of non-smooth points.
Therefore, they measures how abruptly the (sub)gradients can change in a sufficiently small region in the parameter space and can be considered as a quantitative measure of \textit{gradient scattering}.

The following corollary states the properties of $g_\epsilon$ under Assumption~\ref{assume:relu_lip}.

\begin{corollary} \label{coro:lip}
If Assumption~\ref{assume:relu_lip} is satisfied, then we have

\begin{equation}
\begin{aligned}
\|\Loss_\epsilon(\theta_1) - \Loss_\epsilon(\theta_2)\| &\leq L_\theta \|\theta_1 - \theta_2\| \\
\|\triangledown_\theta \Loss_\epsilon(\theta_1) - \triangledown_\theta \Loss_\epsilon(\theta_2)\| &\leq L_{\theta\theta} \|\theta_1 - \theta_2\| + 2\epsilon L_{\theta\x} + D_{\theta\theta} + D_{\theta\x}\;.
\end{aligned} \label{eq:gen_lip}
\end{equation}
\end{corollary}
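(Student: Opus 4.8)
The plan is to follow the proof of Proposition~\ref{prop:lip} essentially verbatim, observing that Assumption~\ref{assume:relu_lip} differs from Assumption~\ref{assume:lip} only by the two additive constants $D_{\theta\theta}$ and $D_{\theta\x}$ appended to the second- and third-order conditions; these constants will simply propagate additively through the same chain of inequalities. The first inequality of~(\ref{eq:gen_lip}) requires no change at all, since its derivation invokes only the first Lipschitz condition $\|g(\x,\theta_1) - g(\x,\theta_2)\| \leq L_\theta\|\theta_1 - \theta_2\|$, which is identical in both assumptions. Concretely, for each data point one takes the maximizer $\x'_1$ defining $g_\epsilon(\x,\theta_1)$, writes $g_\epsilon(\x,\theta_1) - g_\epsilon(\x,\theta_2) \leq g(\x'_1,\theta_1) - g(\x'_1,\theta_2) \leq L_\theta\|\theta_1 - \theta_2\|$, symmetrizes the roles of $\theta_1$ and $\theta_2$, and then averages over the dataset.

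For the second inequality, I would first reduce to a per-point statement: since $\triangledown_\theta \Loss_\epsilon(\theta) = \frac{1}{N}\sum_i \triangledown_\theta g_\epsilon(\x_i, \theta)$, the triangle inequality bounds $\|\triangledown_\theta \Loss_\epsilon(\theta_1) - \triangledown_\theta \Loss_\epsilon(\theta_2)\|$ by the average of the per-point gradient differences, so it suffices to bound $\|\triangledown_\theta g_\epsilon(\x,\theta_1) - \triangledown_\theta g_\epsilon(\x,\theta_2)\|$ by a constant independent of the data point. Here I invoke Danskin's (envelope) theorem, which gives $\triangledown_\theta g_\epsilon(\x,\theta) = \triangledown_\theta g(\x'(\theta), \theta)$, where $\x'(\theta) \in \Set^{(p)}_\epsilon(\x)$ is the worst-case perturbation at $\theta$. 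Writing $\x'_1 = \x'(\theta_1)$ and $\x'_2 = \x'(\theta_2)$ and inserting the intermediate term $\triangledown_\theta g(\x'_1, \theta_2)$, I split
\[
\triangledown_\theta g_\epsilon(\x,\theta_1) - \triangledown_\theta g_\epsilon(\x,\theta_2) = \left[\triangledown_\theta g(\x'_1, \theta_1) - \triangledown_\theta g(\x'_1, \theta_2)\right] + \left[\triangledown_\theta g(\x'_1, \theta_2) - \triangledown_\theta g(\x'_2, \theta_2)\right]\;.
\]

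The first bracket is controlled by the modified second-order condition and contributes $L_{\theta\theta}\|\theta_1 - \theta_2\| + D_{\theta\theta}$, while the second bracket is controlled by the modified third condition and contributes $L_{\theta\x}\|\x'_1 - \x'_2\|_p + D_{\theta\x}$. Since $\x'_1$ and $\x'_2$ both lie in the $\ell_p$-ball $\Set^{(p)}_\epsilon(\x)$ of radius $\epsilon$ about $\x$, the triangle inequality gives $\|\x'_1 - \x'_2\|_p \leq \|\x'_1 - \x\|_p + \|\x - \x'_2\|_p \leq 2\epsilon$, so the second bracket is at most $2\epsilon L_{\theta\x} + D_{\theta\x}$. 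Summing the two brackets and averaging over the dataset yields the stated bound $L_{\theta\theta}\|\theta_1 - \theta_2\| + 2\epsilon L_{\theta\x} + D_{\theta\theta} + D_{\theta\x}$.

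The main obstacle is conceptual rather than computational: because ReLU renders $g$ non-differentiable at isolated parameter configurations, both $\triangledown_\theta g$ and the envelope identity must be read in the subgradient sense, and one must verify that the modified conditions of Assumption~\ref{assume:relu_lip} hold for the chosen subgradient selections at $(\x'_1,\theta_1)$, $(\x'_1,\theta_2)$, and $(\x'_2,\theta_2)$. This is precisely what the additive constants $D_{\theta\theta}$ and $D_{\theta\x}$ are designed to absorb, as they upper-bound the jump in the (sub)gradient across non-smooth points; once the argument is phrased for subgradients, every remaining step is identical to the smooth case of Proposition~\ref{prop:lip}.
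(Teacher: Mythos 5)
Your proposal is correct and matches the paper's approach exactly: the paper proves Corollary~\ref{coro:lip} by noting that the argument of Proposition~\ref{prop:lip} carries over verbatim, with the additive constants $D_{\theta\theta}$ and $D_{\theta\x}$ simply propagating through the same insertion of the intermediate term $\triangledown_\theta g(\x'_1,\theta_2)$ and the bound $\|\x'_1-\x'_2\|_p\leq 2\epsilon$. Your explicit write-up of that propagation, including the unchanged first inequality and the subgradient caveat at ReLU non-smooth points, is a faithful (and slightly more detailed) rendering of the paper's proof.
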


The proof directly follows the one of Proposition~\ref{prop:lip}.
As in Proposition~\ref{prop:lip}, the additional $2\epsilon L_{\theta\x}$ term in Corollary~\ref{coro:lip} evidences more severe \textit{gradient scattering} under adversarial training in the context of ReLU networks, which harms optimization.

Similarly, we can easily extend the study of the asymptotic gradient magnitude of Theorem~\ref{thm:convergence} to the account for Assumption~\ref{assume:relu_lip}.

\begin{corollary} \label{coro:relu_convergence}
Let Assumption~\ref{assume:relu_lip} hold, the stochastic gradient $\triangledown_\theta \widehat{\Loss}_\epsilon(\theta_t)$ be unbiased and have bounded variance, and the SGD update $\theta_{t + 1} = \theta_t - \alpha_t \triangledown_\theta \widehat{\Loss}_\epsilon(\theta_t)$ use a constant step size $\alpha_t = \alpha = \frac{1}{L_{\theta\theta}\sqrt{T}}$ for $T$ iterations. Given the trajectory of the parameters during optimization $\{\theta_t\}_{t = 1}^T$, then we can bound the asymptotic probability of large gradients for a sufficiently large $T$ as
\begin{equation}
\begin{aligned}
\forall \gamma \geq 2, P(\|\triangledown_\theta \Loss_\epsilon(\theta_t)\| > \gamma (\epsilon L_{\theta\x} + \frac{1}{2} D_{\theta\theta} + \frac{1}{2} D_{\theta\x})) < \frac{4}{\gamma^2 - 2\gamma + 4}\;.
\end{aligned} \label{eq:thm_relu_convergence}
\end{equation}
\end{corollary}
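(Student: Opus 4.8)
The plan is to observe that the proof of Theorem~\ref{thm:convergence} invokes Assumption~\ref{assume:lip} only through the second-order estimate of Proposition~\ref{prop:lip}, in which the non-smoothness enters as a single additive constant $2\epsilon L_{\theta\x}$ appended to the Lipschitz term $L_{\theta\theta}\|\theta_1 - \theta_2\|$. Under Assumption~\ref{assume:relu_lip}, Corollary~\ref{coro:lip} supplies the completely analogous bound in which this additive constant is enlarged to $2\epsilon L_{\theta\x} + D_{\theta\theta} + D_{\theta\x}$. Since every subsequent step of the argument treats this constant as a black box, the entire proof carries over verbatim after the substitution $2\epsilon L_{\theta\x} \mapsto 2\epsilon L_{\theta\x} + D_{\theta\theta} + D_{\theta\x}$.

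Concretely, I would first re-derive the generalized descent inequality that underlies Theorem~\ref{thm:convergence}: integrating the gradient along the segment from $\theta_t$ to $\theta_{t+1}$ and applying the second inequality of Corollary~\ref{coro:lip} yields
\begin{equation}
\Loss_\epsilon(\theta_{t+1}) \leq \Loss_\epsilon(\theta_t) + \langle \triangledown_\theta \Loss_\epsilon(\theta_t), \theta_{t+1} - \theta_t\rangle + \frac{L_{\theta\theta}}{2}\|\theta_{t+1} - \theta_t\|^2 + (2\epsilon L_{\theta\x} + D_{\theta\theta} + D_{\theta\x})\|\theta_{t+1} - \theta_t\|\;,
\end{equation}
which is identical to the descent step in Theorem~\ref{thm:convergence} apart from the enlarged offset. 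I would then insert the SGD update $\theta_{t+1} = \theta_t - \alpha \triangledown_\theta \widehat{\Loss}_\epsilon(\theta_t)$, take conditional expectations using the unbiasedness and bounded variance of the stochastic gradient, telescope over $t = 1, \dots, T$, substitute $\alpha = \frac{1}{L_{\theta\theta}\sqrt{T}}$, and pass to the limit $T \to \infty$, exactly as in the proof of Theorem~\ref{thm:convergence}. The final concentration argument that produces the $\frac{4}{\gamma^2 - 2\gamma + 4}$ bound depends on $\gamma$ alone and is insensitive to the numerical value of the offset.

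The one point requiring genuine verification rather than transcription is the bookkeeping of the factor $\tfrac{1}{2}$ relating the offset to the reported threshold. In Theorem~\ref{thm:convergence} the threshold $\gamma \epsilon L_{\theta\x}$ is precisely $\tfrac{\gamma}{2}$ times the offset $2\epsilon L_{\theta\x}$; preserving this relationship under the substitution gives $\tfrac{\gamma}{2}(2\epsilon L_{\theta\x} + D_{\theta\theta} + D_{\theta\x}) = \gamma(\epsilon L_{\theta\x} + \tfrac{1}{2}D_{\theta\theta} + \tfrac{1}{2}D_{\theta\x})$, which matches (\ref{eq:thm_relu_convergence}) exactly. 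Since the offset enters the descent analysis only additively and linearly, no further complication arises, and the corollary follows as a direct consequence of Theorem~\ref{thm:convergence} with Corollary~\ref{coro:lip} in place of Proposition~\ref{prop:lip}.
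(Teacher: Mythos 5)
Your proposal is correct and follows exactly the paper's own route: the paper justifies this corollary purely by remarking that the proof of Theorem~\ref{thm:convergence} carries over under Assumption~\ref{assume:relu_lip}, with the offset $2\epsilon L_{\theta\x}$ from Proposition~\ref{prop:lip} replaced by the enlarged offset $2\epsilon L_{\theta\x} + D_{\theta\theta} + D_{\theta\x}$ from Corollary~\ref{coro:lip}, which is precisely your substitution. Your one substantive check—that the threshold in the probability statement is $\tfrac{\gamma}{2}$ times the offset, hence becomes $\gamma(\epsilon L_{\theta\x} + \tfrac{1}{2}D_{\theta\theta} + \tfrac{1}{2}D_{\theta\x})$, while the concentration bound $\tfrac{4}{\gamma^2 - 2\gamma + 4}$ is scale-invariant and depends on $\gamma$ alone—is exactly the bookkeeping the reduction requires.
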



\section{Proofs}

\subsection{Proof of Proposition~\ref{prop:version_space}} \label{sub:app_proof_v_set}

\begin{proof}

For arbitrary $\W \in \VV_{\epsilon_2}$, we have $\forall i \in [K], \x' \in \Set_{\epsilon_2}(\x)$, $(\w_i - \w_y)\x' \leq 0$ based on the definition of $\VV_\epsilon$.

Since $\epsilon_1 \leq \epsilon_2$, we have $\Set_{\epsilon_1}(\x) \subseteq \Set_{\epsilon_2}(\x)$.
As a result, $\forall i \in [K], \x' \in \Set_{\epsilon_1}(\x)$, $(\w_i - \w_y)\x' \leq 0$.
That is to say, $\W \in \VV_{\epsilon_1}$.
$\W$ is arbitrarily picked, so $\VV_{\epsilon_2} \subseteq \VV_{\epsilon_1}$.

\end{proof}

\subsection{Proof of Theorem~\ref{thm:t_set}} \label{sub:app_proof_t_set}

\begin{proof}

In multi-class logistic regression, as discussed in Section~\ref{subsec:softmax}, the function $g(\x, \W) = \log(1 + \sum_{j \neq y}\exp^{(\w_j - \w_y)\x})$ is a convex function w.r.t. the parameters $\W$, and so is $g_\epsilon(\x, \W)$.
Based on convexity, for any $\W \in \TT_\epsilon$, the statement $0 \in \argmin_\gamma g_\epsilon(\x, \gamma\W)$ is equivalent to the following statement:

\begin{equation}
\begin{aligned}
\forall \Delta\gamma > 0, g_\epsilon(\x, \Delta\gamma\W) \geq g_\epsilon(\x, \zero)\ \mathrm{and}\ g_\epsilon(\x, -\Delta\gamma\W) \geq g_\epsilon(\x, \zero)\;.
\end{aligned} \label{eq:convex_equiv}
\end{equation}

Note that $g_\epsilon(\x, \zero) \equiv \log K$, which means that the loss of the model is independent of both the input and the adversarial budget when $\W = \zero$.
Given $\epsilon_1 \geq \epsilon_2$, we have, $\forall \x, \W, g_{\epsilon_1}(\x, \W) \geq g_{\epsilon_2}(\x, \W)$.
Therefore, for an arbitrary $\W \in \TT_{\epsilon_2}$, we have the following inequality:

\begin{equation}
\begin{aligned}
\forall \Delta\gamma > 0, g_{\epsilon_1}(\x, \Delta\gamma\W) \geq g_{\epsilon_2}(\x, \Delta\gamma\W) \geq g_{\epsilon_2}(\x, \zero) = g_{\epsilon_1}(\x, \zero)\;.
\end{aligned}
\end{equation}

The first inequality is based on $\epsilon_1 \geq \epsilon_2$, the second one is based on (\ref{eq:convex_equiv}) and the last one arises from the fact that, $\forall \epsilon, g_\epsilon(\x, \zero)$ is a constant.
Similarly, we also have $g_{\epsilon_1}(\x, -\Delta\gamma\W) \geq g_{\epsilon_1}(\x, \zero)$.
Therefore, we have $\W \in \TT_{\epsilon_1}$, which means $\TT_{\epsilon_2} \subseteq \TT_{\epsilon_1}$.

To prove the second half of Theorem~\ref{thm:t_set}, one barrier is that we do not have an analytical form for $g_\epsilon(\x, \W)$.
Instead, we introduce a lower bound $\underline{g_\epsilon}(\x, \W)$  of $g_\epsilon(\x, \W)$, which has an analytical form.
We consider the perturbation $\x' = \x + \epsilon \frac{(\w_m - \w_y)^{\frac{q}{p}}}{\|\w_m - \w_y\|^{\frac{q}{p}}_q}$ \footnote{$l_q$ is the dual norm of $l_p$, i.e., $\frac{1}{p} + \frac{1}{q} = 1$}, where $m = \argmax_j \|\w_j - \w_y\|_q$.
It can be verified that $\x' \in \Set_\epsilon^{(p)}(\x)$.
Therefore, we set $\underline{g_\epsilon}(\x, \W) = g(\x', \W)$, which is a valid lower bound of $g_\epsilon(\x, \W)$.
Then, the analytical expression of $\underline{g_\epsilon}(\x, \W)$ can be written as

\begin{equation}
\begin{aligned}
\underline{g_\epsilon}(\x, \W) = \log(1 + \exp^{(\w_m - \w_y)\x + \epsilon\|\w_m - \w_y\|_q} + \sum_{j \neq y, j \neq m} \exp^{(\w_j - \w_y)\x + \epsilon (\w_j - \w_y) \frac{(\w_m - \w_y)^{\frac{q}{p}}}{\|\w_m - \w_y\|^{\frac{q}{p}}_q}})\;.
\end{aligned} \label{eq:g_underline}
\end{equation}

Since $m = \argmax_j \|\w_j - \w_y\|_q$, then $(\w_j - \w_y) \frac{(\w_m - \w_y)^{\frac{q}{p}}}{\|\w_m - \w_y\|^{\frac{q}{p}}_q} \leq \|\w_m - \w_y\|_q$.
As a result, if $\epsilon$ is large enough, the second term inside the logarithm of (\ref{eq:g_underline}) will dominate the summation and $\lim_{\epsilon \to \infty} \underline{g_\epsilon}(\x, \W) = \infty$.
More specifically, we can find $\bar{\epsilon} = \frac{\log(K - 1) - (\w_m - \w_y)\x}{\|\w_m - \w_y\|_q}$, such that, $\forall \epsilon > \bar{\epsilon}$, $\W$, then $\underline{g_\epsilon}(\x, \W) \geq \log K = g_\epsilon(\x, \zero)$.

Now, $\forall \W \in \R^{m \times K}, \Delta\gamma > 0, \epsilon \geq \bar{\epsilon}$, we have $g_\epsilon(\x, \Delta\gamma \W) \geq \underline{g}_\epsilon(\x, \Delta\gamma \W) \geq g_\epsilon(\x, \zero)$.
Similarly, we have $g_\epsilon(\x, -\Delta\gamma \W) \geq g_\epsilon(\x, \zero)$.
As a result, we have, $\forall \W \in \R^{m \times K}, g_\epsilon(\x, \W) \geq g_\epsilon(\x, \zero)$, so $\zero \in \argmin_{\W} g_\epsilon(\x, \W)$.
Based on (\ref{eq:convex_equiv}), we have $\TT_\epsilon = \R^{m \times K}$.

\end{proof}

\subsection{Proof of Proposition~\ref{prop:lip}} \label{sub:app_proof_lip}

\begin{proof}

Recall that $\Loss_\epsilon(\theta)$ is the average of $g_\epsilon(\x, \theta)$ over the dataset.
Therefore, to prove Proposition~\ref{prop:lip}, we only need to prove the following inequalities for any data point $\x$:

\begin{equation}
\begin{aligned}
\|g_\epsilon(\x, \theta_1) - g_\epsilon(\x, \theta_2)\| & \leq L_\theta \|\theta_1 - \theta_2\| \;, \\
\|\triangledown_\theta g_\epsilon(\x, \theta_1) - \triangledown_\theta g_\epsilon(\x, \theta_2)\| & \leq L_{\theta\theta} \|\theta_1 - \theta_2\| + 2\epsilon L_{\theta\x}\;.
\end{aligned} \label{eq:g_lip}
\end{equation}

To prove the first inequality, we introduce the adversarial examples for parameter $\theta_1$ and $\theta_2$:
\begin{equation}
\begin{aligned}
\x_1 &= \argmax_{\x' \in \Set^{(p)}_\epsilon(\x)} g(\x', \theta_1)\;, \\
\x_2 &= \argmax_{\x' \in \Set^{(p)}_\epsilon(\x)} g(\x', \theta_2)\;.
\end{aligned}
\end{equation}
Therefore, $g_\epsilon(\x, \theta_1) = g(\x_1, \theta_1)$ and $g_\epsilon(\x, \theta_2) = g(\x_2, \theta_2)$.

By definition,  we have $g(\x_1, \theta_1) \geq g(\x_2, \theta_1)$ and $g(\x_2, \theta_2) \geq g(\x_1, \theta_2)$.
As a result, $\|g_\epsilon(\x, \theta_1) - g_\epsilon(\x, \theta_2)\| = \|g(\x_1, \theta_1) - g(\x_2, \theta_2)\|$.
If $g(\x_1, \theta_1) - g(\x_2, \theta_2) \leq 0$, we have
\begin{equation}
\begin{aligned}
\|g_\epsilon(\x, \theta_1) - g_\epsilon(\x, \theta_2)\| = g(\x_2, \theta_2) - g(\x_1, \theta_1) \leq g(\x_2, \theta_2) - g(\x_2, \theta_1) \leq L_{\theta}\|\theta_1 - \theta_2\|\;.
\end{aligned}
\end{equation}
Similarly, if $g(\x_1, \theta_1) - g(\x_2, \theta_2) \geq 0$, we have
\begin{equation}
\begin{aligned}
\|g_\epsilon(\x, \theta_1) - g_\epsilon(\x, \theta_2)\| = g(\x_1, \theta_1) - g(\x_2, \theta_2) \leq g(\x_1, \theta_1) - g(\x_1, \theta_2) \leq L_{\theta}\|\theta_1 - \theta_2\|\;.
\end{aligned}
\end{equation}
This proves the first inequality in (\ref{eq:g_lip}).
The bound is tight, and equality is achieved when, for example, $\x_1 = \x_2$.

The second inequality in (\ref{eq:g_lip}) is more straightforward. We have
\begin{equation}
\begin{aligned}
\|\triangledown_\theta g_\epsilon(\x, \theta_1) - \triangledown_\theta g_\epsilon(\x, \theta_2)\|
&= \|\triangledown_\theta g(\x_1, \theta_1) - \triangledown_\theta g(\x_2, \theta_2)\| \\
&= \|\triangledown_\theta g(\x_1, \theta_1) - \triangledown_\theta g(\x_1, \theta_2) + \triangledown_\theta g(\x_1, \theta_2) - \triangledown_\theta g(\x_2, \theta_2)\| \\
&\leq \|\triangledown_\theta g(\x_1, \theta_1) - \triangledown_\theta g(\x_1, \theta_2)\| + \|\triangledown_\theta g(\x_1, \theta_2) - \triangledown_\theta g(\x_2, \theta_2)\| \\
&\leq L_{\theta\theta}\|\theta_1 - \theta_2\| + L_{\theta\x}\|\x_1 - \x_2\|_p \\
&\leq L_{\theta\theta}\|\theta_1 - \theta_2\| + 2\epsilon L_{\theta\x}\;.
\end{aligned} \label{eq:prove_lip_4}
\end{equation}
The last inequality in (\ref{eq:prove_lip_4}) is satisfied because both $\x_1$ and $\x_2$ belong to $\Set^{(p)}_{\epsilon}(\x)$.
This bound is tight, and equality is reached only when $\|\x_1 - \x_2\|_p = 2\epsilon$.

\end{proof}

\subsection{Proof of Theorem~\ref{thm:convergence}} \label{subsec:convergence_proof}

\begin{proof}

Let $\sigma^2$ to denote the variance of stochastic gradient $\triangledown_{\theta} \widehat{\Loss}_\epsilon(\theta)$. Based on the assumption that $\triangledown_{\theta} \widehat{\Loss}_\epsilon(\theta)$ is unbiased, we have
\begin{equation}
\begin{aligned}
\E [\triangledown_{\theta} \widehat{\Loss}_\epsilon(\theta)] &= \triangledown_{\theta} \Loss_\epsilon(\theta)\;, \\
\E \|\triangledown_{\theta} \widehat{\Loss}_\epsilon(\theta)\|^2 &= \|\triangledown_{\theta} \Loss_\epsilon(\theta)\|^2 + \sigma^2\;.
\end{aligned} \label{eq:expectation}
\end{equation}

Proposition~\ref{prop:lip} shows that $\Loss_\epsilon(\theta)$ is continuous. Therefore, we introduce $\tilde{\theta_t}(u) = \theta_t + u(\theta_{t + 1} - \theta_t)$ and derive an upper bound of $\Loss_\epsilon(\theta_{t + 1}) - \Loss_\epsilon(\theta_t)$ by first order Taylor expansion and using the update rule $\theta_{t + 1} = \theta_t - \alpha_t \triangledown_{\theta} \widehat{\Loss}_\epsilon(\theta_t)$.
This yields
\begin{equation}
\begin{aligned}
\Loss_\epsilon(\theta_{t + 1}) - \Loss_\epsilon(\theta_t) &= \int_0^1 \langle \theta_{t + 1} - \theta_t, \triangledown_\theta \Loss_\epsilon(\tilde{\theta_t}(u)) \rangle d_u\\
&= \int_0^1 \langle -\alpha_t \triangledown_\theta \widehat{\Loss}_\epsilon(\theta_t), \triangledown_\theta \Loss_\epsilon(\tilde{\theta_t}(u)) \rangle d_u \\
&= \int_0^1 \langle -\alpha_t \triangledown_\theta \widehat{\Loss}_\epsilon(\theta_t), \triangledown_\theta \Loss_\epsilon(\tilde{\theta_t}(u)) - \triangledown_\theta \Loss_\epsilon(\theta_t) \rangle d_u + \langle -\alpha_t \triangledown_\theta \widehat{\Loss}_\epsilon(\theta_t), \triangledown_\theta \Loss_\epsilon(\theta_t) \rangle \\
&\leq \int_0^1 \alpha_t \|\triangledown_\theta \widehat{\Loss}_\epsilon(\theta_t)\|\|\triangledown_\theta \Loss_\epsilon(\tilde{\theta_t}(u)) - \triangledown_\theta \Loss_\epsilon(\theta_t)\| d_u - \alpha_t \langle \triangledown_\theta \widehat{\Loss}_\epsilon(\theta_t), \triangledown_\theta \Loss_\epsilon(\theta_t) \rangle \\
&\leq \int_0^1 \alpha_t \|\triangledown_\theta \widehat{\Loss}_\epsilon(\theta_t)\| (L_{\theta\theta}\|\tilde{\theta_t}(u) - \theta_t\| + 2\epsilon L_{\theta\x}) d_u - \alpha_t \langle \triangledown_\theta \widehat{\Loss}_\epsilon(\theta_t), \triangledown_\theta \Loss_\epsilon(\theta_t) \rangle \\
&= \frac{1}{2} \alpha_t^2 L_{\theta\theta} \|\triangledown_\theta \widehat{\Loss}_\epsilon(\theta_t)\|^2 + 2\epsilon L_{\epsilon\x} \alpha_t \|\triangledown_\theta \widehat{\Loss}_\epsilon(\theta_t)\| - \alpha_t \langle \triangledown_\theta \widehat{\Loss}_\epsilon(\theta_t), \triangledown_\theta \Loss_\epsilon(\theta_t) \rangle\;.
\end{aligned}
\end{equation}
Here, the first inequality comes from H\"older's Inequality; the second one follows the conclusion of Proposition~\ref{prop:lip}.

By taking the expectation over the noise introduced by SGD, we have
\begin{equation}
\begin{aligned}
\E [\Loss_\epsilon(\theta_{t+1})] - \E [\Loss_\epsilon(\theta_t)] &\leq \frac{1}{2} \alpha_t^2 L_{\theta\theta} (\|\triangledown_\theta \Loss_\epsilon(\theta_t)\|^2 + \sigma^2) + 2\epsilon L_{\theta\x} \alpha_t \|\triangledown_\theta \Loss_\epsilon(\theta_t)\| - \alpha_t \|\triangledown_\theta \Loss_\epsilon(\theta_t)\|^2 \\
&= (\frac{1}{2} \alpha_t^2 L_{\theta\theta} - \alpha_t) \|\triangledown_\theta \Loss_\epsilon(\theta_t)\|^2 + 2\epsilon L_{\theta\x} \alpha_t \|\triangledown_\theta \Loss_\epsilon(\theta_t)\| + \frac{1}{2} \alpha_t^2 \sigma^2 L_{\theta\theta} \\
&\leq -\frac{1}{2} \alpha_t \|\triangledown_\theta \Loss_\epsilon(\theta_t)\|^2 + 2\epsilon L_{\theta\x} \alpha_t \|\triangledown_\theta \Loss_\epsilon(\theta_t)\| + \frac{1}{2} \alpha_t^2 \sigma^2 L_{\theta\theta}\;.
\end{aligned} \label{eq:single_iter}
\end{equation}
We use the approximation $\E \|\triangledown_\theta \widehat{\Loss}_\epsilon(\theta)\| \simeq \|\triangledown_\theta \Loss_\epsilon(\theta)\|$ because the variance arises mainly from the term $\|\triangledown_\theta \widehat{\Loss}_\epsilon(\theta_t)\|^2$.
The last inequality is based on the fact that $\alpha_t = \alpha = \frac{1}{L_{\theta\theta}\sqrt{T}}$, so $\alpha_t L_{\theta\theta} = \frac{1}{\sqrt{T}} \leq 1$.

Let us now sum (\ref{eq:single_iter}) over $t \in [T]$. This gives
\begin{equation}
\begin{aligned}
\sum_{t = 0}^T \left[ \frac{1}{2}\alpha_t \|\triangledown_\theta \Loss_\epsilon(\theta_t)\|^2 - 2\epsilon L_{\theta\x} \alpha_t \|\triangledown_\theta \Loss_\epsilon(\theta_t)\| \right] &\leq \Loss_\epsilon(\theta_0) - \E[\Loss_\epsilon(\theta_T)] + \frac{T}{2} \alpha_t^2 \sigma^2 L_{\theta\theta} \\
&\leq \Loss_\epsilon(\theta_0) - \Loss_\epsilon(\theta^*) + \frac{T}{2} \alpha_t^2 \sigma^2 L_{\theta\theta}\;.
\end{aligned}
\end{equation}
We use $\theta^*$ to denote the global minimum since $\Loss_\epsilon(\theta)$ is lower bounded.
By introducing $\alpha_t = \alpha = \frac{1}{L_{\theta\theta}\sqrt{T}}$ into the formulation, we obtain
\begin{equation}
\begin{aligned}
\frac{1}{T} \sum_{t = 0}^T \left[ \frac{1}{2} \|\triangledown_\theta \Loss_\epsilon(\theta_t)\|^2 - 2\epsilon L_{\theta\x} \|\triangledown_\theta \Loss_\epsilon(\theta_t)\| \right] &\leq \frac{1}{\alpha T} \left[ \Loss_\epsilon(\theta_0) - \Loss_\epsilon(\theta^*) \right] + \frac{1}{2} \alpha \sigma^2 L_{\theta\theta} \\
&= \frac{1}{\sqrt{T}} \left[ L_{\theta\theta}(\Loss_\epsilon(\theta_0) - \Loss_\epsilon(\theta^*)) + \frac{1}{2} \sigma^2 \right]\;.
\end{aligned} \label{eq:average}
\end{equation}

Since the righthand side of (\ref{eq:average}) converges to $0$ as $T \to +\infty$, we have
\begin{equation}
\begin{aligned}
\lim_{T \to +\infty} \frac{1}{T} \sum_{t = 0}^T \left[ \frac{1}{2} \|\triangledown_\theta \Loss_\epsilon(\theta_t)\|^2 - 2\epsilon L_{\theta\x} \|\triangledown_\theta \Loss_\epsilon(\theta_t)\| \right] \leq 0\;.
\end{aligned} \label{eq:limit}
\end{equation}

Let us define $h(\theta_t) = \frac{1}{2} \|\triangledown_\theta \Loss_\epsilon(\theta_t)\|^2 - 2\epsilon L_{\theta\x} \|\triangledown_\theta \Loss_\epsilon(\theta_t)\|$ for notation simplicity.
Then, inequality (\ref{eq:limit}) shows that $\E_t [h(\theta_t)] \leq 0$ when $T$ is large enough.

Let $\|\triangledown_\theta \Loss_\epsilon(\theta_t)\| = \gamma \epsilon L_{\theta\x}$, then we have $h(\theta_{t}) = (\frac{1}{2} \gamma^2 - 2 \gamma) \epsilon^2 L_{\theta\x}^2$.
$h(\theta_t)$ is monotonically increasing when $\theta_t \geq 2 \epsilon L_{\theta\x}$, so when $\gamma \geq 2$, $h(\theta_{t}) \geq (\frac{1}{2} \gamma^2 - 2 \gamma) \epsilon^2 L_{\theta\x}^2$.
Considering $h(\theta_t) \geq -2 \epsilon^2 L_{\theta\x}^2$, we then have
\begin{equation}
\begin{aligned}
\E_t [h(\theta_t)] > -2 \epsilon^2 L_{\theta\x}^2 (1 - P(\|\triangledown_\theta \Loss_\epsilon(\theta_t)\| > \gamma \epsilon L_{\theta\x})) + (\frac{1}{2} \gamma^2 - 2 \gamma) \epsilon^2 L_{\theta\x}^2 P(\|\triangledown_\theta \Loss_\epsilon(\theta_t)\| > \gamma \epsilon L_{\theta\x})\;.
\end{aligned} \label{eq:e_bound}
\end{equation}

Finally, by rearranging (\ref{eq:e_bound}) and using $\E_t [h(\theta_t)] \leq 0$, we obtain
\begin{equation}
\begin{aligned}
\forall \gamma > 2,\ P(\|\triangledown_\theta \Loss_\epsilon(\theta_t)\| > \gamma \epsilon L_{\theta\x}) < \frac{4}{\gamma^2 - 2\gamma + 4}\;.
\end{aligned}
\end{equation}

\end{proof}

\subsection{Proof of Theorem~\ref{thm:logistic}} \label{sub:app_proof_logistic}

To prove Theorem~\ref{thm:logistic}, let us first introduce the following lemma.

\begin{lemma} \label{lemma:eigenvalue}
Given a vector set $\{\x_i\}_{i = 1}^N$ and scalar sets $\{a_i\}_{i = 1}^N$, $\{b_i\}_{i = 1}^N$, we define $\A = \sum_{i = 1}^{N} a_i \x_i \x^T_i$ and $\B = \sum_{i = 1}^{N} b_i \x_i \x^T_i$.
If, $\forall i,~ a_i \geq b_i$, then $\forall \m \in \R^m$, $\m^T\A\m \geq \m^T\B\m$. Furthermore, the largest and the smallest eigenvalues of $\A$ are no smaller than those of $\B$.
\end{lemma}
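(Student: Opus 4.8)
The plan is to prove the two claims separately, since the quadratic-form comparison is elementary and the eigenvalue comparison follows from it by a variational argument. First I would establish the pointwise inequality $\m^T\A\m \geq \m^T\B\m$ directly. Writing out the quadratic forms,
\begin{equation}
\begin{aligned}
\m^T\A\m - \m^T\B\m = \sum_{i=1}^N (a_i - b_i)\, \m^T\x_i\x_i^T\m = \sum_{i=1}^N (a_i - b_i)\,(\x_i^T\m)^2\;.
\end{aligned}
\end{equation}
Each term is a product of the nonnegative scalar $(a_i - b_i)$ (nonnegative by the hypothesis $a_i \geq b_i$) and the square $(\x_i^T\m)^2 \geq 0$, so the whole sum is nonnegative. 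This gives $\m^T\A\m \geq \m^T\B\m$ for every $\m \in \R^m$, which is the first conclusion.

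Next I would deduce the eigenvalue comparison from this pointwise domination. The key tools are the Rayleigh–Ritz variational characterizations of the extreme eigenvalues of a symmetric matrix: for a symmetric $\M$, the largest eigenvalue is $\lambda_{\max}(\M) = \max_{\|\m\|=1}\m^T\M\m$ and the smallest is $\lambda_{\min}(\M) = \min_{\|\m\|=1}\m^T\M\m$. Note that $\A$ and $\B$ are symmetric, being sums of the symmetric rank-one terms $\x_i\x_i^T$. For the largest eigenvalue, let $\m^\star$ be a unit vector achieving $\lambda_{\max}(\B)$; then $\lambda_{\max}(\A) \geq (\m^\star)^T\A\m^\star \geq (\m^\star)^T\B\m^\star = \lambda_{\max}(\B)$, where the first inequality is the max characterization applied to the particular vector $\m^\star$ and the second is the pointwise bound just proved. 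For the smallest eigenvalue, let $\m^\star$ now achieve $\lambda_{\min}(\A)$; then $\lambda_{\min}(\B) \leq (\m^\star)^T\B\m^\star \leq (\m^\star)^T\A\m^\star = \lambda_{\min}(\A)$, using the min characterization for $\B$ and the pointwise bound. Both together show $\lambda_{\max}(\A) \geq \lambda_{\max}(\B)$ and $\lambda_{\min}(\A) \geq \lambda_{\min}(\B)$.

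I do not anticipate a genuine obstacle here, as this is a standard monotonicity-under-the-Loewner-order argument (the hypothesis is exactly $\A - \B \succeq 0$, and both conclusions are immediate consequences). The one point requiring minor care is getting the direction of the inequality right for $\lambda_{\min}$: one must choose the optimizing vector for the correct matrix ($\A$ for $\lambda_{\min}(\A)$) so that the variational inequality and the pointwise inequality point the same way and compose. A clean way to phrase the whole eigenvalue step uniformly is via the Courant–Fischer min-max theorem to extend the result to all eigenvalues $\lambda_k(\A) \geq \lambda_k(\B)$, but since the lemma only asserts the claim for the extreme eigenvalues, the two-line Rayleigh-quotient argument above suffices and keeps the proof self-contained.
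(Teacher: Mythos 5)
Your proof is correct and follows essentially the same route as the paper's: expanding the quadratic-form difference as $\sum_i (a_i-b_i)(\x_i^T\m)^2 \geq 0$, then applying the Rayleigh-quotient characterization with the optimizer chosen for $\B$ (largest eigenvalue) and for $\A$ (smallest eigenvalue). The only cosmetic difference is that you explicitly frame the hypothesis as Loewner ordering $\A - \B \succeq 0$ and mention the Courant--Fischer extension, which the paper does not need or state.
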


\begin{proof}
Because $\forall\ i, a_i \geq b_i$, we have $\forall\m \sum_{i = 1}^N (a_i - b_i)(\x^T_i \m)^2 \geq 0$, which can be re-organized into $\m^T\A\m \geq \m^T\B\m$.

$\A$ is a symmetric matrix, so the largest eigenvalue $\lambda_1(\A)$ is $\max_{\|\m\|_2 = 1} \m^T\A\m = \max_{\|\m\|_2 = 1} \sum_{i = 1}^N a_i (\x_i^T \m)^2$.
Similarly, we have $\lambda_1(\B) = \max_{\|\m\|_2 = 1} \sum_{i = 1}^N b_i (\x_i^T \m)^2$.
Let $\m_\B \in \argmax_{\|\m\|_2 = 1} \sum_{i = 1}^N b_i (\x_i^T \m)^2$. Then we have
\begin{equation}
\begin{aligned}
\lambda_1(\B) = \sum_{i = 1}^N b_i (\x_i^T \m_\B)^2 \leq \sum_{i = 1}^N a_i (\x_i^T \m_\B)^2 \leq \max_{\|\m\|_2 = 1} \sum_{i = 1}^N a_i (\x_i^T \m)^2 = \lambda_1(\A)\;.
\end{aligned}
\end{equation}

In the same way as for the largest eigenvalue, the smallest eigenvalue of $\A$ and $\B$ are $\lambda_m(\A) = \min_{\|\m\|_2 = 1} \sum_{i = 1}^N a_i (\x_i^T \m)^2$ and $\lambda_m(\B) = \min_{\|\m\|_2 = 1} \sum_{i = 1}^N b_i (\x_i^T \m)^2$, respectively.
Let $\m_\A \in \argmin_{\|\m\|_2 = 1} \sum_{i = 1}^N a_i (\x_i^T \m)^2$. Then we have
\begin{equation}
\begin{aligned}
\lambda_m(\A) = \sum_{i = 1}^N a_i (\x_i^T \m_\A)^2 \geq \sum_{i = 1}^N b_i (\x_i^T \m_\A)^2 \geq \min_{\|\m\|_2 = 1} \sum_{i = 1}^N b_i (\x_i^T \m)^2 = \lambda_m(\B)\;.
\end{aligned}
\end{equation}

\end{proof}

Let us now go back to Theorem~\ref{thm:logistic}.

\begin{proof}
We first calculate the first and second derivatives of $\Loss_\epsilon(\w)$ in Equation~\ref{eq:adv_loss_logistic}, as

\begin{equation}
\begin{aligned}
\triangledown_{\w} \Loss_\epsilon(\w) &= \frac{1}{N} \sum_{i = 1}^N - \frac{1}{1 + e^{y_i\w^T\x_i - \epsilon}} y_i\x_i \;,\\
\triangledown^2_{\w} \Loss_\epsilon(\w) &= \frac{1}{N} \sum_{i = 1}^N \frac{e^{y_i\w^T\x_i - \epsilon}}{(1 + e^{y_i\w^T\x_i - \epsilon})^2} y_i^2 \x_i\x_i^T = \frac{1}{N} \sum_{i = 1}^N \frac{e^{y_i\w^T\x_i - \epsilon}}{(1 + e^{y_i\w^T\x_i - \epsilon})^2} \x_i\x_i^T\;.
\end{aligned}
\end{equation}

The second equality of $\triangledown^2_{\w} \Loss_\epsilon(\w)$ is satisfied because $y_i$ is either $+1$ or $-1$.
The dataset $\{(\x_i, y_i)\}_{i = 1}^N$ is linearly separable under adversarial budget $\Set^{(p)}_{\hat{\epsilon}}(\x)$, so, $\forall i,~y_i\w^T\x_i \geq \hat{\epsilon}$.
When $\epsilon \leq \hat{\epsilon}$, $e^{y_i\w^T\x_i - \epsilon} > 1$ and monotonically decreases with $\epsilon$.
As a result, $\frac{e^{y_i\w^T\x_i - \epsilon}}{(1 + e^{y_i\w^T\x_i - \epsilon})^2}$ monotonically increases with $\epsilon$ in the range $[0, \hat{\epsilon}]$.

Based on Lemma~\ref{lemma:eigenvalue}, $\forall \m \in \R^m$, $\m^T\triangledown^2_{\w} \Loss_\epsilon(\w)\m$ increases with $\epsilon$, and so do the largest and the smallest eigenvalues of the Hessian matrix $\triangledown^2_{\w} \Loss_\epsilon(\w)$.

\end{proof}


\section{Additional Experiments} \label{sec:app_experiment}

\subsection{Experimental Details} \label{subsec:hyperparam}

For MNIST, we set the step size of PGD to $0.01$ and the number of iterations to $\epsilon / 0.01 + 10$.
For CIFAR10, we set the number of PGD iterations to 10 and the step size to $\epsilon / 4$.
The network architectures we use are the same as the ones in~\cite{ye2019adversarial}.
We provide the details in Table~\ref{tbl:architecture} and use a factor $w$ to control the width of the network.
Unless specified, the LeNet models on MNIST have a width factor of $16$, the VGG and ResNet18 models on CIFAR10 have a width factor of $8$.

\begin{table}[ht]
\centering
\small
\begin{tabular}{|c|c|}
\hline
Name & Architecture \\
\hline
MNIST, LeNet & Conv($2w$), Conv($4w$), FC($196w$, $64w$), FC($64w$, $10$) \\
\hline
\multirow{2}{*}{CIFAR, VGG} & Conv($4w$) $\times$ $2$, M, Conv($8w$) $\times$ $2$, M, Conv($16w$) $\times$ $3$, M \\
& Conv($32w$) $\times$ $3$, M, Conv($32w$) $\times$ $3$, M, A, FC($32w$, $10$) \\
\hline
CIFAR, ResNet18 & ResNet18 in~\cite{he2016deep}, which uses a width $w=16$ \\
\hline
\end{tabular}
\vspace{0.2cm}
\caption{Network architectures. Conv, FC, M and A represent convolutional layers, fully-connected layers, max-pooling layers and average pooling layers, respectively. The parameter of the convolutional layers indicates the number of output channels. The parameters of the fully-connected layers indicate the number of input and output neurons. The kernel sizes of the max-pooling layers and average pooling layers are always 2. $w$ corresponds to the width factor mentioned in Section~\ref{sec:minima} of the main paper.} \label{tbl:architecture}
\end{table}

We train the models for $100$ epochs on MNIST and $200$ epochs on CIFAR10.
Unless explicitly mentioned, for LeNet models on MNIST, we use Adam~\cite{kingma2014adam} with a learning rate of $1 \times 10^{-4}$.
For VGG models on CIFAR10, we also use Adam, with an initial learning rate of $1 \times 10^{-3}$, decreased exponentially to $1 \times 10^{-4}$ between the $100$th epoch and the $150$th epoch, and then fixed to $1 \times 10^{-4}$ after 150 epochs.
For ResNet18 models on CIFAR10, we use accelerated SGD with a momentum factor of $0.9$. The initial learning rate is $0.1$ and is divided by 10 after $100$ and $150$ epochs.

\begin{figure*}[!ht]
\centering
\begin{subfigure}{0.24\textwidth}
\includegraphics[scale = 0.2]{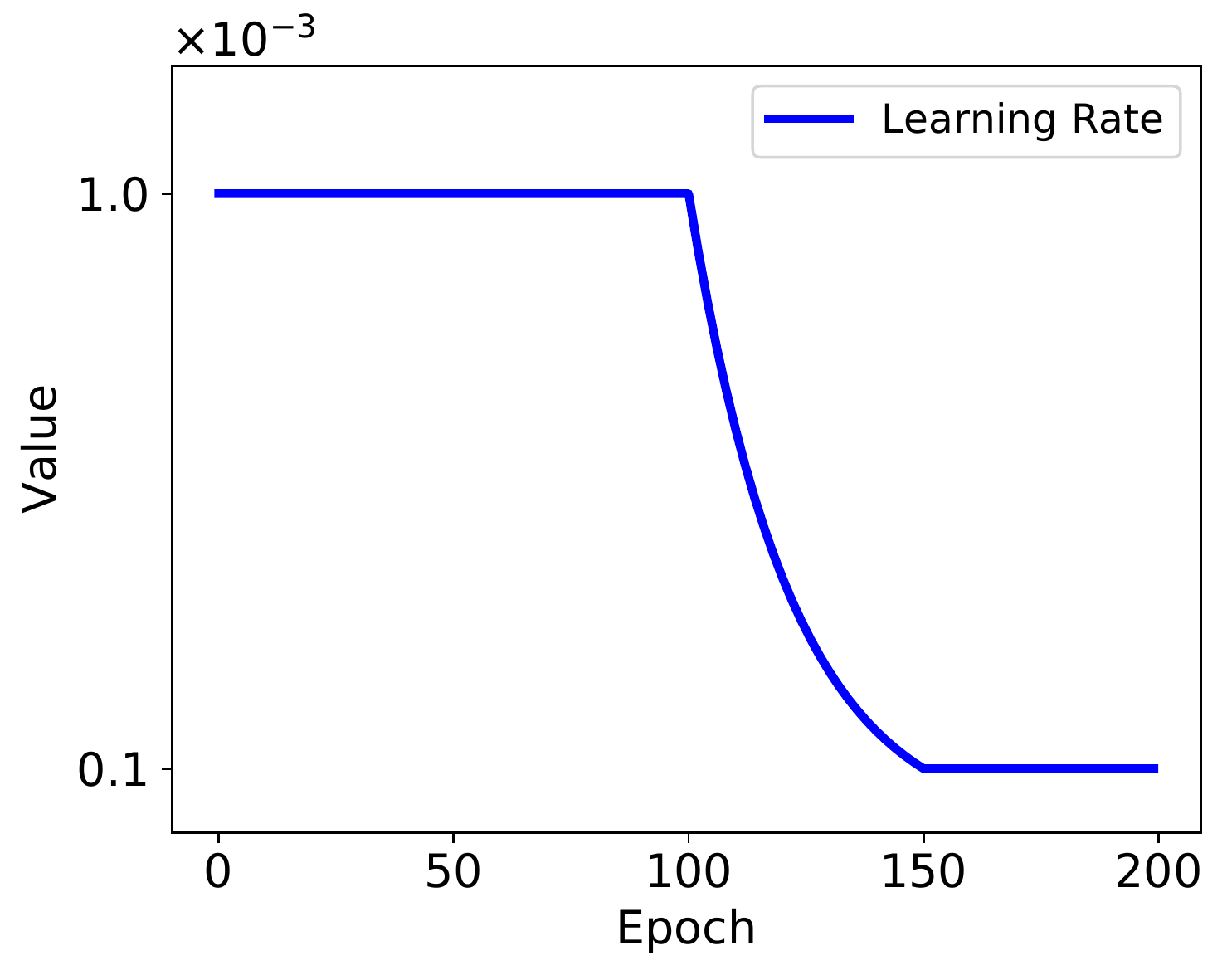}
\captionsetup{font=scriptsize}
\caption{CIFAR10, VGG, Vanilla}
\end{subfigure}
\begin{subfigure}{0.24\textwidth}
\includegraphics[scale = 0.2]{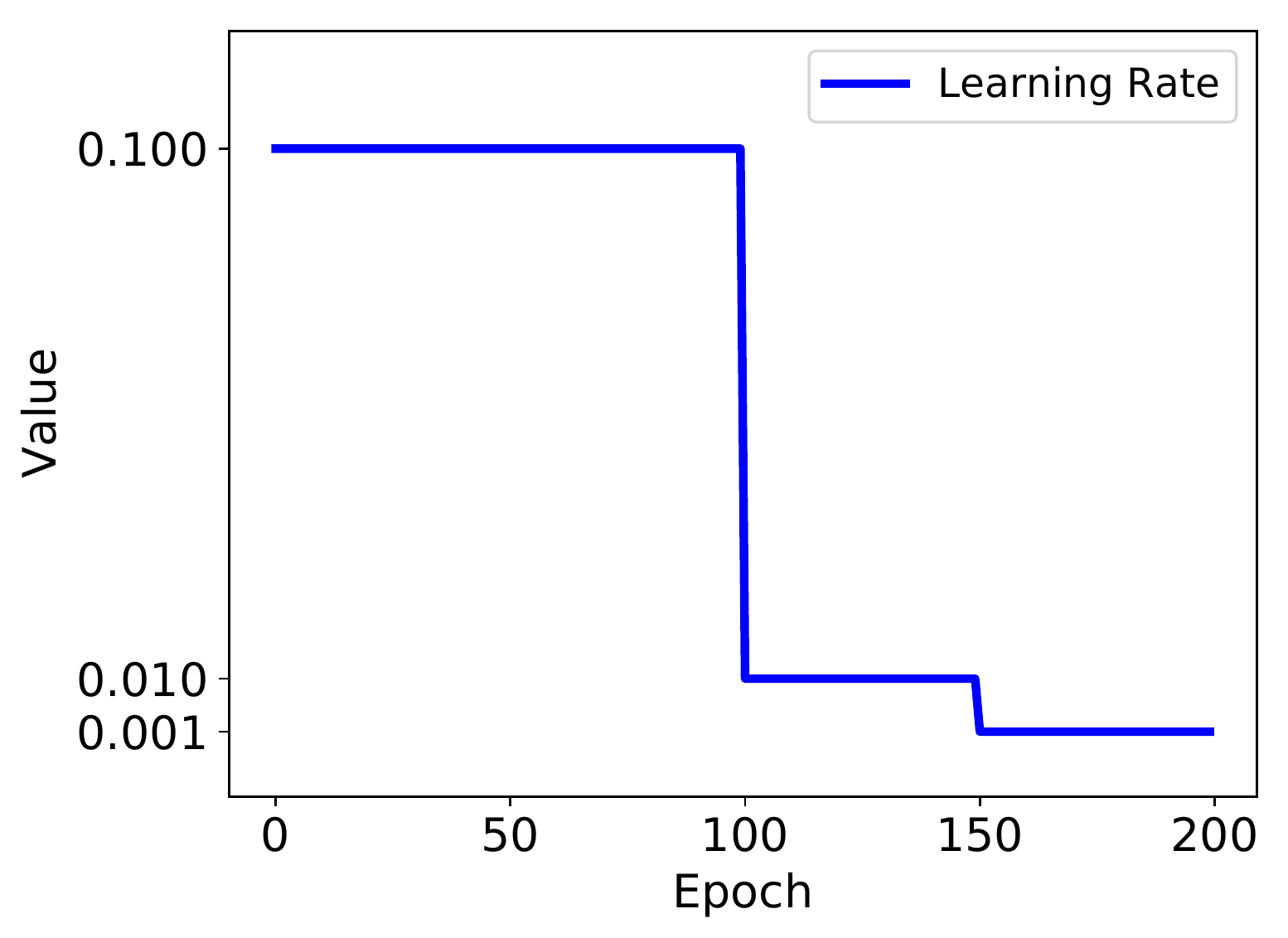}
\captionsetup{font=scriptsize}
\caption{CIFAR10, ResNet18, Vanilla}
\end{subfigure}
\begin{subfigure}{0.24\textwidth}
\includegraphics[scale = 0.2]{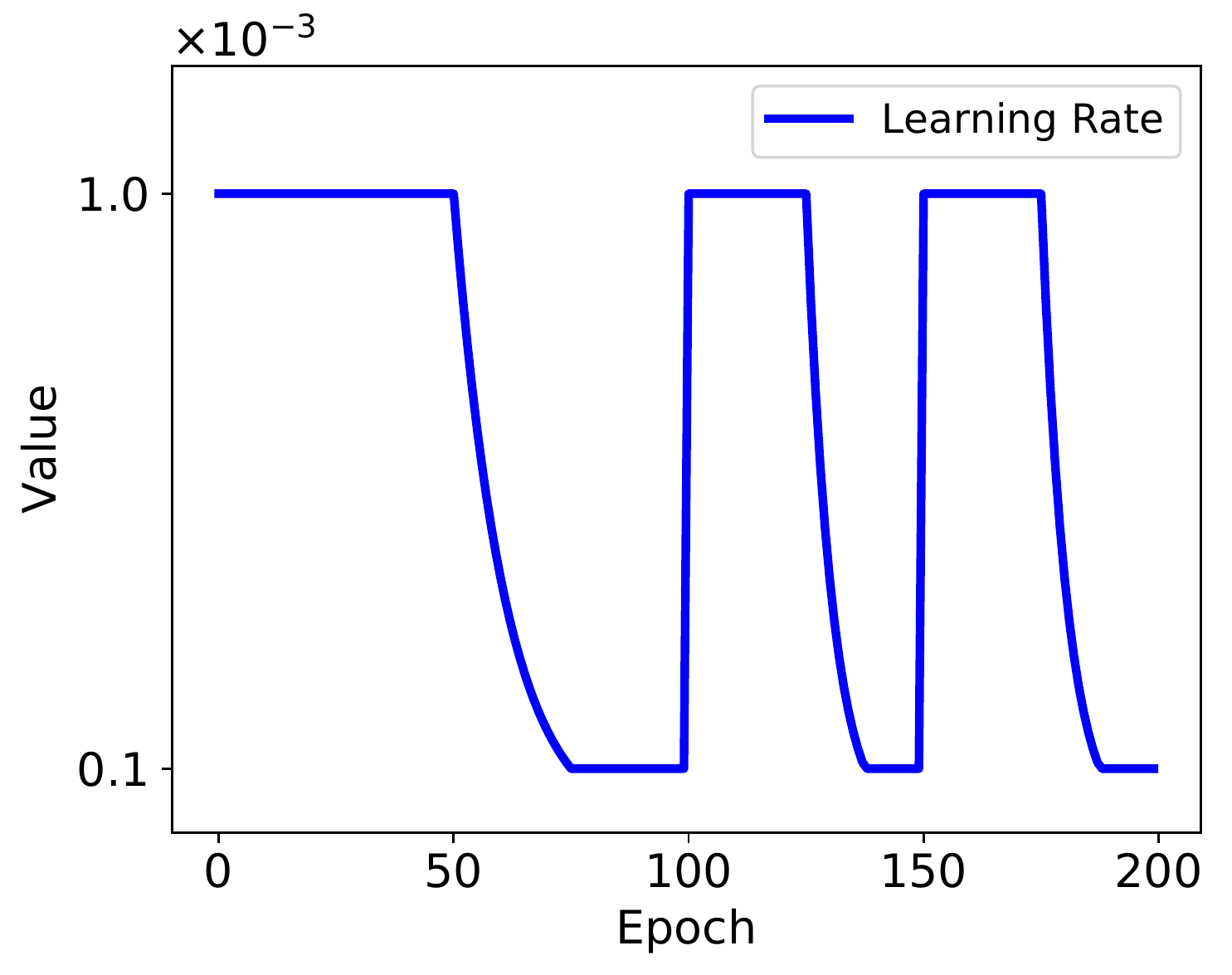}
\captionsetup{font=scriptsize}
\caption{CIFAR10, VGG, Periodic}
\end{subfigure}
\begin{subfigure}{0.24\textwidth}
\includegraphics[scale = 0.2]{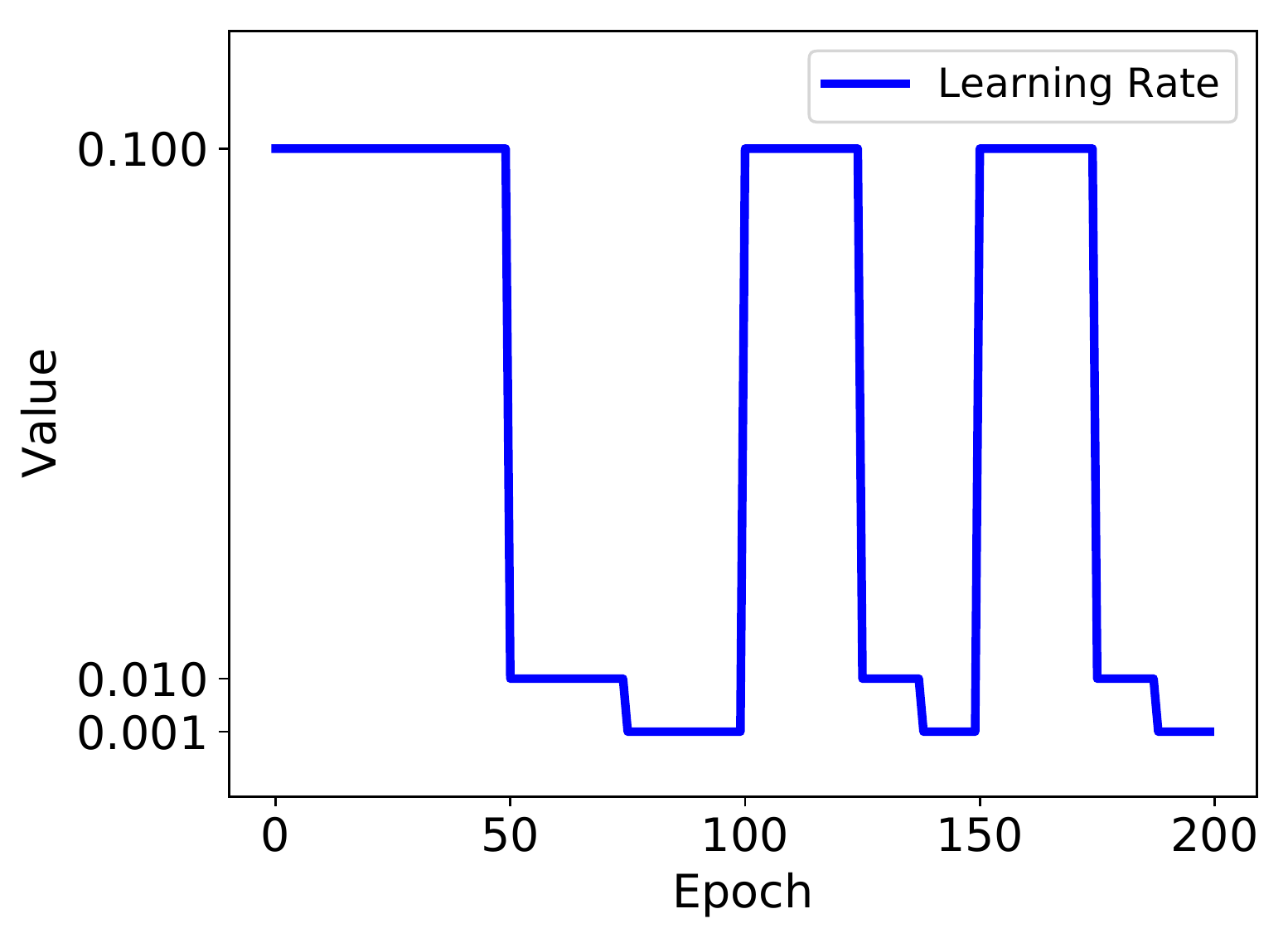}
\captionsetup{font=scriptsize}
\caption{CIFAR10, ResNet18, Periodic}
\end{subfigure}
\caption{Learning rate scheduling for VGG-8 and ResNet18-8 for CIFAR10 classification.} \label{fig:lr_plot}
\end{figure*}


\paragraph{Experiments in Section~\ref{sec:pas}.} The details of the adversarial attacks used in this section are demonstrated below:

\begin{itemize}
    \item PGD: for MNIST models, PGD with 50 iterations, the step size is $0.1$; for CIFAR10 models, PGD with 10 iterations, the step size is $2 / 255$.
    \item PGD100: PGD with 100 iterations, the step size is $0.1$ for MNIST models and $1 / 255$ for CIFAR10 models.
    \item APGD100-CE: AutoPGD with 100 iterations and cross-entropy loss. We use the default settings in~\cite{croce2020reliable}, i.e., $\rho = 0.75$, $\alpha = 0.75$.
    \item APGD100-DLR: AutoPGD with 100 iterations and difference-of-logit-ratio loss. We use the default settings in~\cite{croce2020reliable}, i.e., $\rho = 0.75$, $\alpha = 0.75$.
    \item Square5K: SquareAttack with 5000 iterations. We use the default settings in~\cite{andriushchenko2019square}, i.e., we use the \textit{margin loss}.
\end{itemize}

For the results in Table~\ref{tbl:eps_scheduling_results}, we fine-tune the weight-decay factor, choosing $1 \times 10^{-3}$ as the optimal value.
In periodic settings, the learning rate and the adversarial budget are reset after $100$ and $150$ epochs.
The scheduling in each period is scaled proportionally.
We plot the learning rate scheduling curves for VGG-8 and ResNet18-8 in Figure~\ref{fig:lr_plot} for both the vanilla and periodic settings.
Regarding the scheduling of $\epsilon$, we do not fully explore the value range of the hyper-parameters in the cosine and linear schedulers.
We use $\epsilon_{min} = 0$ for all experiments.
For the MNIST experiments, we set $\epsilon_{max} = 0.6$ for the cosine scheduler and $\epsilon_{max} = 0.8$ for the linear one.
For the CIFAR10 experiments, we set $\epsilon_{max} = 16 / 255$ for both the cosine and linear schedulers.
We plot the curves for $\epsilon_{cos}(d)$ and $\epsilon_{lin}(d)$ in Figure~\ref{fig:eps_plot}.

\begin{figure}[!ht]
\centering
\begin{subfigure}{0.24\textwidth}
\includegraphics[scale = 0.2]{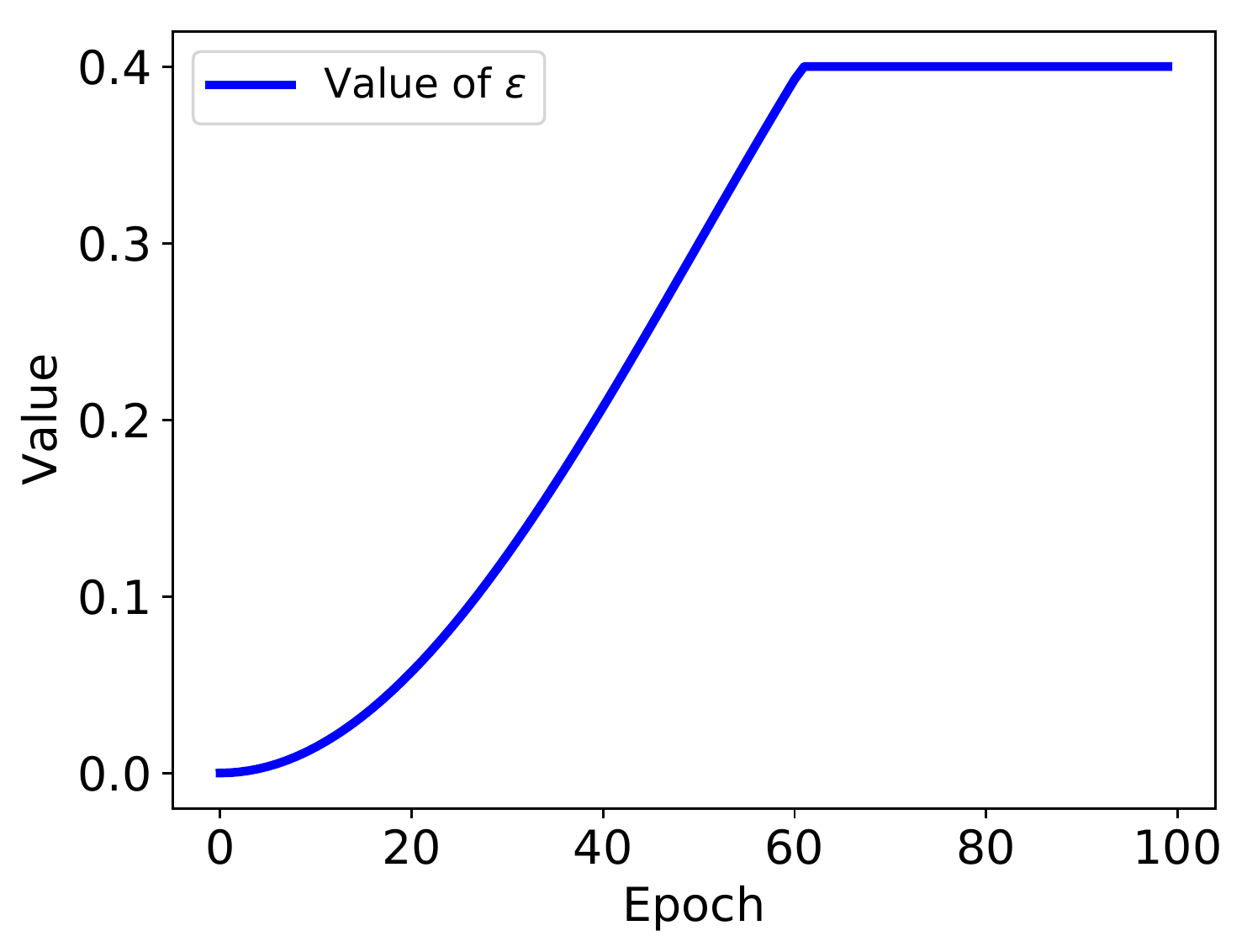}
\captionsetup{font=scriptsize}
\caption{MNIST, Cosine}
\end{subfigure}
\begin{subfigure}{0.24\textwidth}
\includegraphics[scale = 0.2]{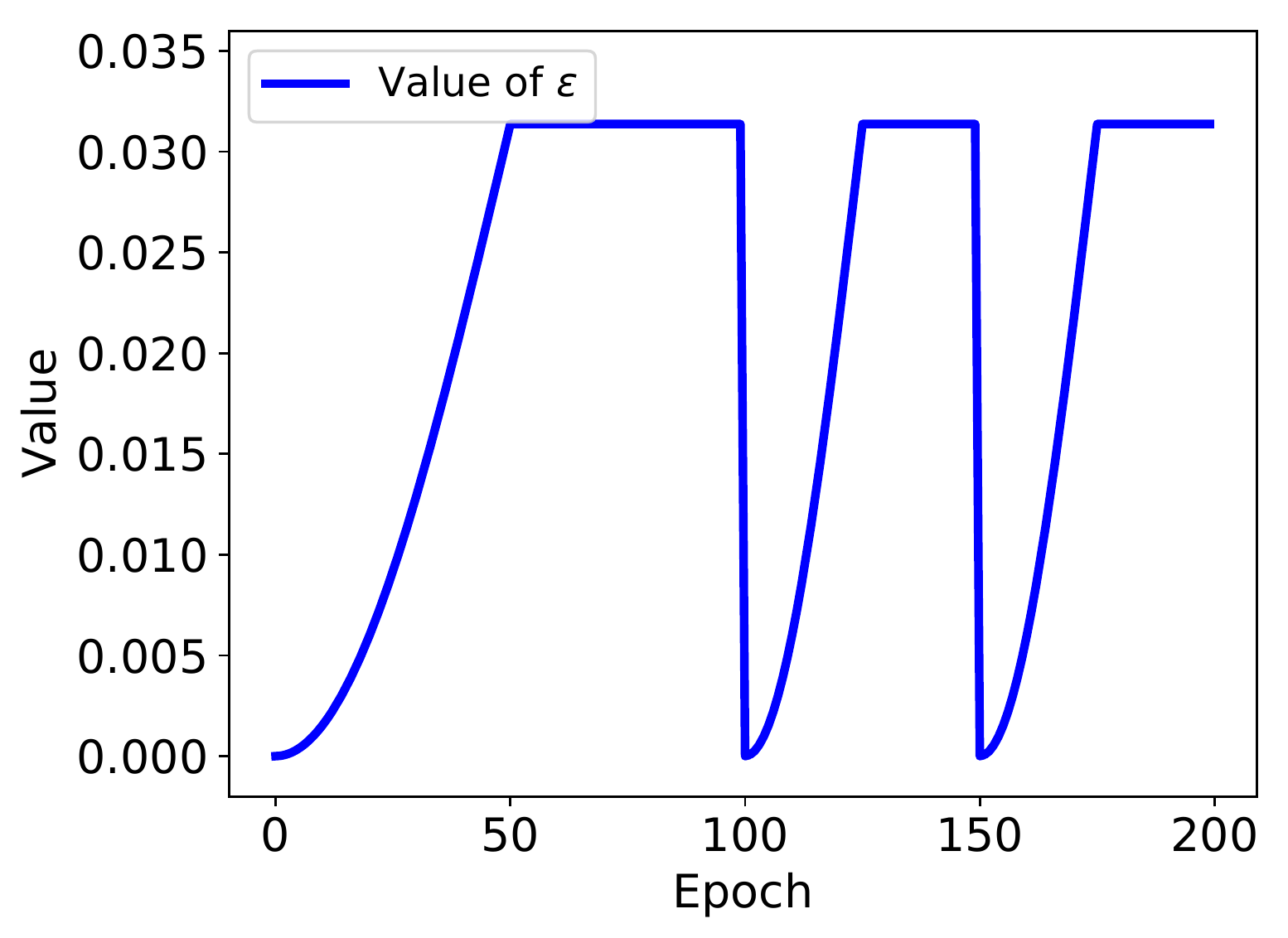}
\captionsetup{font=scriptsize}
\caption{CIFAR10, Cosine}
\end{subfigure}
\begin{subfigure}{0.24\textwidth}
\includegraphics[scale = 0.2]{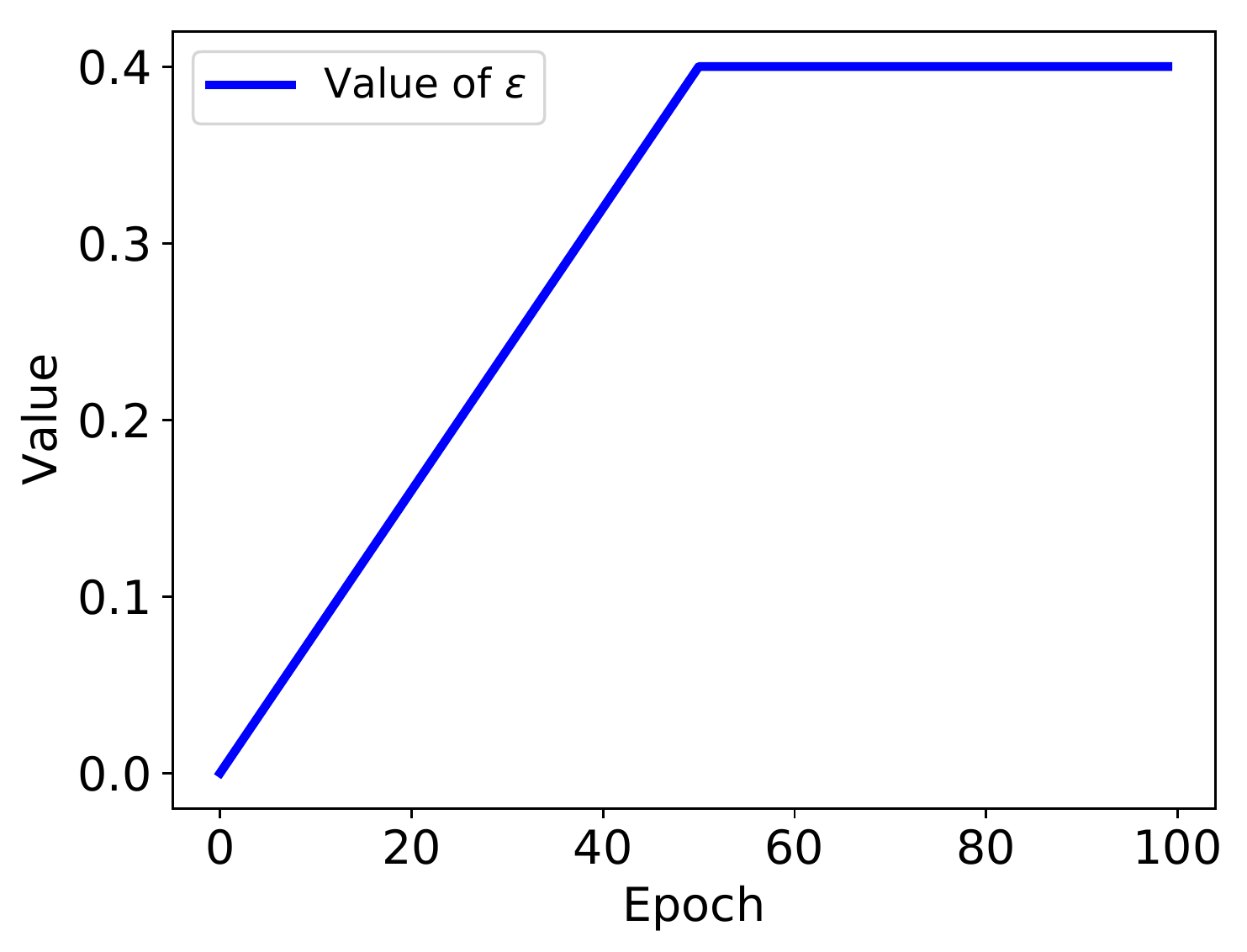}
\captionsetup{font=scriptsize}
\caption{MNIST, Linear}
\end{subfigure}
\begin{subfigure}{0.24\textwidth}
\includegraphics[scale = 0.2]{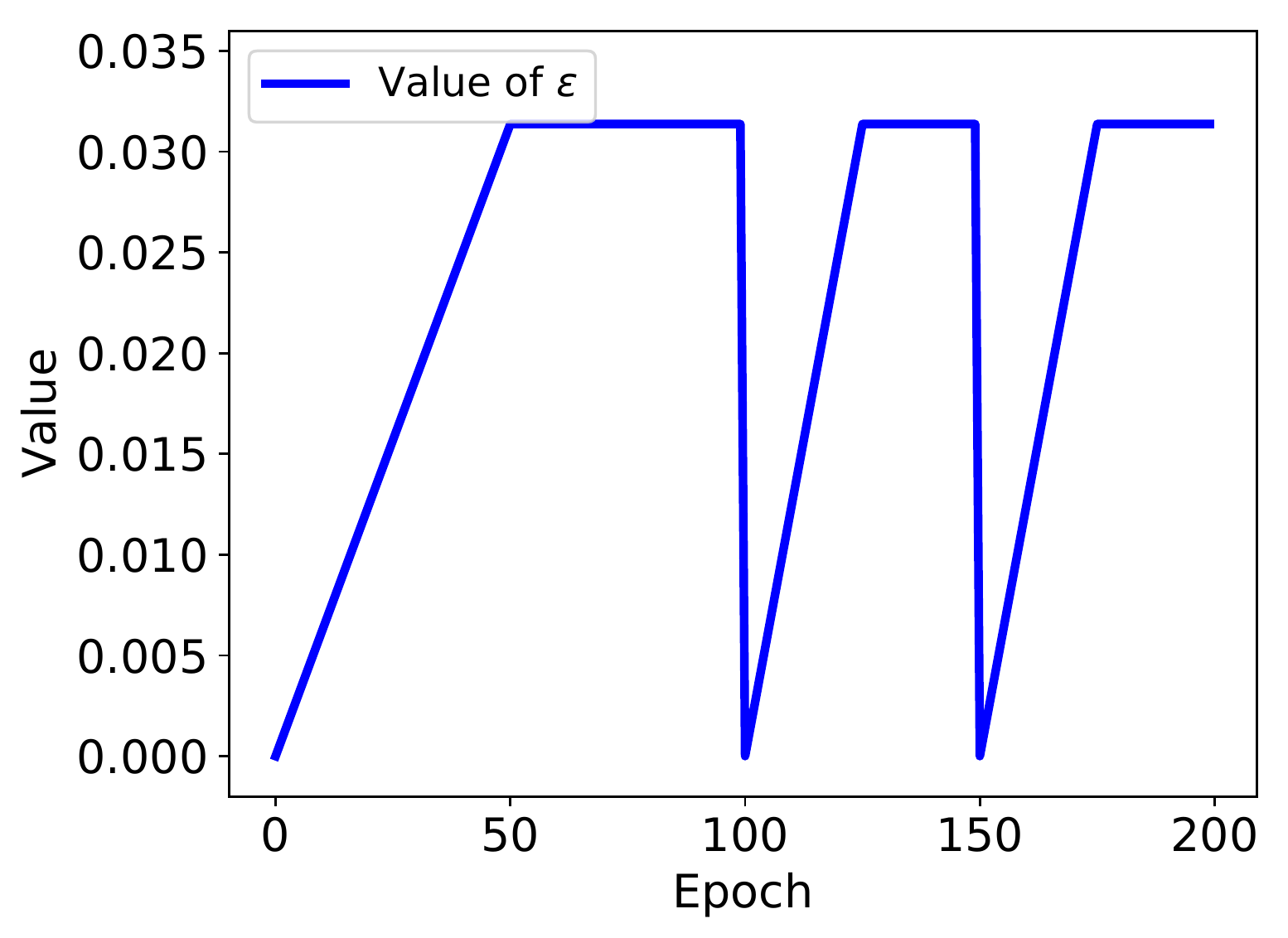}
\captionsetup{font=scriptsize}
\caption{CIFAR10, Linear}
\end{subfigure}
\caption{Adversarial budget scheduling for MNIST and CIFAR10 models.} \label{fig:eps_plot}
\end{figure}

\subsection{Additional Experimental Results} \label{subsec:add_exp}

\subsubsection{Additional Results for Section~\ref{subsec:early_training}} \label{subsubsec:analysis_lenet}

To complement the results on CIFAR10 models in Section~\ref{subsec:early_training}, in Figure~\ref{fig:analysis_lenet}, we provide a numerical analysis on the first and last 500 training mini-batches of MNIST under different values of $\epsilon$.
We observe the same phenomenon: in the early stages of training, a large adversarial budget leads to smaller gradient magnitudes and slows down the training; in the final stages of training, a large adversarial budget yields severe gradient scattering, indicated by larger gradient magnitudes.

\begin{figure}[!ht]
\centering
\begin{tabular}{cccc}
\begin{subfigure}{0.26\textwidth}
\includegraphics[scale = 0.22]{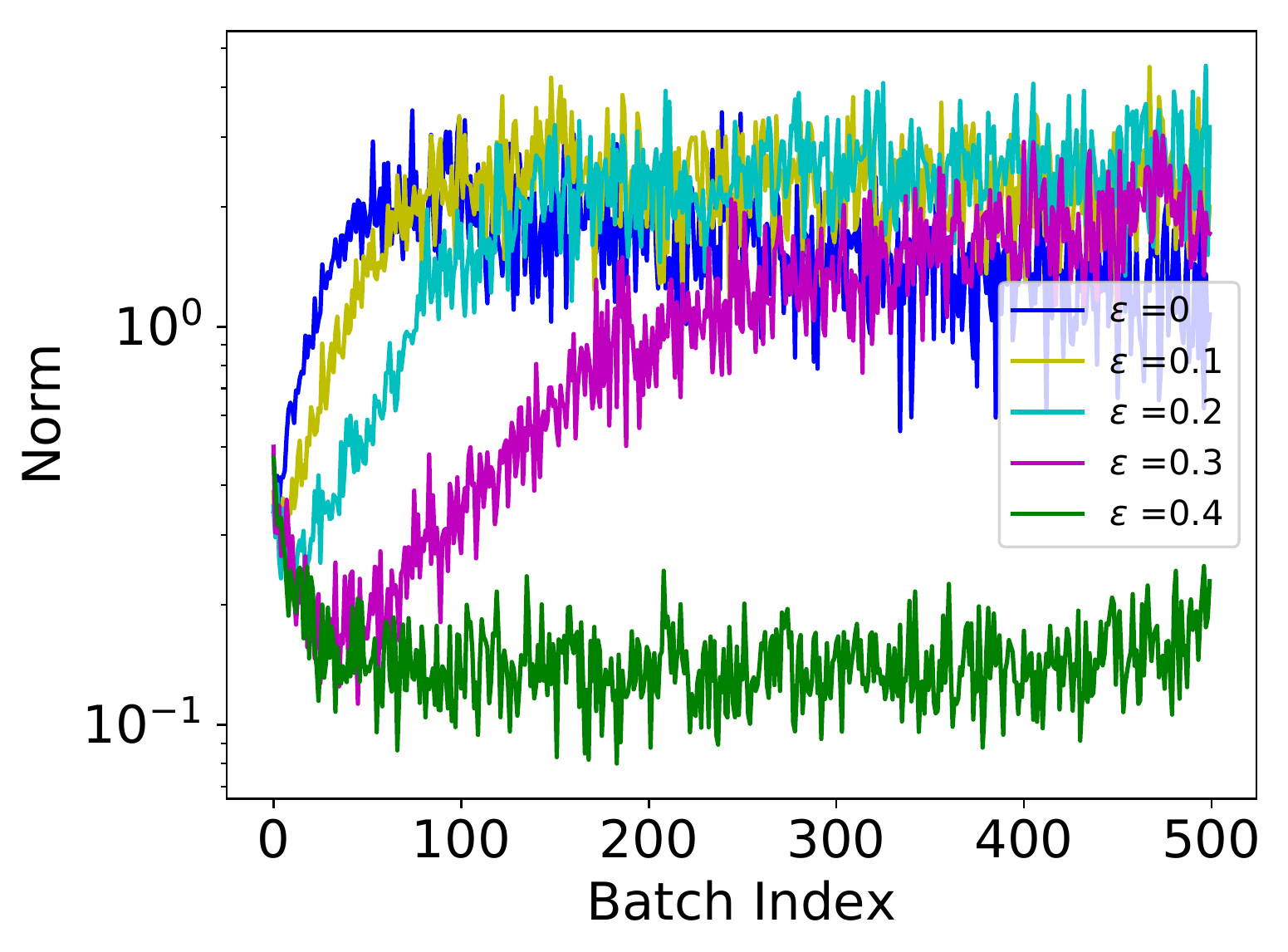}
\captionsetup{font=small}
\caption{$\|\triangledown_\theta \widehat{\Loss}_\epsilon(\theta)\|$, first 500.} \label{subfig:lenet_grad_early}
\end{subfigure}
\begin{subfigure}{0.24\textwidth}
\includegraphics[scale = 0.22]{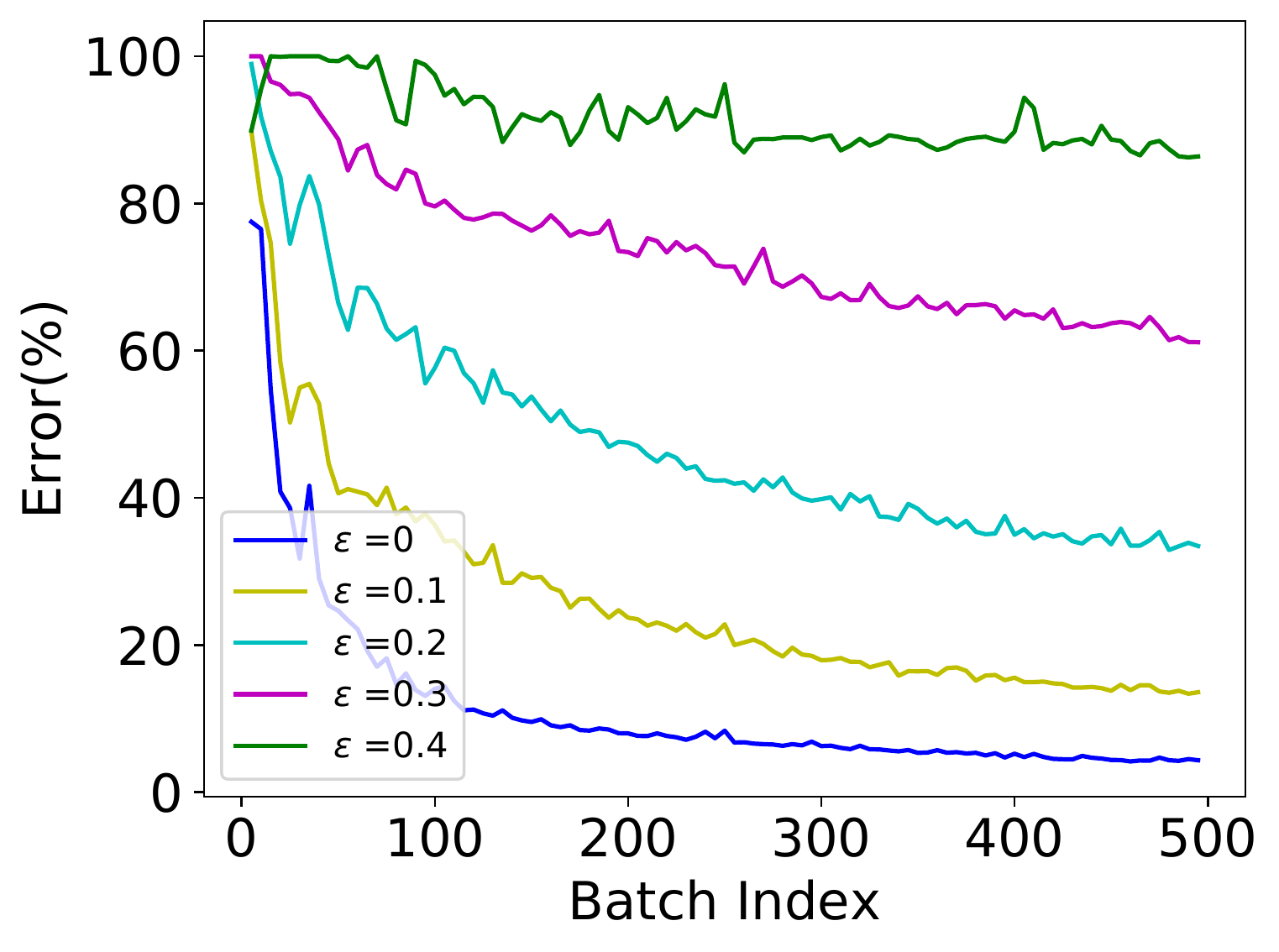}
\captionsetup{font=small}
\caption{$\Error_\epsilon(\theta)$, first 500.} \label{subfig:lenet_error_early}
\end{subfigure}
\begin{subfigure}{0.24\textwidth}
\includegraphics[scale = 0.22]{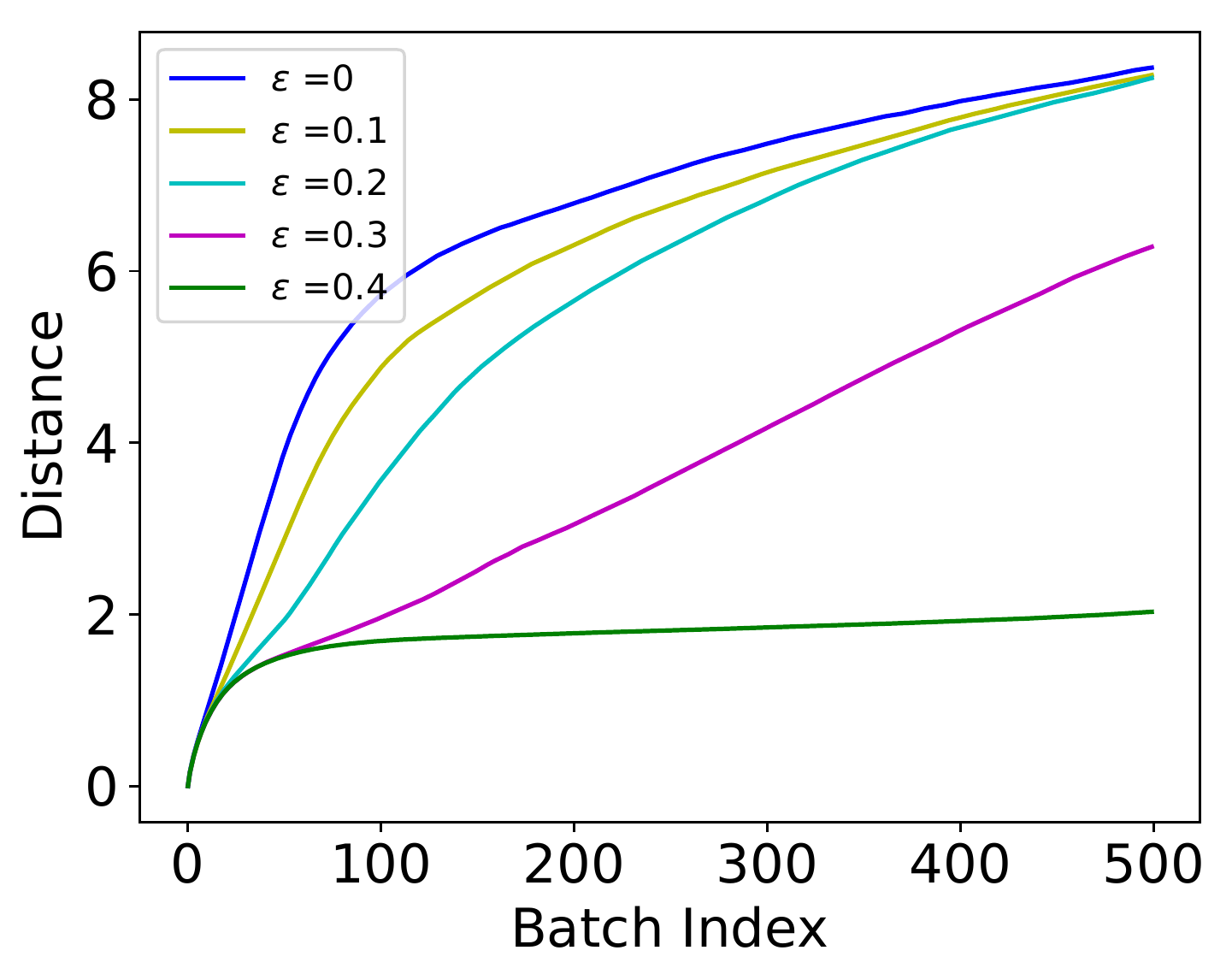}
\captionsetup{font=small}
\caption{$\|\theta - \theta_0\|$, first 500.} \label{subfig:lenet_distance_early}
\end{subfigure}
\begin{subfigure}{0.24\textwidth}
\includegraphics[scale = 0.22]{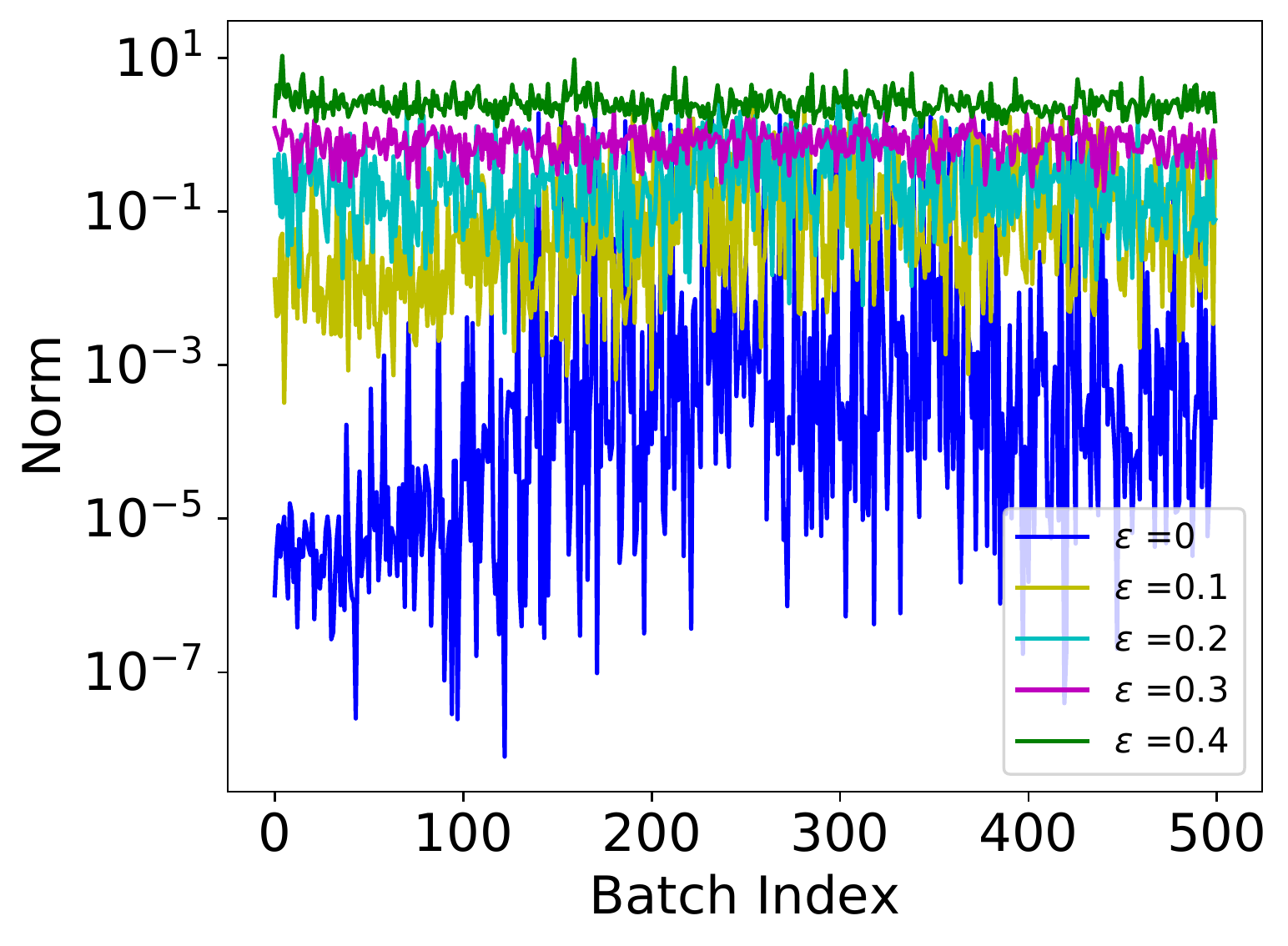}
\captionsetup{font=small}
\caption{$\|\triangledown_\theta \widehat{\Loss}_\epsilon(\theta)\|$, last 500.} \label{subfig:lenet_grad_final}
\end{subfigure}
\end{tabular}
\caption{Norm of the stochastic gradient $\|\triangledown_\theta \widehat{\Loss}_\epsilon(\theta)\|$, robust training error $\Error_\epsilon(\theta)$, distance from the initial point $\|\theta - \theta_0\|$ during the first or last 500 mini-batch updates for MNIST models.} \label{fig:analysis_lenet}
\end{figure}




\subsubsection{Additional Results for Section~\ref{sec:hessian_eps}} \label{subsubsec:add_hessian}

\begin{figure}[!ht]
\begin{minipage}{.48\textwidth}
\centering
\includegraphics[scale = 0.35]{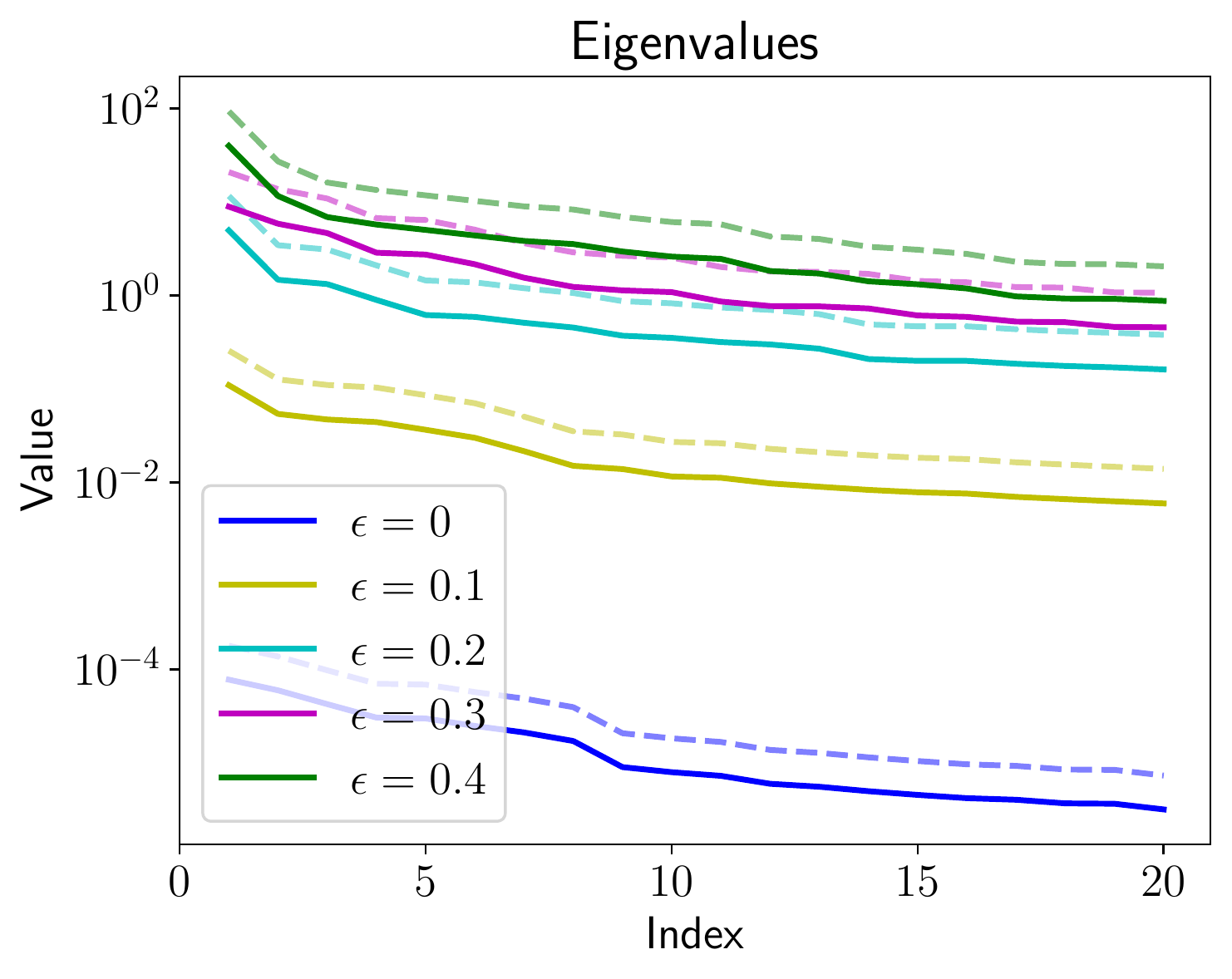}
\caption{Top 20 eigenvalues of the Hessian matrix for LeNet models. Both normalized (solid) and original (dashed) values are shown.} \label{fig:eigen_eps_mnist}
\end{minipage}
\begin{minipage}{.04\textwidth}
\end{minipage}
\begin{minipage}{.48\textwidth}
\centering
\includegraphics[scale = 0.35]{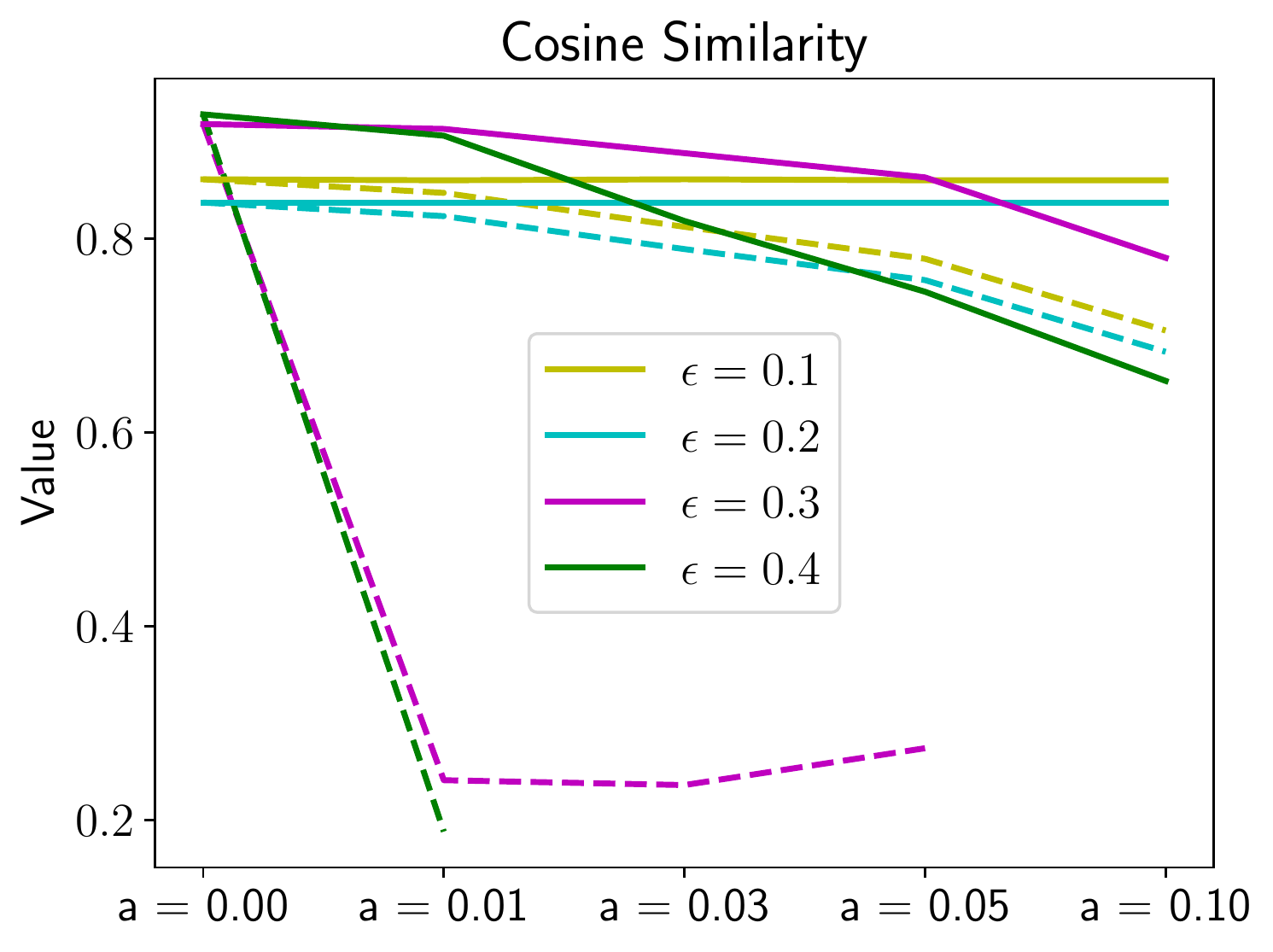}
\caption{Cosine similarity between perturbations $\x'_{a\vi} - \x$ and $\x'_{-a\vi} - \x$. $\vi$ can be either the top eigenvector (dashed) or randomly picked (solid).} \label{fig:sim_mnist}
\end{minipage}
\end{figure}

\begin{figure}
\centering
\includegraphics[scale = 0.4]{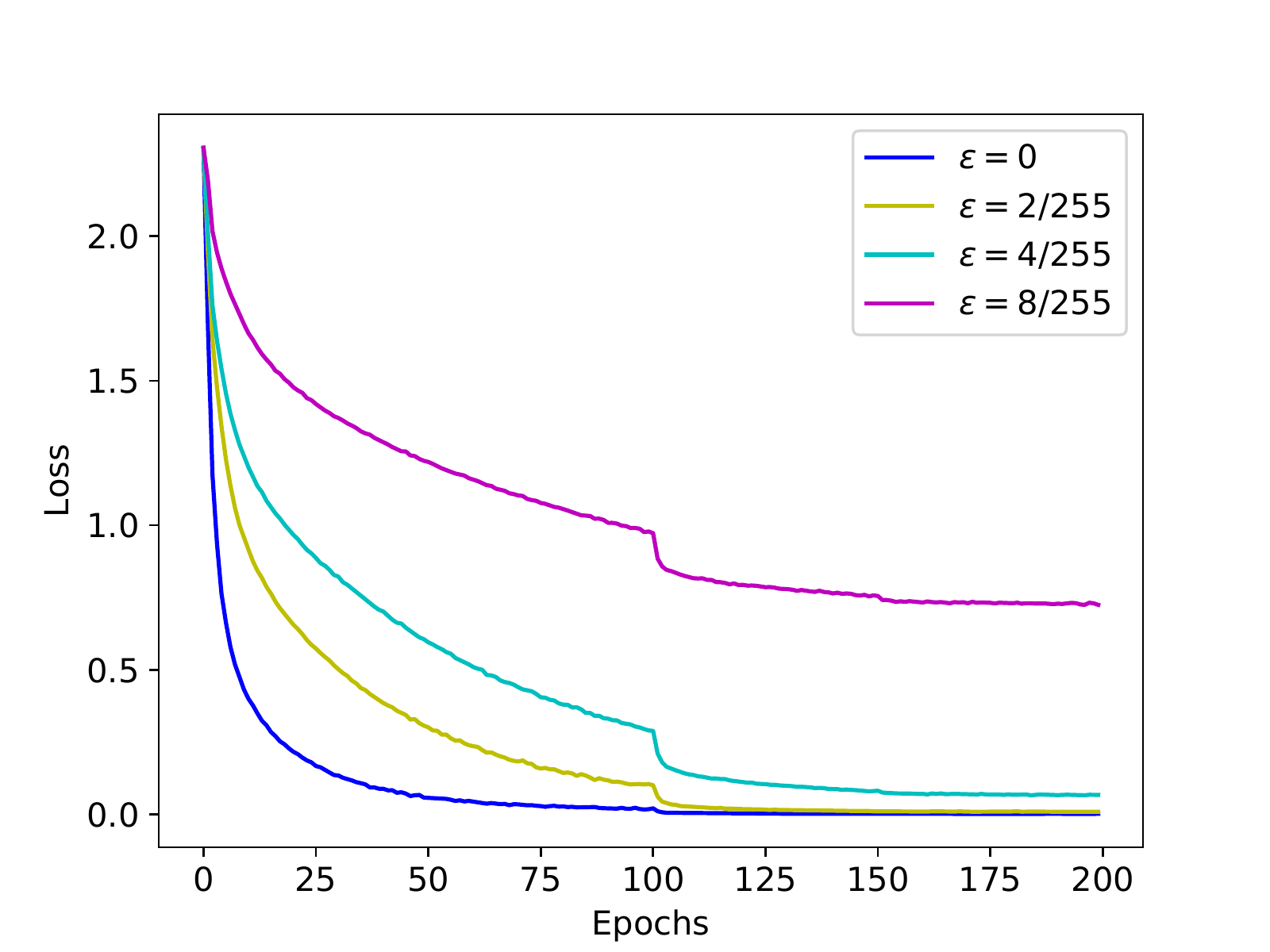}
~~
\includegraphics[scale = 0.4]{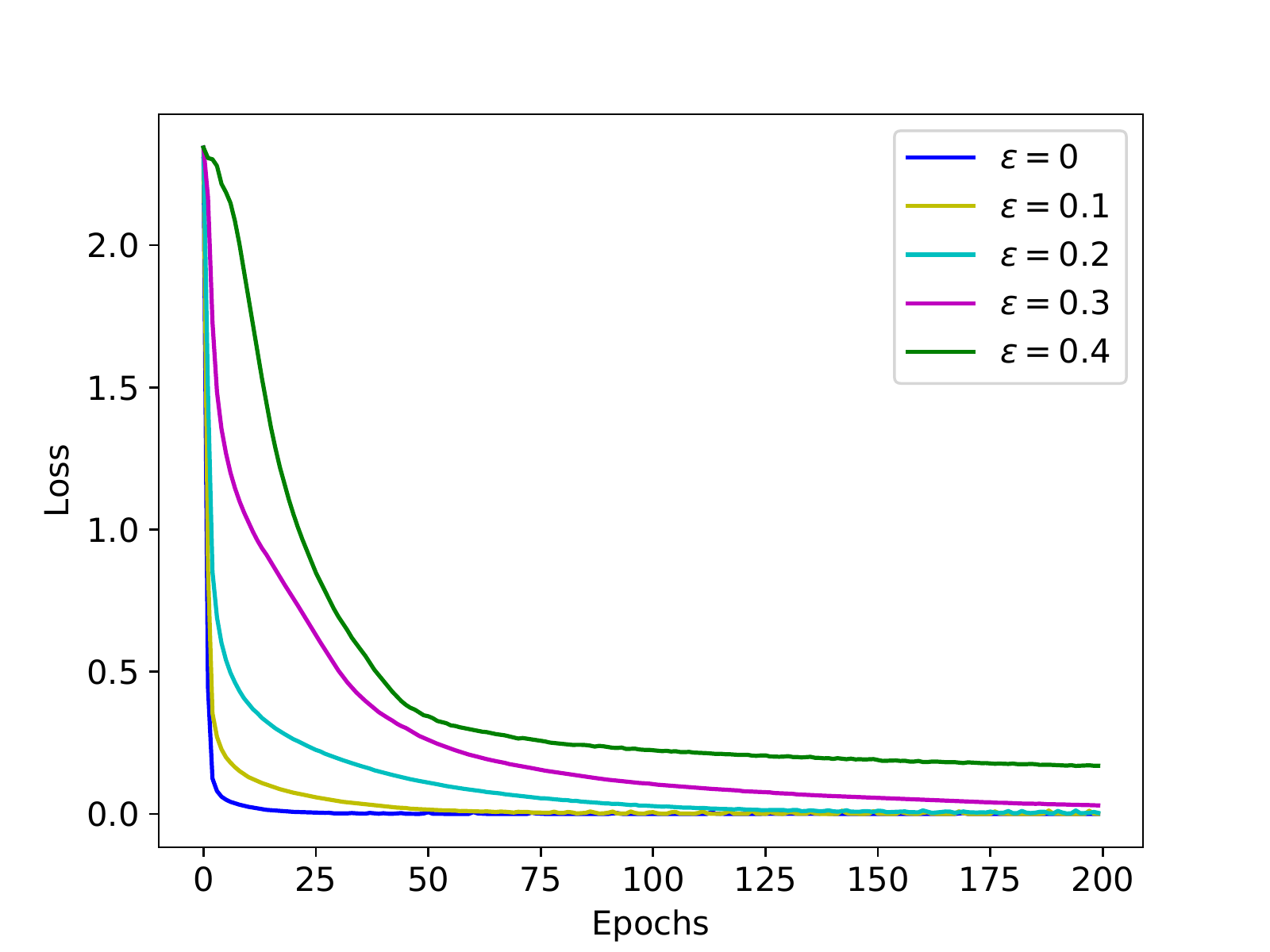}
\caption{The learning curves of training loss for CIFAR10 models (left) and MNIST models (right) under different values of $\epsilon$.} \label{fig:train_curve}
\end{figure}

In Figure~\ref{fig:eigen_eps_mnist}, we provide the Hessian spectrum analysis for LeNet models on MNIST under various adversarial budgets.
As in Figure~\ref{fig:eigen_eps_cifar10}, the top eigenvalues, both the original and normalized values, of the Hessian matrix of our trained models are larger in the presence of larger adversarial budgets.

Note that the magnitudes of $\Loss_\epsilon(\theta)$ with different $\epsilon$ are similar.
To show this, we randomly sample $10$ $\theta$ and calculate the value of $\Loss_\epsilon(\theta)$ over the training set.
For CIFAR10 models, the mean values are $2.3034$, $2.3044$, $2.3053$, $2.3071$ when $\epsilon$ is $0$, $2 / 255$, $4 / 255$ and $8 / 255$, respectively.
For MNIST models, the mean values are $2.3029$, $2.3414$, $2.3424$, $2.3429$, $2.3432$ when $\epsilon$ is $0$, $0.1$, $0.2$, $0.3$ and $0.4$, respectively.
In Figure~\ref{fig:train_curve}, we plot the learning curves of the training loss; this clearly shows that the magnitudes of $\Loss_\epsilon(\theta)$ during training with different $\epsilon$ values are similar.
Furthermore, the range of values of $\Loss_\epsilon(\theta)$ during training is smaller under large values of $\epsilon$.
As a result, the increased curvature under large adversarial budgets is not caused by the magnitudes of the function $\Loss_\epsilon(\theta)$, and we empirically observed that optimization in adversarial training cannot be facilitated by tuning the learning rate.

Ideally, the sharpness of the minima is depicted by the condition number of its Hessian matrix.
However, in the context of deep neural networks, the eigenvalue with the smallest absolute value of the Hessian matrix is almost zero, which renders the computation of the condition number both algorithmically and numerically unstable~\cite{ghorbani2019investigation}.
Instead, the spectral norm and the nuclear norm of the Hessian matrix are typically used as quantitative metrics for the sharpness of the minima~\cite{dinh2017sharp}.
Figure~\ref{fig:eigen_eps_cifar10} and Figure~\ref{fig:eigen_eps_mnist} thus demonstrate that the obtained minima are shaper when $\epsilon$ is larger.

\begin{figure}[!ht]
\centering
\begin{tabular}[t]{cccc}
    \begin{subfigure}{0.23\textwidth}
    \includegraphics[scale = 0.18]{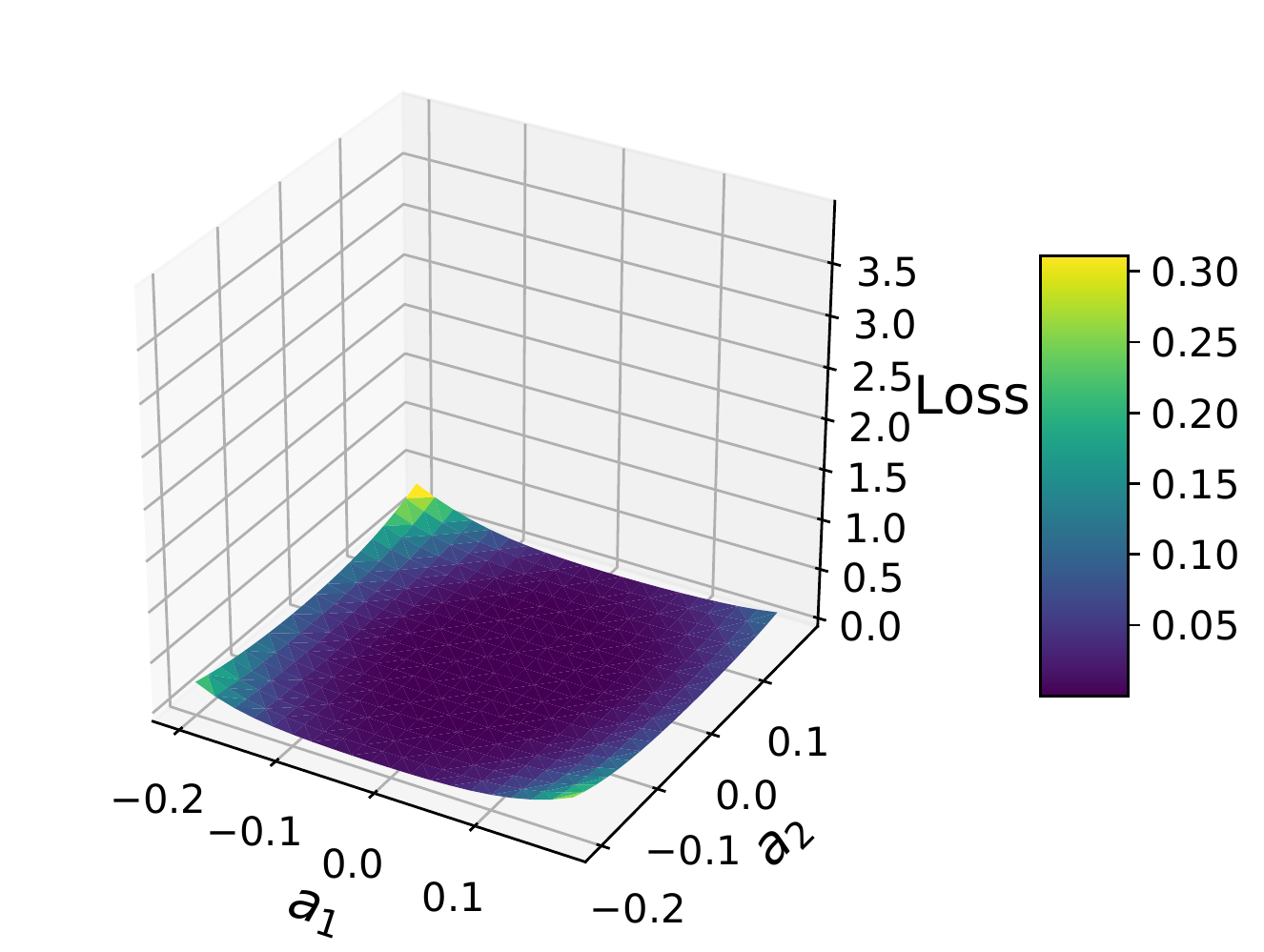}
    \captionsetup{font=small}
    \caption{MNIST, $\epsilon = 0.0$} \label{fig:landscape1_eps_mnist}
    \end{subfigure} &
    \begin{subfigure}{0.23\textwidth}
    \includegraphics[scale = 0.18]{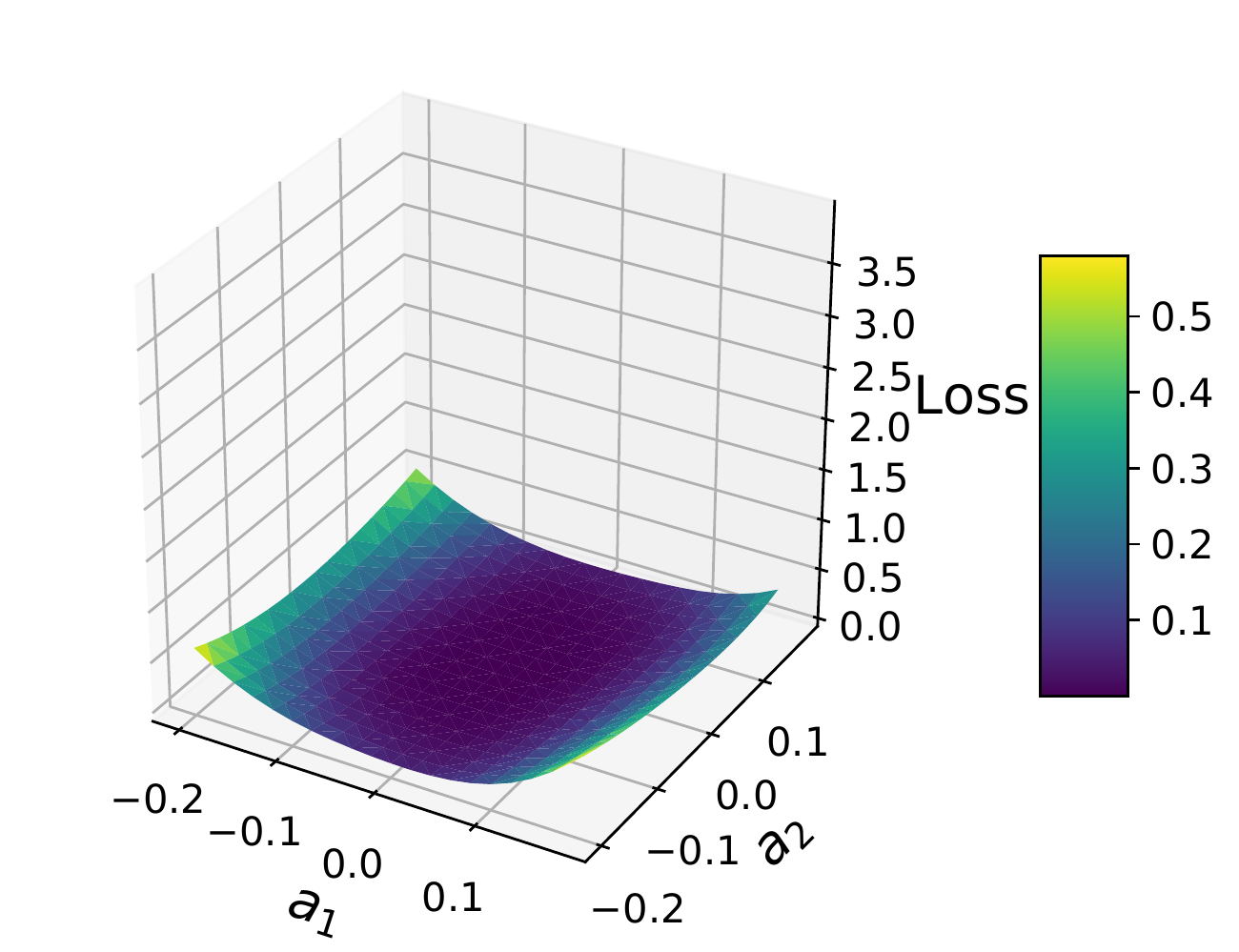}
    \captionsetup{font=small}
    \caption{MNIST, $\epsilon = 0.1$} \label{fig:landscape2_eps_mnist}
    \end{subfigure} &
    \begin{subfigure}{0.23\textwidth}
    \includegraphics[scale = 0.18]{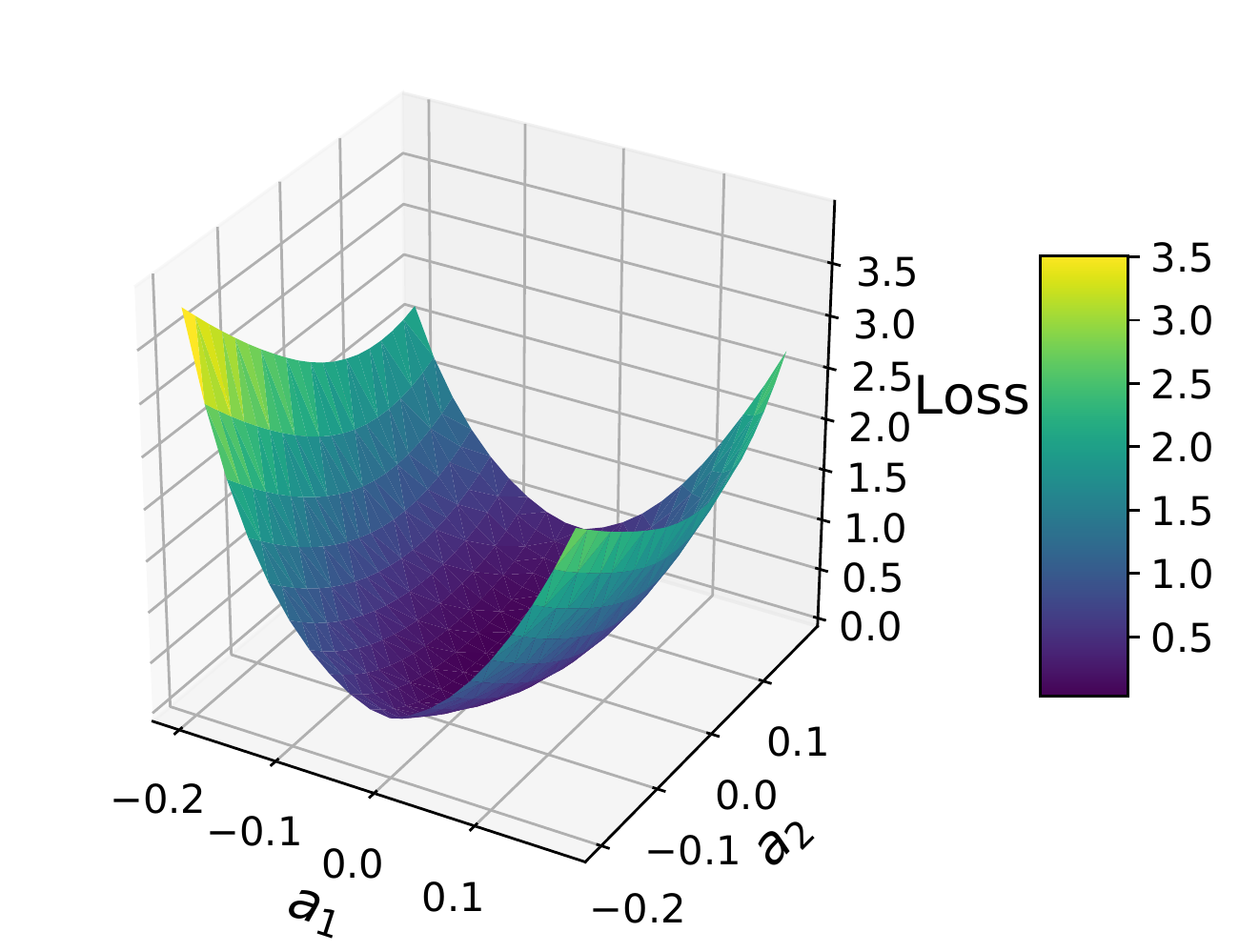}
    \captionsetup{font=small}
    \caption{MNIST, $\epsilon = 0.3$} \label{fig:landscape3_eps_mnist}
    \end{subfigure} &
    \begin{subfigure}{0.23\textwidth}
    \includegraphics[scale = 0.18]{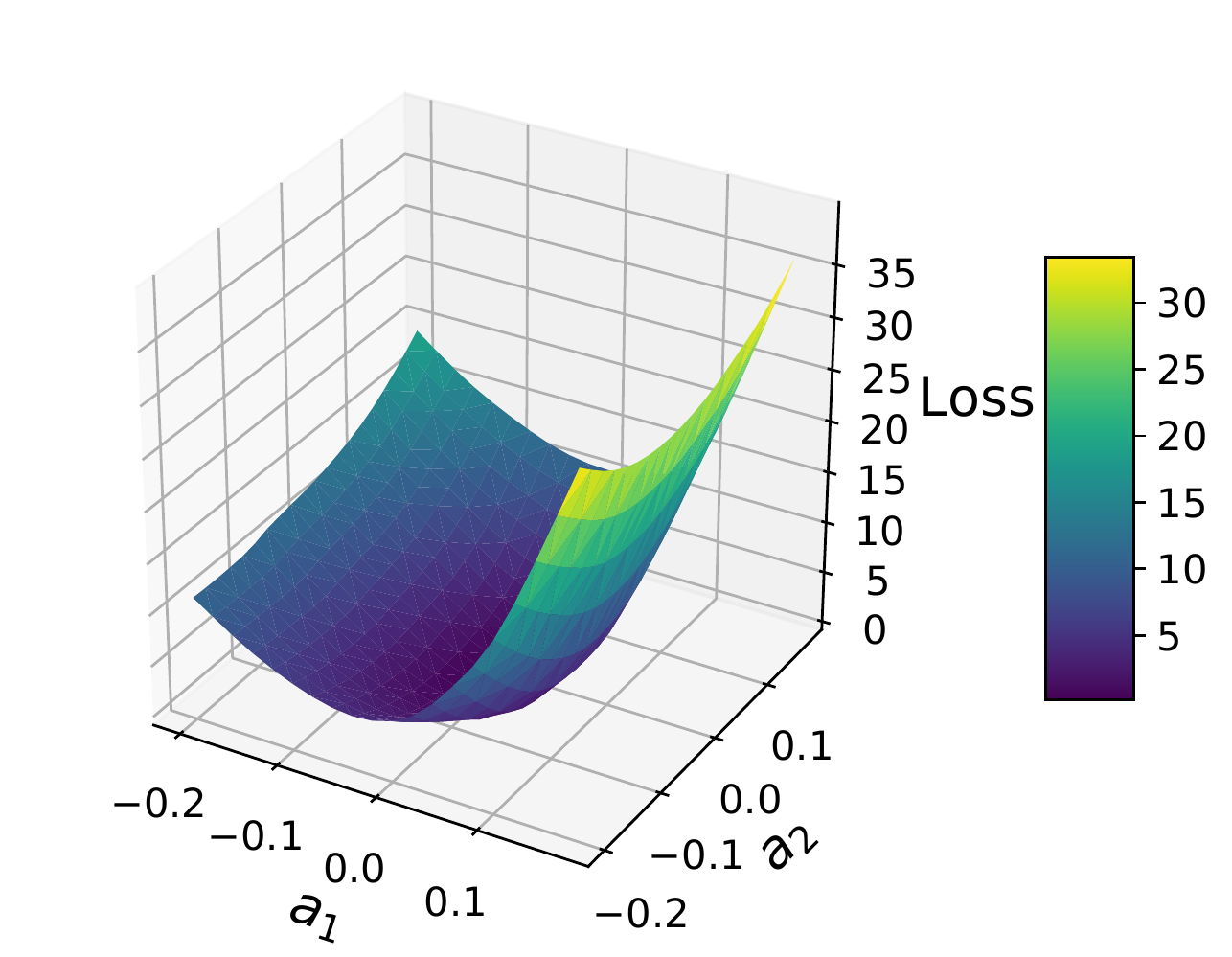}
    \captionsetup{font=small}
    \caption{MNIST, $\epsilon = 0.4$} \label{fig:landscape4_eps_mnist}
    \end{subfigure} \\
    \begin{subfigure}{0.23\textwidth}
    \includegraphics[scale = 0.18]{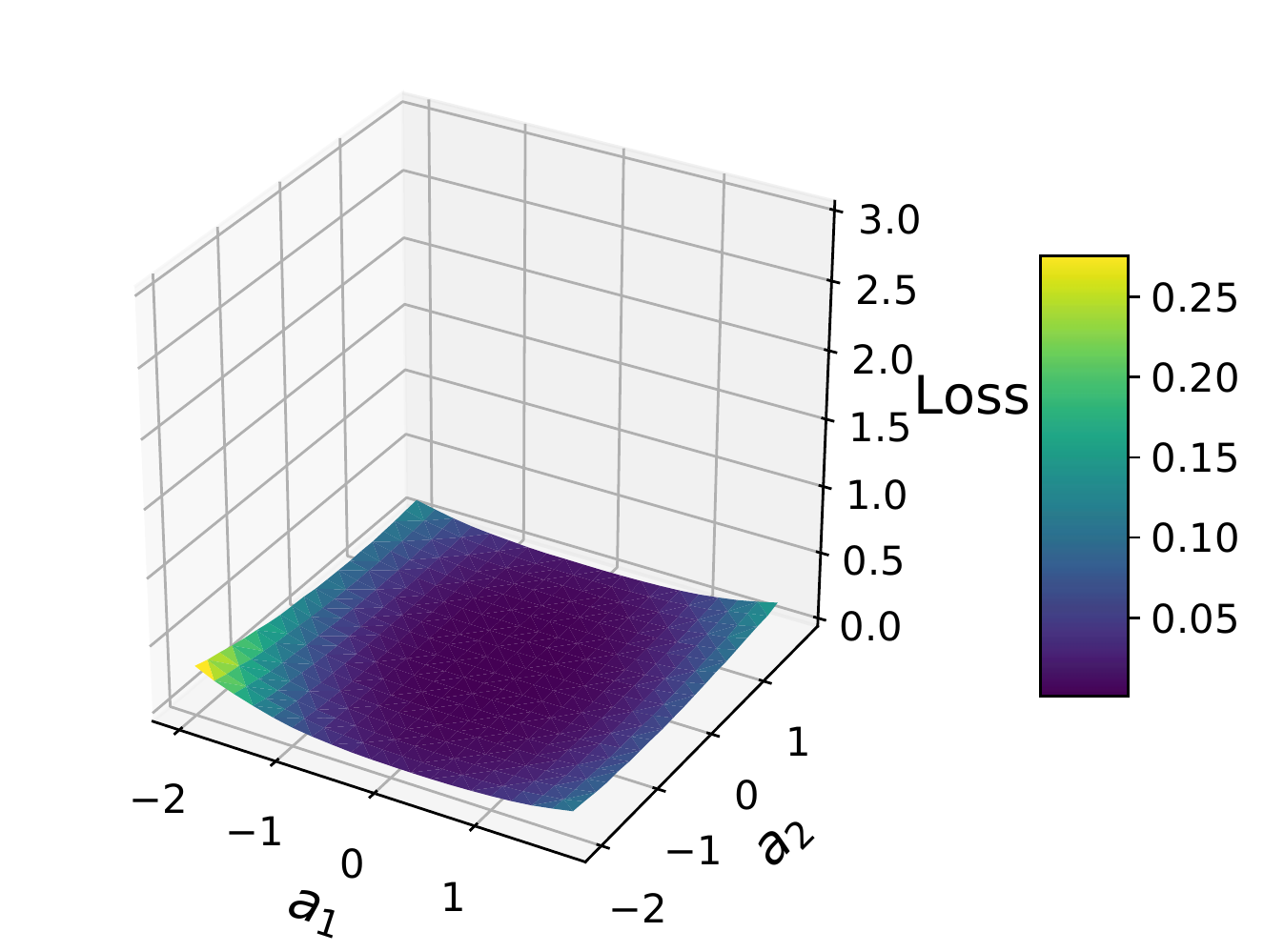}
    \captionsetup{font=small}
    \caption{CIFAR10, $\epsilon = 0$} \label{fig:landscape1_eps_cifar}
    \end{subfigure} &
    \begin{subfigure}{0.23\textwidth}
    \includegraphics[scale = 0.18]{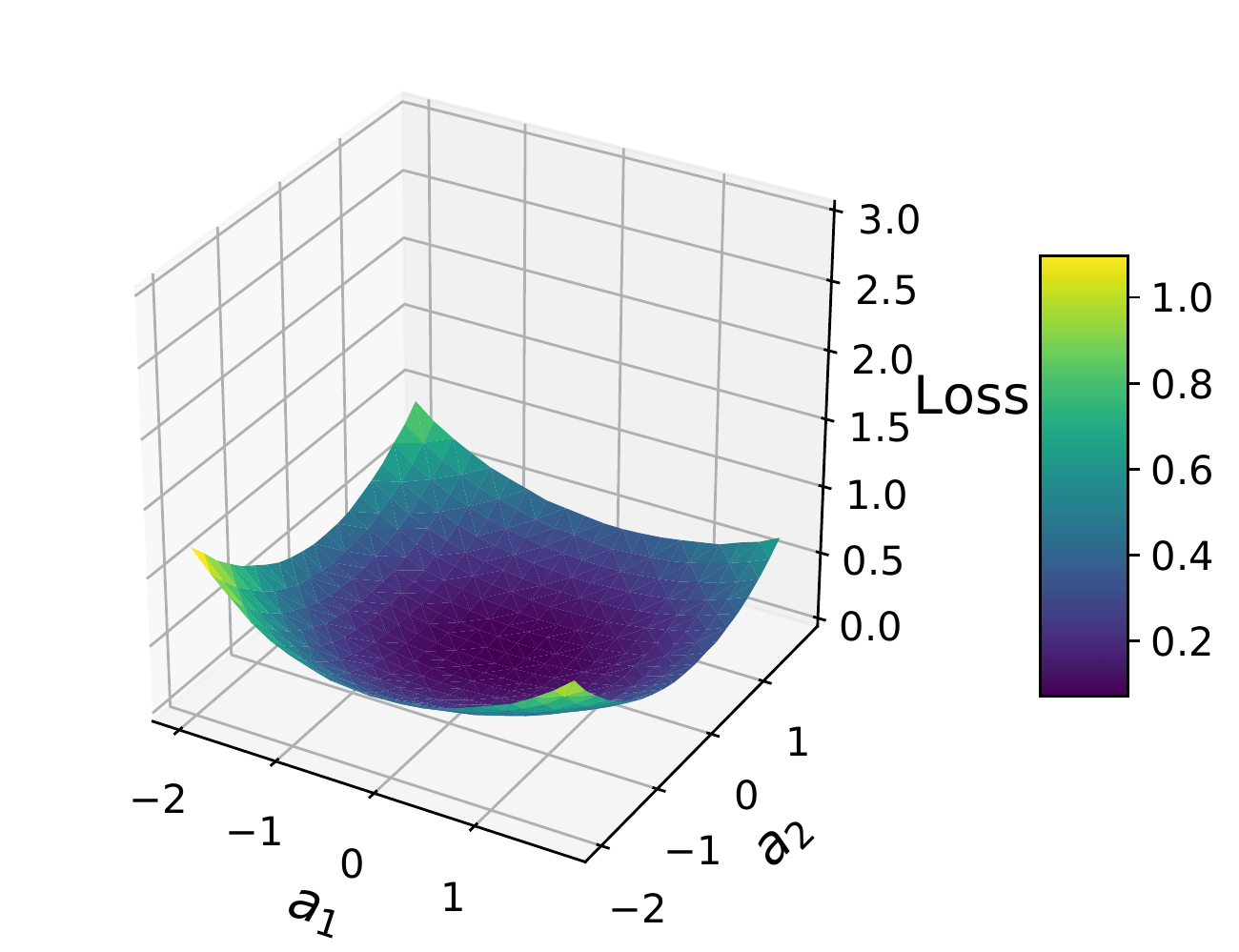}
    \captionsetup{font=small}
    \caption{CIFAR10, $\epsilon = 2 / 255$} \label{fig:landscape2_eps_cifar}
    \end{subfigure} &
    \begin{subfigure}{0.23\textwidth}
    \includegraphics[scale = 0.18]{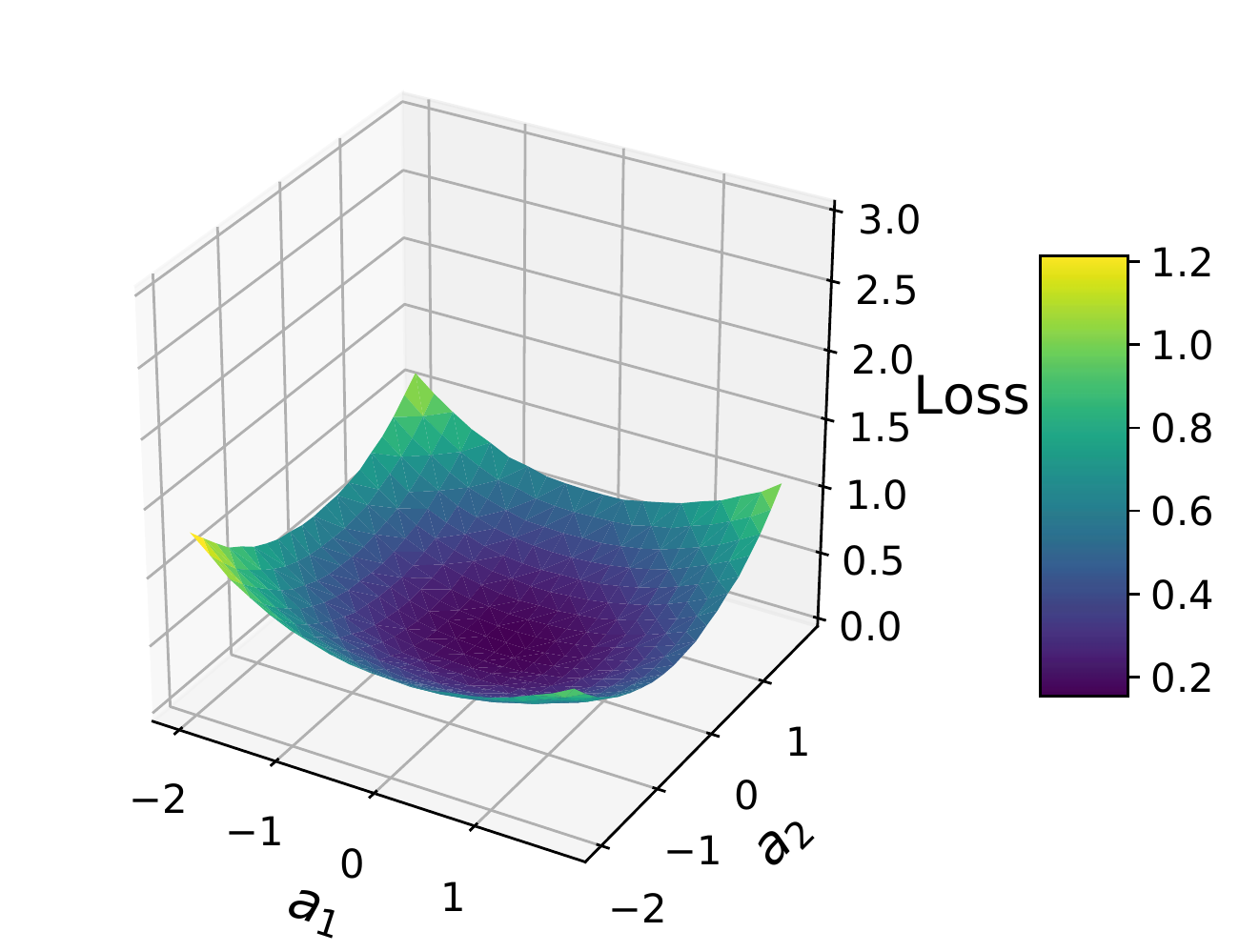}
    \captionsetup{font=small}
    \caption{CIFAR10, $\epsilon = 4 / 255$} \label{fig:landscape3_eps_cifar}
    \end{subfigure} &
    \begin{subfigure}{0.23\textwidth}
    \includegraphics[scale = 0.18]{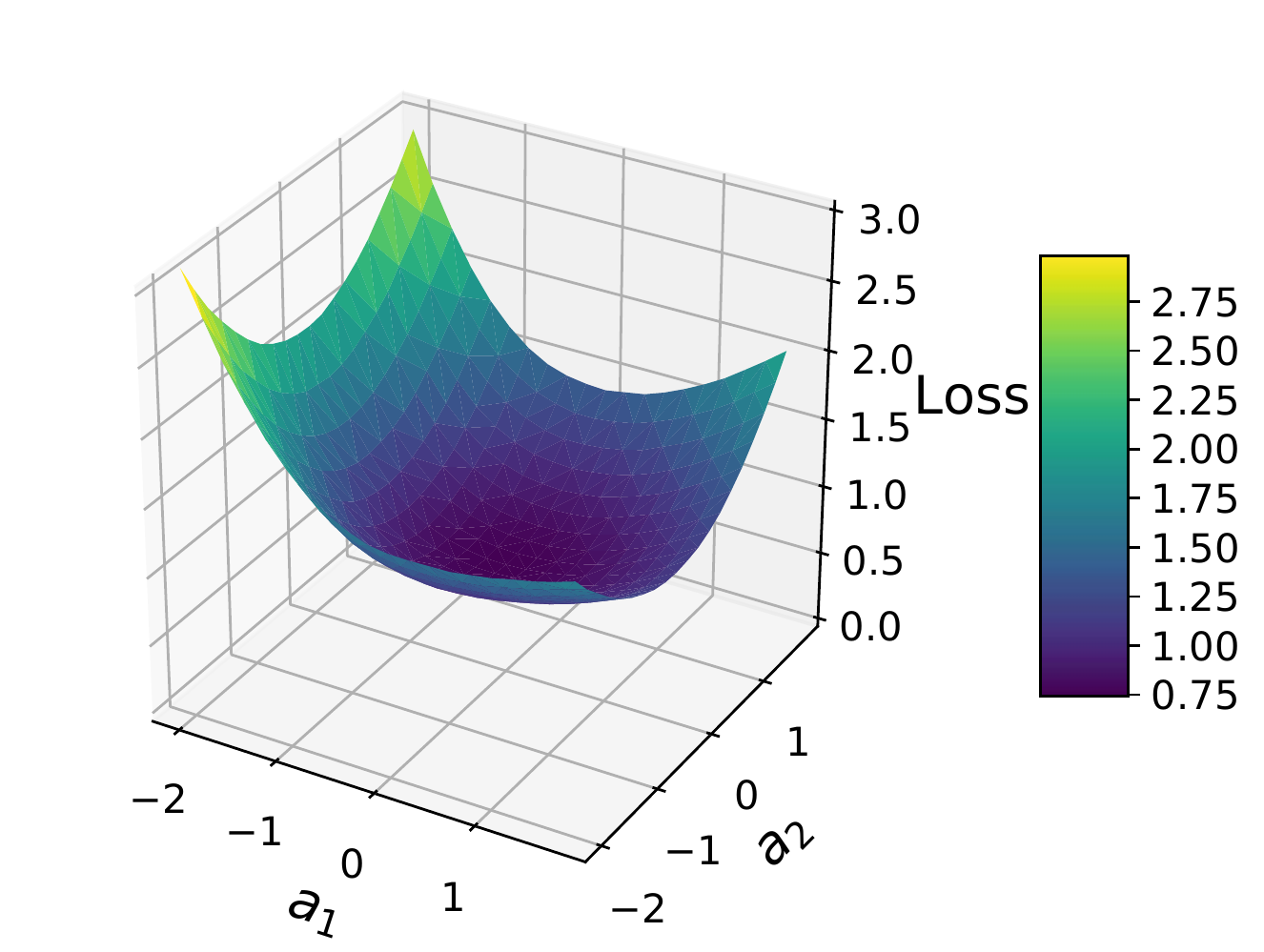}
    \captionsetup{font=small}
    \caption{CIFAR10, $\epsilon = 8 / 255$} \label{fig:landscape4_eps_cifar}
    \end{subfigure}
\end{tabular}
\caption{Loss landscape $\Loss_\epsilon(\theta + a_1 \vi_1 + a_2 \vi_2)$ under different adversarial budgets. $\theta$, $\vi_1$, $\vi_2$ are the parameter, and the first and second unit eigenvectors of the Hessian matrix. (Note that the z-scale for $\epsilon = 0.4$ in the MNIST case differs from the others.)} \label{fig:hessian_3d}
\end{figure}

In Figure~\ref{fig:sim_mnist}, we report the cosine similarity of input perturbations when we move the model parameter $\theta$ in two opposite directions.
As in Figure~\ref{fig:sim_cifar10}, we see high similarity of the perturbations and high robust accuracy when $\vi$ is a random direction.
By contrast, when $\vi$ is the first eigenvector of the Hessian matrix, we see a sharp decrease in both the perturbation similarity and robust accuracy as the value of $a$ increases.
In Figure~\ref{fig:sim_mnist}, we only plot the perturbation similarity when the robust accuracy on the training set is higher than $70\%$; otherwise the model parameters can no longer be considered to be in a small neighborhood of the original ones.

Figure~\ref{fig:hessian_3d} shows 3D visualizations of $\Loss_\epsilon(\theta)$ under different values of $\epsilon$ in the parameter neighborhood of our obtained MNIST and CIFAR10 models on the training set.
We study the curvature in the directions of the top $2$ eigenvectors.
The curvature clearly increases with $\epsilon$ and the corresponding minima become sharper.

\subsubsection{Additional Results for Section~\ref{sec:pas}} \label{subsubsec:app_pas}


Here, we compare the performance of warmup in the adversarial budget and warmup in the learning rate.
As in Figure~\ref{fig:lr_lenet16}, we use the LeNet model on MNIST and set the target adversarial budget size to $0.4$.
Our warmup period consists of the first $10$ epochs: the learning rate starts at $0$ and linearly increases to the final value in the warmup period; the learning rate remains constant after the warmup period.
In Figure~\ref{fig:lr_warmup_lenet16}, we show the robust accuracy on the test set when the final learning rate is set to $1 \times 10^{-4}$, $3 \times 10^{-4}$ and $1 \times 10^{-3}$.
For comparison, we show the best performance obtained when using warmup in the adversarial budget with a blue line.
We run each experiment $5$ times.

When the final learning rate is $1 \times 10^{-4}$, the learning rate wamup performance is not as good as warmup in the adversarial budget.
When the final learning rate is $3 \times 10^{-4}$ or $1 \times 10^{-3}$, the variance of the performance becomes large.
Learning rate warmup can sometimes yield good performance but sometimes fails to converge.

\begin{figure}[!ht]
    \begin{minipage}{.46\textwidth}
        \centering
        \includegraphics[scale = 0.4]{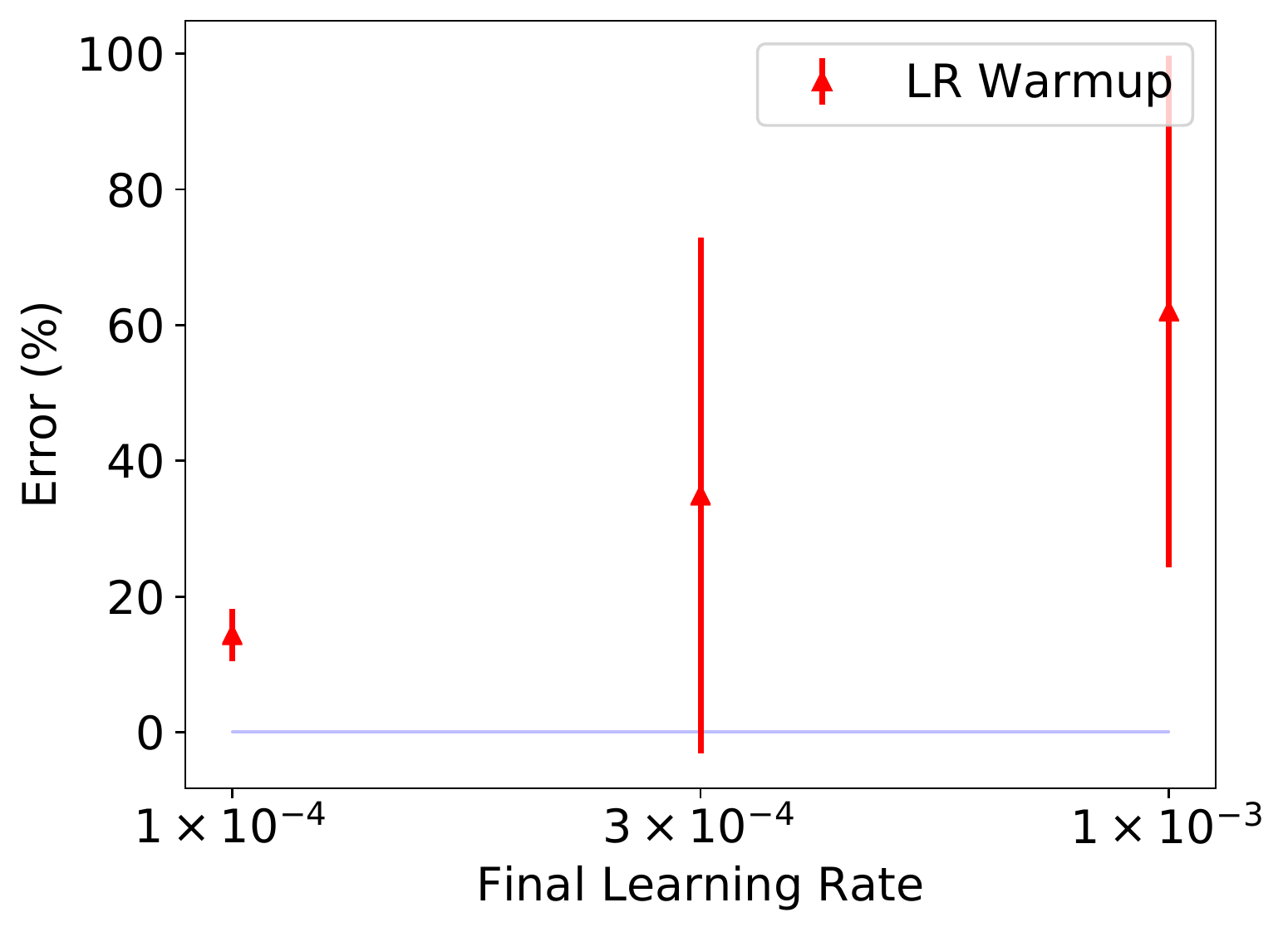}
        \caption{Mean and standard deviation of the test error on MNIST models when we use learning rate warmup but constant adversarial budget. The best performance by constant learning rate but adversarial budget warmup is depicted by a blue line.} \label{fig:lr_warmup_lenet16}
    \end{minipage}
    \begin{minipage}{.06\textwidth}
    ~~
    \end{minipage}
    \begin{minipage}{.48\textwidth}
        \centering
        \includegraphics[scale = 0.4]{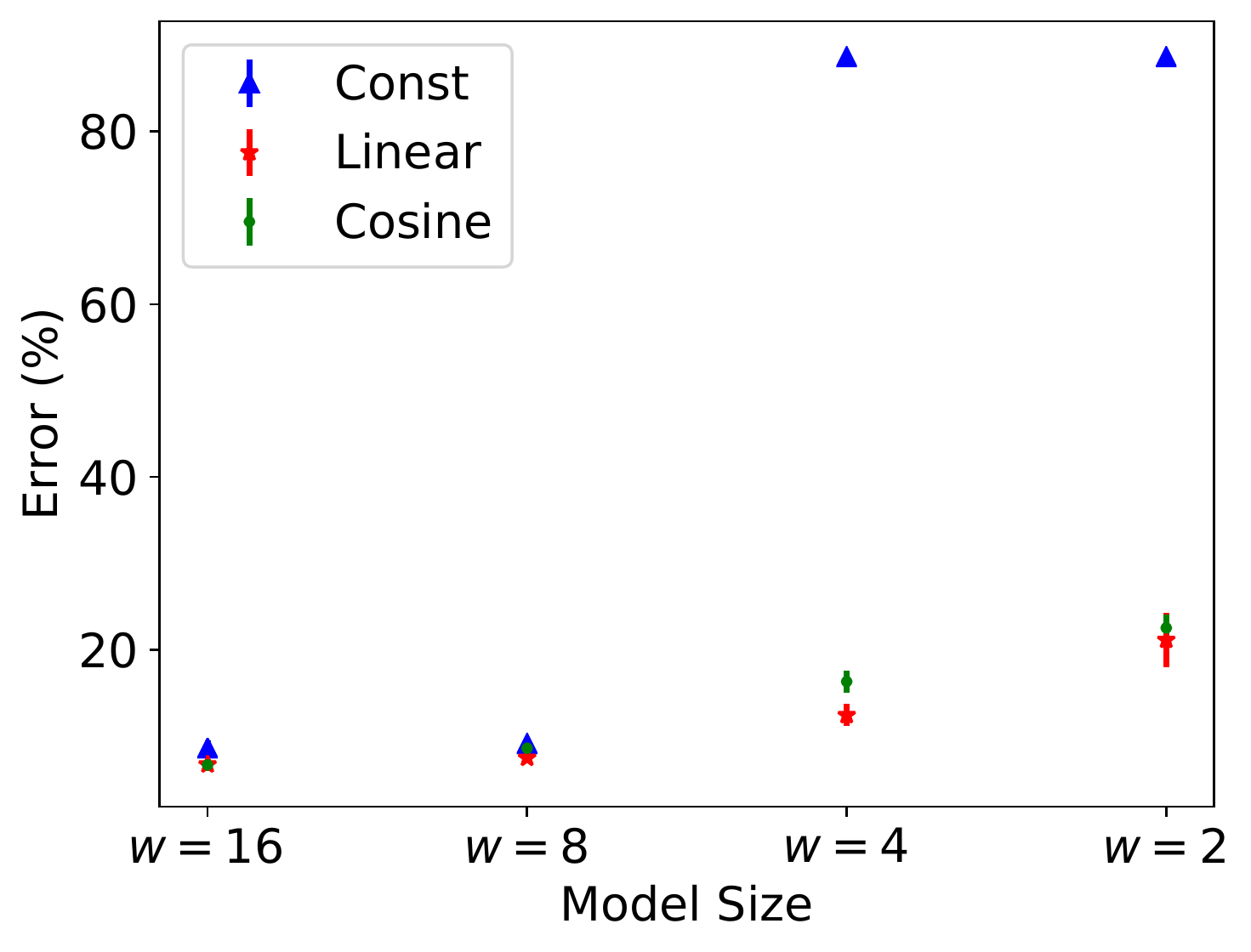}
        \caption{Mean and standard deviation of the test error with LeNet-16 models of different sizes on MNIST, using different adversarial budget scheduling schemes.} \label{fig:size_compare}
    \end{minipage}
\end{figure}

\subsubsection{Robustness v.s. Model Capacity} \label{subsubsec:capacity_app}

In Figure~\ref{fig:size_compare}, we report the performance of LeNet models of different width factors $w$ using different schedulers for $\epsilon$.
The adversarial budget size $\epsilon$ at test time is $0.4$.
We set the learning rate in Adam to be $10^{-4}$, because, for constant $\epsilon$ during training, it yields the best performance.
Both Cosine and Linear schedulers outperform using a constant $\epsilon$ in all cases.
When the model size is small, e.g., $w = 4$ and $w = 2$, using a constant $\epsilon$ during training fails to converge, but the Cosine and Linear schedulers still yield competitive results.

\subsubsection{Connectivity of Different Minima} \label{subsec:connect}

The minima reached in the loss landscape of vanilla training have been found to be well connected~\cite{draxler2018essentially, garipov2018loss}.
That is, if we train two neural networks under the same settings but different initializations, there exists a path connecting the resulting two models in the parameter space such that all points along this path have low loss.
In this section, we study the connectivity of different trained models in adversarial training.
Similarly to~\cite{garipov2018loss}, we parameterize the path joining two minima using a \textit{general Bezier curve}.
Let $\theta_0$ and $\theta_n$ be the parameters of two separately-trained models, and $\{\widehat{\theta}_i\}_{i = 1}^{n - 1}$ the parameters of $(n-1)$ trainable intermediate models. Then, an $n$-order Bezier curve is defined as a linear combination of these $(n + 1)$ points in parameter space, i.e., 
\begin{equation}
\BB(t) = (1 - t)^n \theta_0 + t^n \theta_n + \sum_{i = 1}^{n - 1} \left(\begin{matrix}n \\ t\end{matrix} \right) (1 - t)^{n - i} t^i \widehat{\theta}_i\;. \label{eq:curve}
\end{equation}

$\BB(t)$ is a smooth curve, and $\BB(0) = \theta_0$ and $\BB(1) = \theta_n$.
We train $\{\widehat{\theta}_i\}_{i = 1}^{n - 1}$ by minimizing the average loss along the path: $\E_{t \sim U[0,1]}\Loss_\epsilon(\BB(t))$, where $U[0, 1]$ is the uniform distribution between $0$ and $1$.
We use the Monte Carlo method to estimate the gradient of this expectation-based function and minimize it using gradient-based optimization.
We use second-order Bezier curves to connect MNIST model pairs and fourth-order Bezier curves to connect CIFAR10 model pairs.
When evaluating the models on the learned curves, we re-estimate the running mean and variance in the batch normalization layer based on the training set.
The results are reported based on the evaluation mode of the models, and we turn off data augmentation to avoid stochasticity.

\begin{figure}
\centering
\begin{tabular}{cc}
    \begin{subfigure}{0.4\textwidth}
    \includegraphics[scale = 0.36]{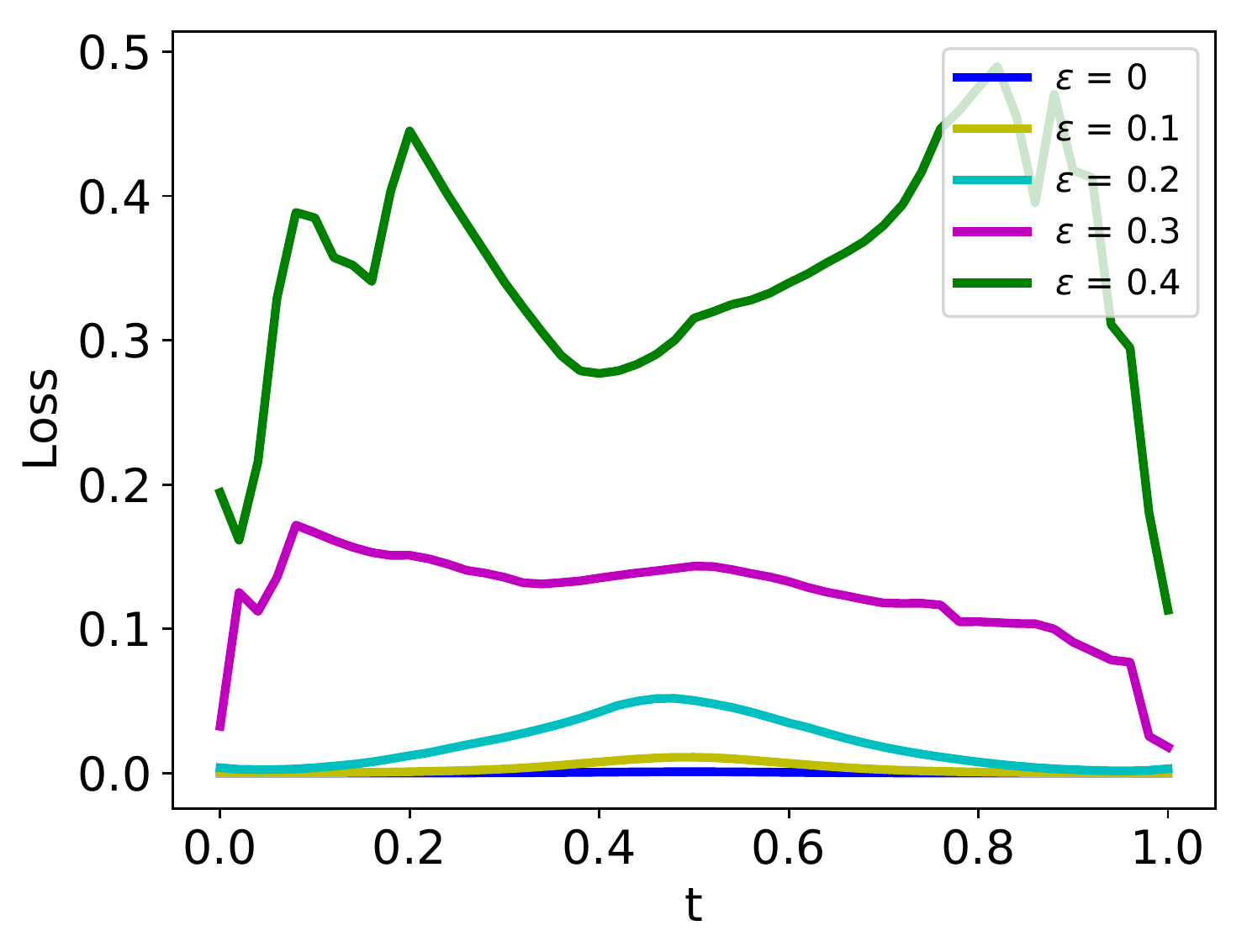}
    \caption{MNIST, training loss.}
    \end{subfigure}
    \begin{subfigure}{0.4\textwidth}
    \includegraphics[scale = 0.36]{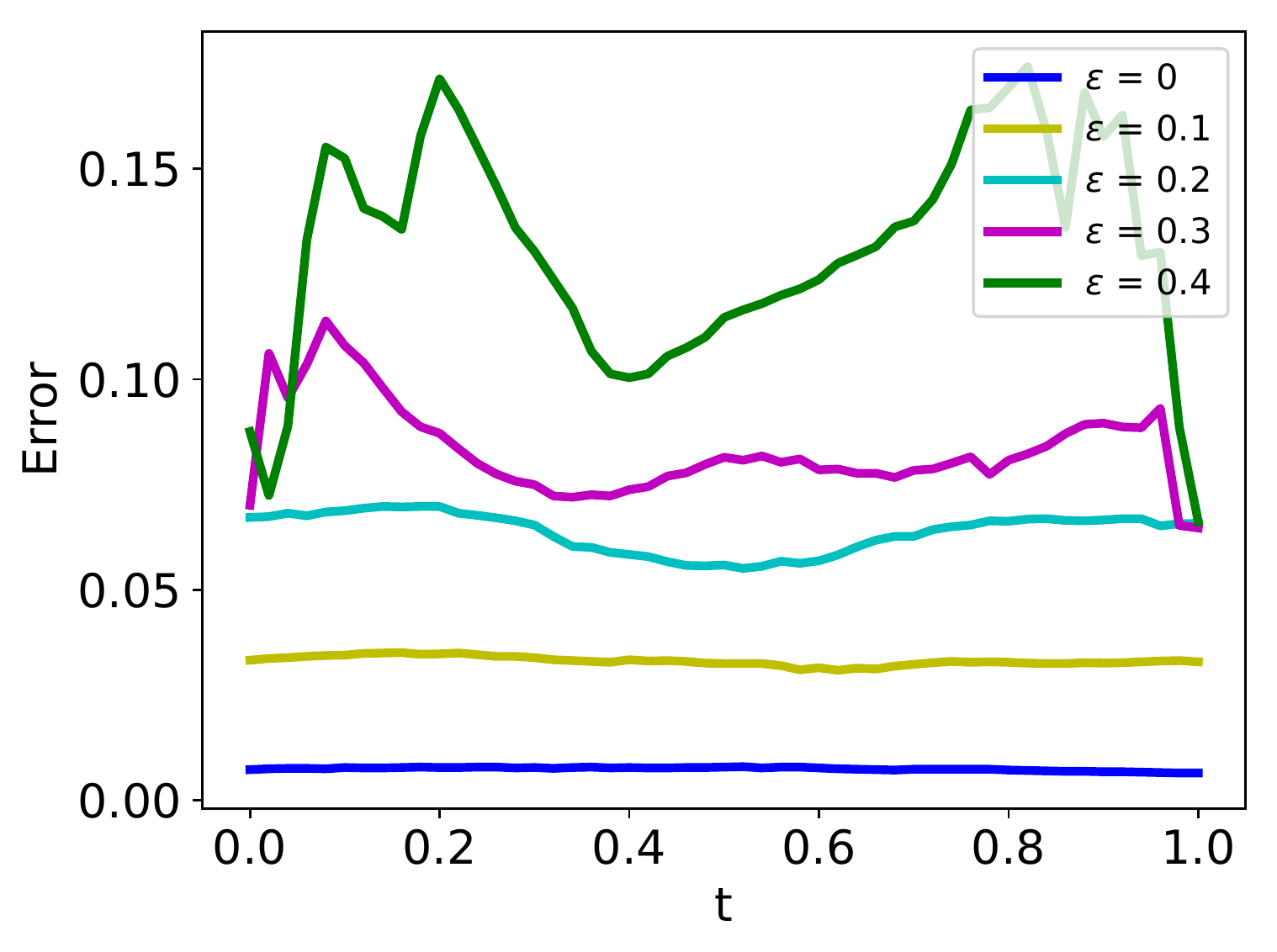}
    \caption{MNIST, test error.}
    \end{subfigure} \\
    \begin{subfigure}{0.4\textwidth}
    \includegraphics[scale = 0.36]{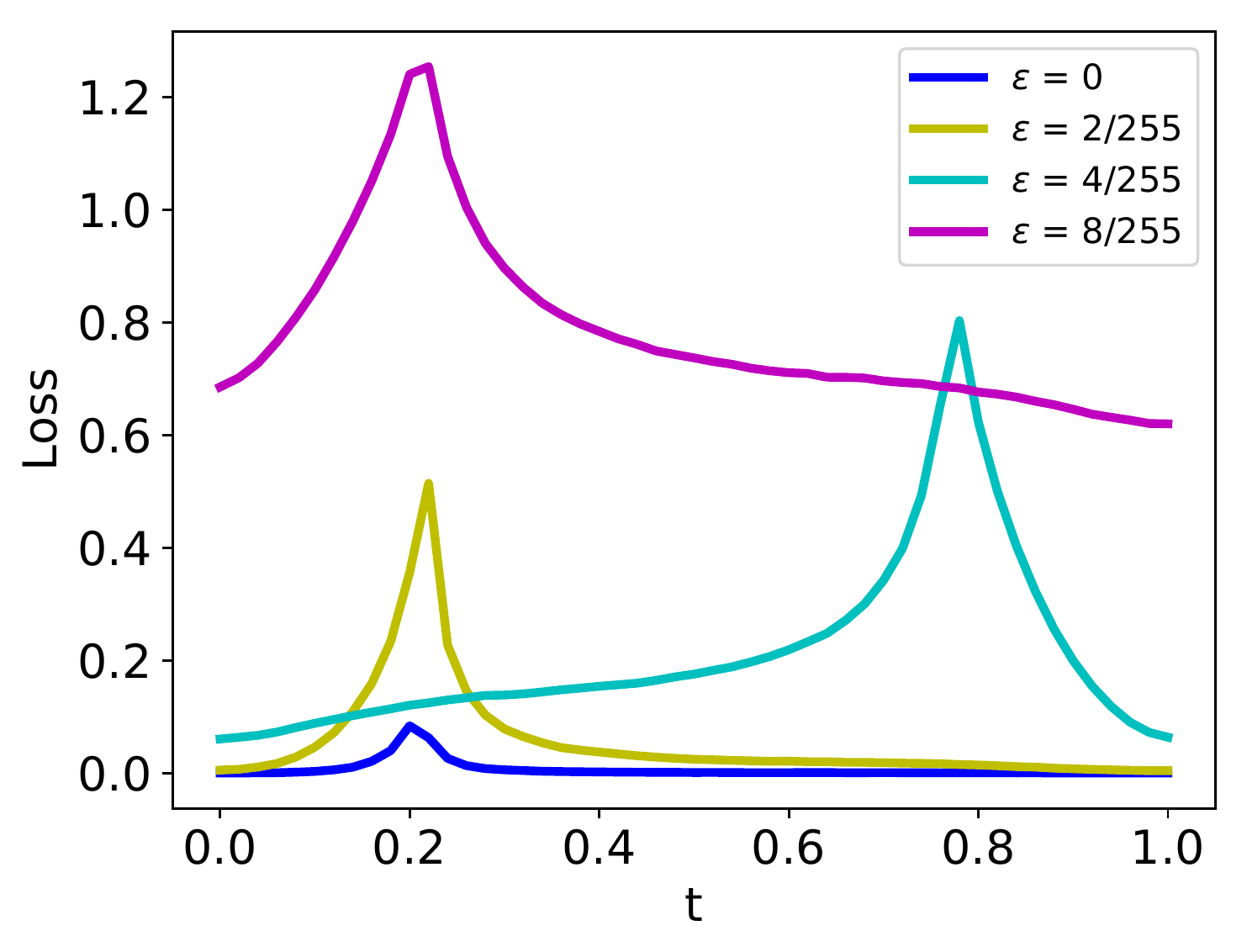}
    \caption{CIFAR10, training loss.}
    \end{subfigure}
    \begin{subfigure}{0.4\textwidth}
    \includegraphics[scale = 0.36]{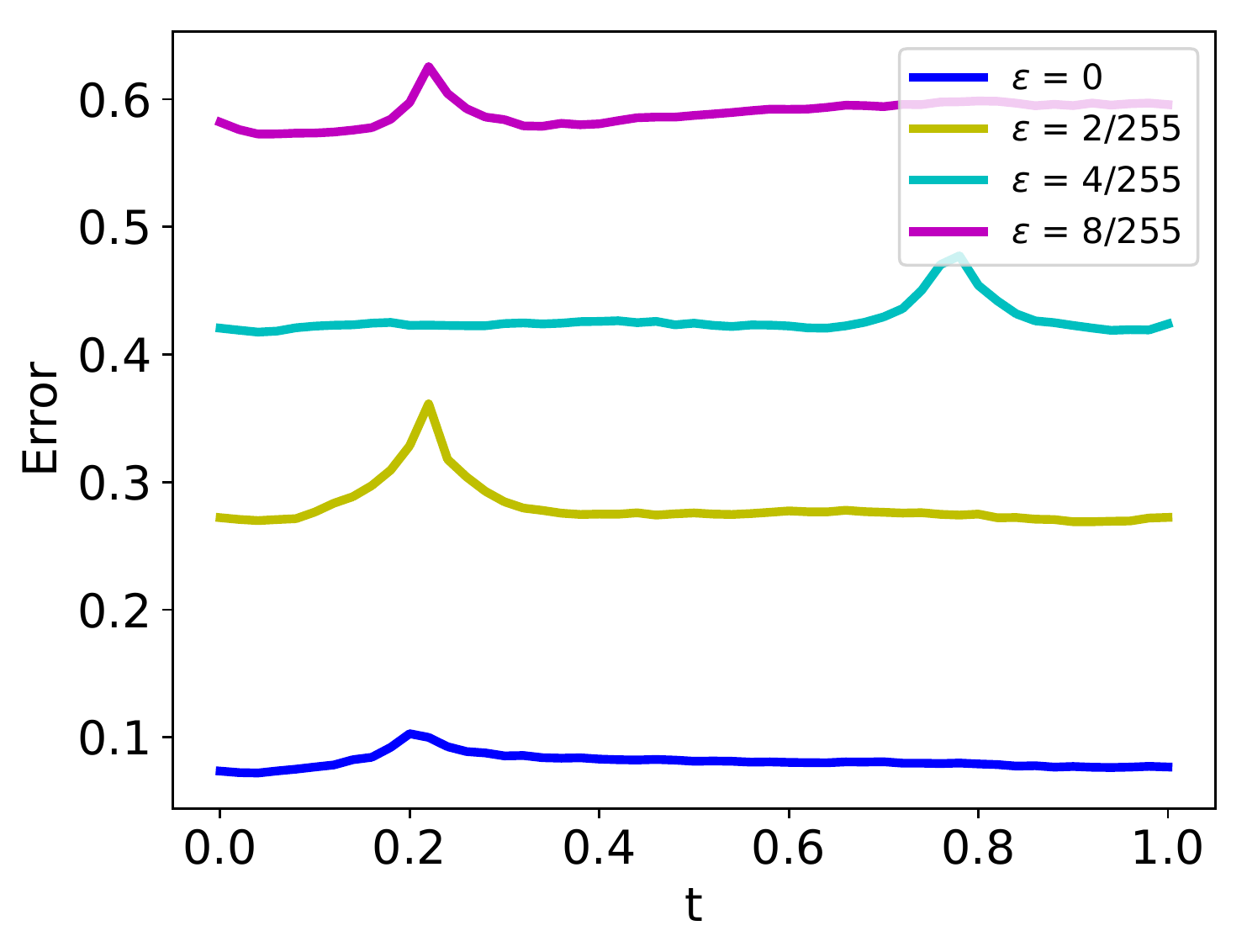}
    \caption{CIFAR10, test error.}
    \end{subfigure}
\end{tabular}
\caption{Training loss and test error along the path connecting the minima of two independently-trained models.} \label{fig:connect_curve}
\end{figure}

In Figure~\ref{fig:connect_curve}, following~\cite{garipov2018loss}, we plot the training loss and test error along the learned curve, as a function of $t$ in Equation (\ref{eq:curve}).
For vanilla training or when the adversarial budget is small, we can easily find flat curves connecting different minima.
However, the learned curves are not flat anymore when the adversarial budget increases.
This indicates that the minima are less well-connected under adversarial training, and that it is more difficult for the optimizer to find a minimum.

\end{document}